\PassOptionsToPackage{table}{xcolor}

\documentclass{article}

\usepackage[sort,numbers]{natbib}
\usepackage[linesnumbered,ruled,vlined]{algorithm2e}

\newif\ifjournal
\journalfalse      

\ifjournal
    \usepackage[main]{neurips_2026}
\else
    \usepackage[margin=1in]{geometry}
    \usepackage{authblk}
    \usepackage{lmodern}
\fi

\usepackage{customize}

\usepackage[most]{tcolorbox}

\def \figdir {.}

\newcommand{\mbf}[1]{\mathbf{#1}}              
\newcommand{\mbif}[1]{\boldsymbol{#1}}         
\newcommand{\tens}[1]{\mbf{#1}}                
\newcommand{\gtens}[1]{\mbif{#1}}              
\newcommand{\vect}[1]{\mbif{#1}}               

\DeclareMathOperator*{\supp}{\operatorname{supp}}

\DeclareMathOperator*{\Res}{Res}
\DeclareMathOperator*{\argmin}{argmin}

\theoremstyle{definition}
\newtheorem*{example*}{Example}

\newcommand{\freealg}{freealg}
\newcommand{\bemph}[1]{\textbf{\emph{#1}}}

\def \ie {i.e.,\ }

\def \eg {e.g.,\ }

\newcommand{\fulltitle}{Free Decompression with Algebraic Spectral Curves}

\title{\fulltitle}

\newcommand{\authrone}{Siavash Ameli}
\newcommand{\affilone}{ICSI and Department of Statistics}
\newcommand{\addrsone}{University of California, Berkeley}
\newcommand{\emailone}{sameli@berkeley.edu}

\newcommand{\authrtwo}{Chris van der Heide}
\newcommand{\affiltwo}{Dept. of Electrical and Electronic Engineering}
\newcommand{\addrstwo}{University of Melbourne}
\newcommand{\emailtwo}{chris.vdh@gmail.com}

\newcommand{\authrthree}{Liam Hodgkinson}
\newcommand{\affilthree}{School of Mathematics and Statistics}
\newcommand{\addrsthree}{University of Melbourne}
\newcommand{\emailthree}{lhodgkinson@unimelb.edu.au}

\newcommand{\authrfour}{Michael W. Mahoney}
\newcommand{\affilfour}{ICSI, LBNL, and Department of Statistics}
\newcommand{\addrsfour}{University of California, Berkeley}
\newcommand{\emailfour}{mmahoney@stat.berkeley.edu}

\ifjournal

    \newcommand{\AuthorBlock}[4]{%
        {\normalfont                          
        \begin{minipage}[t]{0.49\textwidth}   
          \raggedright
          \textbf{#1}\\[2pt]                  
          #2\\                                
          #3\\                                
          \texttt{#4}                         
        \end{minipage}}%
    }

    \author{%
      \AuthorBlock{\authrone}{\affilone}{\addrsone}{\emailone}\hfill
      \AuthorBlock{\authrtwo}{\affiltwo}{\addrstwo}{\emailtwo}%
      \\[17mm]
      \AuthorBlock{\authrthree}{\affilthree}{\addrsthree}{\emailthree}\hfill
      \AuthorBlock{\authrfour}{\affilfour}{\addrsfour}{\emailfour}%
    }
    
\else
    \author[1]{\authrone}
    \author[2]{\authrtwo}
    \author[3]{\authrthree}
    \author[4]{\authrfour}
    \affil[1]{\small{\textit{\affilone, \addrsone} \protect\\ \href{mailto:\emailone}{\protect\nolinkurl{\emailone}}} \vspace{2mm}}
    \affil[2]{\small{\textit{\affiltwo, \addrstwo} \protect\\ \href{mailto:\emailtwo}{\protect\nolinkurl{\emailtwo}}} \vspace{2mm}}
    \affil[3]{\small{\textit{\affilthree, \addrsthree} \protect\\ \href{mailto:\emailthree}{\protect\nolinkurl{\emailthree}}} \vspace{2mm}}
    \affil[4]{\small{\textit{\affilfour, \addrsfour} \protect\\ \href{mailto:\emailfour}{\protect\nolinkurl{\emailfour}}} \vspace{2mm}}
    
    \date{}
\fi

\ifjournal
    \def \FreeAlgPypiUrl {[Anonymous URL]}
    \def \FreeAlgDocUrl {[Anonymous URL]}
    \def \FreeAlgGithubUrl {\url{https://anonymous.4open.science/r/freealg-9C42/}}
\else
    \def \FreeAlgPypiUrl {\url{https://pypi.org/project/freealg}}
    \def \FreeAlgDocUrl {\url{https://ameli.github.io/freealg}}
    \def \FreeAlgGithubUrl {\url{https://github.com/ameli/freealg}}
\fi

\begin{document}

\addtocontents{toc}{\protect\setcounter{tocdepth}{-1}}  

\maketitle


\begin{abstract}
    Tools from random matrix theory have become central to deep learning theory, using spectral information to provide mechanisms for modeling generalization, robustness, scaling, and failure modes. While often capable of modeling empirical behavior, practical computations are limited by matrix size, often imposing a restriction to models that are too small to be realistic. This motivates the inference of properties of larger models from the behavior of smaller ones. Free decompression (FD) is a recently proposed method for extrapolating spectral information across matrix sizes, but its utility is currently limited by strong assumptions that preclude its implementation on more realistic machine learning (ML) models. We use algebraic spectral curve theory to provide a general FD methodology for spectral densities whose Stieltjes transform satisfies an algebraic relation, a modeling assumption that is more likely to hold in practice. This recasts FD as an evolution along spectral curves which can be readily integrated. Our framework enables the expansion of spectral densities that have multiple or multi-modal bulks, that exist at multiple scales, and that contain atoms, all characteristic of real-world data and popular ML models. We demonstrate the efficacy of our framework on models of interest in modern ML, including Hessian and activation matrices associated with neural networks and large-scale diffusion models.
\end{abstract}



\section{Introduction} 
\label{sec:intro}

At the core of scientific computing and much of modern machine learning (ML) lies the challenge of estimating the eigenvalues of high-dimensional Hermitian matrices. 
Such matrices, including kernels, Hessians, and graph representations, encode the intrinsic geometry and connectivity of the data and models built on them, rendering the pursuit of efficient spectral techniques a primary concern for both theory and practice. Studying eigenspectra has become a prominent approach to understanding performance and guiding training in deep learning \cite{PENNINGTON-2017c, WEI-2022, BONNAIRE-2025, HODGKINSON-25}.
In many cases, the spectra of such matrices have non-trivial structure, often containing spikes, multiple multi-modal bulks, and heavy-tails \citep{EL_KAROUI-2010, MARTIN-2021}.
Conventional algorithms to extract eigenvalue information from these matrices have required that the data are able to be stored in memory, scratch space, or can at least be accessed as an implicit operator (via matrix-vector products).

More recently, a new class of algorithms has emerged that is able to provide highly-accurate estimates of the eigenvalues (or summary functionals thereof \citep{AMELI-2025a}) of matrices, \emph{even without implicit or explicit access to the full matrix}, i.e., of so-called \emph{impalpable} matrices \citep{AMELI-2025b}. One such method, termed \emph{Free Decompression} (FD), shows great promise as a tool for gaining access to the spectral distributions of such impalpable matrices. The central premise is that by appropriately sampling a small sub-matrix from the large impalpable matrix of interest, one can evolve a partial differential equation (PDE) in the Stieltjes transform of a spectral density in the decompression ratio to the desired matrix dimension. Despite demonstrating impressive accuracy when considering synthetic examples that naturally arise in random matrix theory (RMT) and free probability, the methods in \citep{AMELI-2025b} are highly restrictive, assuming the distribution of eigenvalues can be described by a single smooth connected bulk. This assumption is inevitably violated when challenged with more realistic examples, preventing widespread adoption of FD-based methods. 


Instead, this work considers a general FD methodology based on the \emph{polynomial method} \citep{RAO-2008}.
The polynomial method is 
a tool from RMT that, in theory, greatly simplifies many computations involving spectral distributions of large random matrices and their transforms. 
In particular, it assumes that the Stieltjes transform of the limiting spectral distribution of a class of random matrices is \emph{algebraic}, i.e., it can be represented as a root of a (typically low-order) bivariate polynomial relation. 
This recasts the FD problem as evolution on a \emph{spectral curve}. 
All canonical random matrix ensembles satisfy this assumption (see \Cref{rem:EnsembleQuadratic}), and by increasing the degree of the polynomial, an extraordinarily broad class of spectral densities can be captured in this way. For example, Pennington et al. \citep{PENNINGTON-2017a,PENNINGTON-2017c} highlight that the Stieltjes transforms of certain Hessian approximations and neural network gram matrices are, respectively, solutions to cubic and quartic polynomial equations. 
Conveniently, it turns out that a wide range of common operations and transforms preserve this structure, including FD. 
This fact can be exploited to circumvent many of the challenges faced by the approach taken in \citet{AMELI-2025b}, including the problem of analytic continuation, which is well understood for polynomials. Unfortunately, there has been no implementation of the polynomial method for matrix computations to date, due to major practical challenges when fitting the underlying polynomial, and identifying 
the roots of interest. These issues must first be overcome to accomplish FD at scale.


\begin{figure*}[t]
    \begin{minipage}[b]{0.37\textwidth}
        \includegraphics[width=\textwidth]{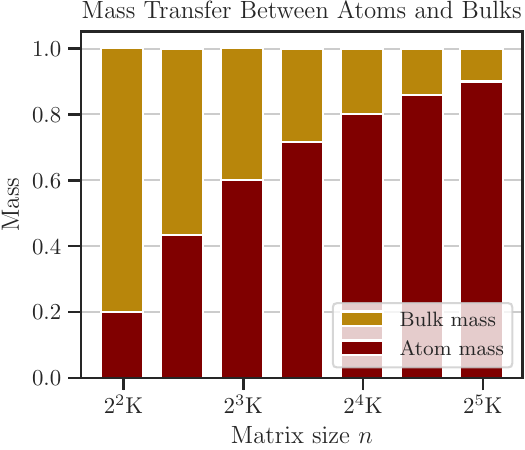}
    \end{minipage}%
    \hfill%
    \begin{minipage}[b]{0.61\textwidth}
        \includegraphics[width=\textwidth]{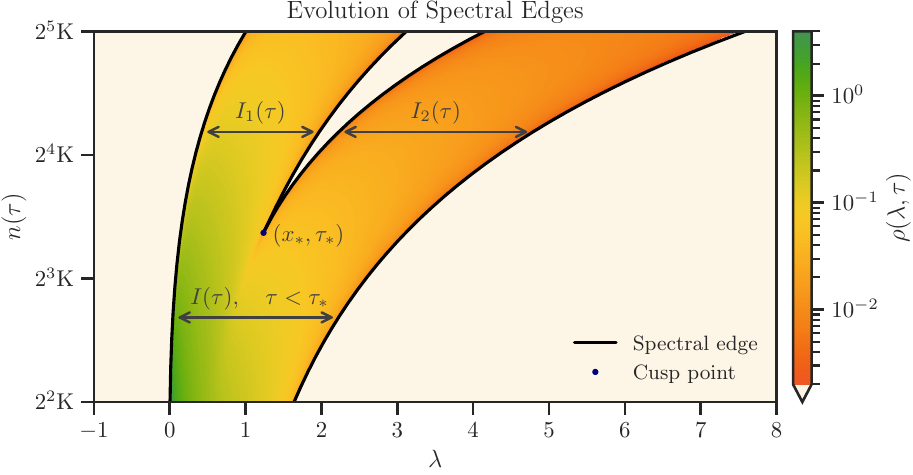}
    \end{minipage}
    \ifjournal
        \vspace{-1.5mm}
    \fi
    \caption{Evolution for increasing matrix sizes $\tau$ of atomic mass (left), density and spectral edges (right) for the free compound Poisson density from  \Cref{sec:results}. A complexity dealt with by our method is that the bulk with support $I(\tau)$ splits at the cusp point $(x_{\ast}, \tau_{\ast})$ to reveal two evolving supports $I_1(\tau)$ and $I_2(\tau)$.}
    \label{fig:cfp-atom}
    \ifjournal
        \vspace{-3mm}
    \fi
\end{figure*}

\paragraph{Contributions.}
The central contribution of this work is to introduce a new general FD methodology and practical tools for performing spectral computations for large impalpable matrices. We develop the first robust implementation of the polynomial method and apply it to obtain a state-of-the-art implementation of FD. 
This enables us to directly trace a number of spectral quantities of interest; including the edges of the distribution, locations of atoms and their weights, as well as spectral moments (see \Cref{fig:cfp-atom}). This is highly valuable in many applications: for positive-definite matrices, the ratio of the largest to the smallest eigenvalues constitutes the \emph{condition number} $\mathrm{cond}(\tens{A})$ of the matrix $\tens{A}$, a critical component for understanding rates of convergence in optimization algorithms, for example. 
Similarly, it was demonstrated in \citet{BONNAIRE-2025} that the edges of bulk regions of the spectrum play an important role in the training dynamics of diffusion models.  
In doing so, we 
\begin{itemize}[labelindent=12pt, leftmargin=*]
    \item 
    develop the polynomial method to introduce an algebraic FD framework (FDPM) that generalizes previous methods beyond quadratic Stieltjes transforms, 
    extending previous work to accurately approximate multi-scale spectral distributions, which may be multi-modal and/or contain multiple bulk regions; 
    \item 
    derive equations to accurately compute the dynamics of important spectral quantities (atoms, moments, and edges), as well as track the mechanism of splitting and merging of bulks, without the need to compute the full spectral distribution; 
    \item  demonstrate empirically that the method works on several classes of ML models;
    and
    \item 
    deploy an implementation into the open source software project \texttt{\freealg}.
\end{itemize}

The remainder of the paper is structured as follows. \Cref{sec:fd} outlines some background and related work, as well as tools from free probability. \Cref{sec:poly} describes free decompression via the polynomial method (FDPM), detailing FD in this context, and the numerical methods we use, including associated methods for atoms, spectral edges, and moments. \Cref{sec:results} provides numerical examples demonstrating the utility of our method. We conclude and discuss limitations in \Cref{sec:conclusion}. Background, proofs, additional information and experiments are provided in a detailed set of appendices.

\section{Background on Free Decompression} \label{sec:fd}

Free decompression was recently introduced by \citet{AMELI-2025b} as a method for gaining access to the spectral distribution of so-called \emph{impalpable} matrices: cases where the user does not have complete access to the matrix, perhaps due to their sheer size, corruption, or data availability. The method relies upon tools from free probability to derive a PDE in the familiar Stieltjes transform, a standard tool in RMT that at this stage has been widely deployed in both ML theory and the development of practical algorithms. 

We briefly recall some relevant notation. Let $\tens{A}$ denote a matrix of size $n \times n$, where $n$ is assumed to be large, and let $m_{\tens{A}}$ denote the corresponding \emph{empirical Stieltjes transform}, given by
\begin{align}
\label{eq:EmpStieltjes}
    m_{\tens{A}}(z) \coloneqq \frac{1}{n}\tr(\tens{A} - z \tens{I})^{-1} = \frac{1}{n}\sum_{i=1}^n\frac{1}{\lambda_i - z},
\end{align}
where $z\in\mathbb{C}\setminus\{\lambda_i\}_{i=1}^n$ and $\lambda_i$ is the $i$-th eigenvalue of $\tens{A}$. 
It is generally assumed that $\tens{A}$ is an element of a sequence $(\tens{A}_n)_{n=1}^{\infty}$ where each $\tens{A}_i$ is contained in $\tens{A}_j$ for $j > i$, $\tens{A}_n \in \mathbb{R}^{n \times n}$, and as $n \to \infty$, $m_{\tens{A}_n}(z) \to m(z)$ pointwise. 
Here, $m$ is  the \emph{Stieltjes transform} of a limiting \emph{spectral measure} $\nu$ in the sense that \(m: \mathbb{C} \setminus \supp(\nu) \to \mathbb{C}\) satisfies 
    $m(z) \coloneqq \int_{\mathbb{R}} \frac{1}{x - z} \, \nu(\mathrm{d} x).$ 
The R-transform $R(z)$ corresponding to $m$ is the (possibly multi-valued) function satisfying $R(-m(z)) = z + \frac{1}{m(z)}$ for $z \in \mathbb{C} \setminus \supp(\nu)$. 

To model the process of matrix subsampling, let $\sigma_n$ denote a uniformly random permutation of $\{1,\dots,n\}$ and for $1 \leq k \leq n$, let $\tens{A}_n^{(k)} = (\tens{A}_{\sigma_n(i)\sigma_n(j)})_{i,j=1}^k$. Then for any subsampling factor $\tau < 1$, $\mathbb{E}m_{\tens{A}_n^{([\tau n])}}(z) \to m_\tau(z)$ as $n \to \infty$. 
\citet{AMELI-2025b} asserts that the map $m \mapsto m_\tau$ is equivalent to the map that operates on the R-transform by $R(z) \mapsto R(\tau z)$ via the Nica-Speicher Theorem \citep{NICA-1996}.\footnote{Note that this theorem does not hold at this level of generality. A rigorous evaluation suggests it may need unitary invariance (see \Cref{sec:nica} for a proof of this in our notation). Fortunately, the assumption does appear to hold for many cases of interest in modern ML, including the ones considered here.} 
When $\tau < 1$, 
as is typical in free probability theory, 
this is called \bemph{free compression}. 
However, this operation can be easily inverted by taking $\tau > 1$, moving from the Stieltjes transform of the submatrix to that of the full matrix; this inverse operation is \bemph{free decompression}. 

While a number of obvious strategies exist to implement free decompression, they each have significant drawbacks. The multi-valued nature of the R-transform, and the need to eventually recover $m$, make it difficult to work with directly. The approach taken by \citet{AMELI-2025b} relies on numerically solving a PDE in $m$, where the major challenges faced are accurate curve-fitting of the approximated spectral distribution of the initial matrix, and analytic continuation of the Stieltjes transform across the $x$-axis. Reparameterization of this PDE results in an inviscid complex Burgers equation, and while this presents another potential approach, it does not seem straight-forward to implement in practice. While the formation of singularities and shocks can be expected for equations of this type, we have identified a family of distributions for which the flow exists for all time in \Cref{sec:decompressibility}. For further discussion on these difficulties, we refer the reader to \Cref{sec:bg-pde-hard}.

\section{Implementing the Polynomial Method} \label{sec:poly}

An especially wide array of operations in random matrix theory, including free additive convolution, free multiplicative convolution, can be distilled into operations acting on various \emph{transforms} of the Stieltjes transform, including the R-transform, the S-transform, and the moment transform. We have seen that free decompression is defined in terms of the R-transform. \bemph{These transforms are all related to each other by algebraic relations.} This is inconvenient for performing direct numerical conversions between the transforms themselves \cite{OLVER-2012, CHEN-2026}. However, it is \emph{very convenient} if the Stieltjes transform is already defined through an implicit algebraic relation, as deriving relations for each of the transforms and their inverses offers no further complexity. 
This observation was first made by \citet{RAO-2008}, who coined the \emph{polynomial method} as a numerical framework that moves between these transforms for performing matrix computations. At the heart of this approach is the following ansatz.


\begin{assumption}\label{ass:polynomial}
    The Stieltjes transform \(m\) is \emph{algebraic}: there exists a nonzero bivariate polynomial \(P: \mathbb{C}\times\mathbb{C}\to\mathbb{C}\) such that for any $z \in \mathbb{C} \setminus \mathcal{K}$ where $\mathcal{K}$ is a finite set of singularities,
\begin{equation}
    P(z, m(z)) = 0. \label{eq:polynomial}
\end{equation}
\end{assumption}
We write $d_z$ for the degree of this polynomial in $z$, and $s$ for its degree in $m$. In addition to the quadratic examples previously handled by \citet{AMELI-2025b}, this unlocks, for example, models of Hessian matrices and neural network gram matrices \citep{PENNINGTON-2017a, PENNINGTON-2017c}, which we demonstrate in \Cref{sec:results}.
Under \Cref{ass:polynomial}, Cauchy, R-, S- and moment transforms, as well as common algebraic operations and stochastic transformations, can also be realized as the roots of corresponding polynomials.

Despite its convenience, \citet{RAO-2008} highlight several fundamental obstacles to the polynomial method preventing its implementation. While $m$ is easily estimated, finding $P$ is nontrivial. Polynomial roots are sensitive to their coefficients, and because valid $P$ for algebraic Stieltjes transforms are rare among bivariate polynomials, na\"{i}ve polynomial fits often radically fail. Furthermore, identifying the correct numerical solution for $m$ from $P$ alone is nearly impossible. Consequently, to our knowledge, no general implementations of the polynomial method exist. In the following sections, we describe our approach to rectify these challenges and provide its first implementation.


\paragraph{Fitting the Bivariate Polynomial.}
To find solutions to \eqref{ass:polynomial}, we need to fix the degree of the polynomial in $z$ and $m$, respectively denoted by $d_z$ and $s$. These degrees can be identified post-hoc using standard model criteria \cite[Section 6.1.3]{JAMES-2013}. It turns out that by appealing to further properties of the Stieltjes transform, we can demonstrate that its corresponding polynomial which solves \eqref{ass:polynomial} must have real-valued coefficients (\Cref{lem:poly-real}). Consequently, $P$ can be written as
\begin{equation}
    P(z, m) \coloneqq \sum_{i=0}^{d_z} \sum_{j=0}^s c_{ij} z^i m^j, \qquad c_{ij} \in \mathbb{R}.
    \label{eq:poly-real}
\end{equation}
In order to extract the coefficients from \eqref{eq:poly-real} however, we require access to samples of a faithful approximation $\hat{m}$ to the Stieltjes transform itself. One option would be to use the curve-fitting procedures of \citet{AMELI-2025b} or \citet{OLVER-2012}, treating each sub-interval of $I$ separately, then combining them into the Stieltjes transform approximation via quadrature. However, we found that for reasonably sized matrices, we achieve better results by direct evaluation of the empirical Stieltjes transform \eqref{eq:EmpStieltjes}. To obtain an accurate approximation for the coefficients $c_{ij}$, samples locations for $\hat{m}$ need to be carefully chosen. After significant trial-and-error, we found success by sampling $z$ from logarithmic-scaling high-radius Bernstein ellipses around individual bulks of the eigenspectrum. Further improvements result from incorporating constraints using empirical moments. Our approach is discussed in \Cref{sec:implicit}, with the complete algorithm presented in \cref{alg:poly}. 


\paragraph{The Physical Branch.}
While algebraic manipulations of \(P\) are straightforward, a central numerical challenge is identifying, for each \(z \in \mathbb{C}\), which root of \(P(z, \cdot)\) is the physical Stieltjes value. The physical branch is characterized globally by the Herglotz property \(\Im m(z) > 0\) for \(z \in \mathbb{C}^{+}\) and by the normalization \(m(z) = -z^{-1} + O(z^{-2})\) at infinity. However, these conditions do not by themselves provide a stable pointwise root-selection rule. In high-degree or multi-bulk examples, roots can come close, exchange ordering, or pass near branch points, and relying only on pointwise tests can easily lead to jumps between sheets.


\paragraph{Branch Selection via Continuation.}
We therefore select the physical branch by geometric continuation on the algebraic spectral curve \(P(z, m) = 0\). Starting from an anchor point where the physical root can be identified reliably, either from the normalization at infinity or by comparison with the empirical Stieltjes transform, we continue the solution along a path in the \(z\)-plane. Along a smooth portion of the curve, implicit differentiation gives
\begin{equation*}
    \frac{\mathrm{d} m}{\mathrm{d} z} = - \frac{\partial_z P(z, m)}{\partial_m P(z, m)}.
\end{equation*}
This ordinary differential equation respects the local geometry of the sheet and provides a predictor for transporting the physical root from the anchor to the query point. The predictor is then corrected by Newton projection onto \(P(z, m) = 0\), with step subdivision when the geometry becomes stiff. This continuation procedure is substantially more stable than independent pointwise root selection and allows the fitted polynomial to be used as a numerical Stieltjes transform even for the high-degree, multi-scale curves used in our experiments; details are given in \Cref{sec:root-selection}.


\subsection{Free Decompression on Algebraic Spectral Curves} \label{sec:fd-on-sc-main-body}

Since the R-transform that underpins free decompression satisfies an algebraic relation whenever \Cref{ass:polynomial} holds, the Stieltjes transform after free decompression also satisfies an algebraic relation. This is stated in our main Theorem, proved as \Cref{thm:FormalFreeDecomp} in \Cref{sec:formal}.

\begin{theorem}
    \label{thm:SpectralCurve}
    Let $m$ be a Stieltjes transform satisfying \Cref{ass:polynomial}, $\tau\geq1$ be a decompression ratio, and $m_\tau$ be the free decompression of $m$ by the factor $\tau$. Then every point $(z, m_\tau(z)) \in \mathbb{C}^2$ satisfies
    \begin{align}\label{eq:fd-poly}
        P\left(z + \frac{\tau - 1}{\tau m_\tau}, \tau m_\tau\right) = 0.    
    \end{align}
\end{theorem}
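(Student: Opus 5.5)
The plan is to work entirely through the R-transform characterization of free decompression recalled in \Cref{sec:fd}: the map $m \mapsto m_\tau$ corresponds to $R \mapsto R_\tau$ with $R_\tau(w) = R(\tau w)$. The strategy has three steps: write the defining functional equations for $R$ and $R_\tau$, eliminate the R-transforms between them to obtain an identity linking $m_\tau$ at $z$ to $m$ at a shifted point, and then substitute into \Cref{ass:polynomial}.

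For the first step, the definition of the R-transform gives $R(-m(\zeta)) = \zeta + 1/m(\zeta)$ for $\zeta$ off the support, and likewise $R_\tau(-m_\tau(z)) = z + 1/m_\tau(z)$. Using $R_\tau(w) = R(\tau w)$, the second identity becomes
\[
R(-\tau m_\tau(z)) = z + \frac{1}{m_\tau(z)}.
\]
The key observation is that this is exactly the defining relation for $R$, evaluated at the argument $-\tau m_\tau(z)$. We therefore seek a point $\zeta$ with $m(\zeta) = \tau m_\tau(z)$. If such a point exists, the defining relation gives $R(-\tau m_\tau) = \zeta + 1/(\tau m_\tau)$. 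Comparing with the display yields $\zeta = z + 1/m_\tau - 1/(\tau m_\tau) = z + \frac{\tau-1}{\tau m_\tau}$. Substituting $(\zeta, m(\zeta)) = \bigl(z + \frac{\tau-1}{\tau m_\tau}, \tau m_\tau\bigr)$ into $P(\zeta, m(\zeta)) = 0$ then gives \eqref{eq:fd-poly}.

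The main obstacle is justifying that $\zeta$ exists and that branches are chosen consistently, since $R$ is the functional inverse of $-m$ up to a shift and is only locally single-valued. To handle this, I would first prove the identity for $|z|$ large. There, $m_\tau(z) \approx -1/z$ is small, $R$ is analytic near $0$, and $-m$ is invertible near $\infty$. Concretely, I would define $\zeta(z) := z + \frac{\tau-1}{\tau m_\tau(z)}$. For large $z$ this behaves like $z - (\tau-1)z/\tau = z/\tau$, which is also large. One then checks that $m(\zeta(z)) = \tau m_\tau(z)$ holds there, because both sides are the unique small solution $w$ of $R(-w) = \zeta + 1/w$ near $\infty$, by the inverse function theorem applied to the germ of $m$ at infinity. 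This gives $P(\zeta(z), \tau m_\tau(z)) = 0$ on a neighborhood of infinity.

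Finally, I would extend to all $z$ by analytic continuation. The function $z \mapsto P\bigl(z + \frac{\tau-1}{\tau m_\tau(z)}, \tau m_\tau(z)\bigr)$ is analytic on the connected domain $\mathbb{C}\setminus(\supp \nu_\tau \cup \{\text{zeros of } m_\tau\})$ and vanishes on an open set, so it vanishes identically there. Since $m_\tau$ is Herglotz, it has no zeros off $\mathbb{R}$. The remaining real points are handled either by continuity or by reading the statement as an identity on the algebraic curve, i.e.\ after clearing the denominator by multiplying through by $(\tau m_\tau)^{d_z}$, which yields a genuine polynomial relation between $z$ and $m_\tau$.
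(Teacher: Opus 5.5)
Your proposal is correct. Its algebraic core is the same as the paper's proof of \Cref{thm:FormalFreeDecomp}: $(z,\omega)$ on the $m_\tau$-curve gives $(-\tau\omega, z+1/\omega)$ on the $R$-curve, hence a point $(z_0,\tau\omega)$ on $P=0$ with $z_0 = z+\frac{\tau-1}{\tau\omega}$.

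The genuine difference is in how the existence of that preimage point is justified. The paper works purely formally with sets of curves. It \emph{defines} $\mathcal{C}_R$ as the image of $\mathcal{C}_m$ under $(z,m)\mapsto(-m,z+1/m)$, and $\mathcal{C}_{m_\tau}$ through $\mathcal{C}_{R_\tau}$, so the preimage $(z_0,m)$ exists by construction. It then reaches the actual Stieltjes transform simply by identifying the graph of $m_\tau$ with a subset of $\mathcal{C}_{m_\tau}$. That buys brevity, and a statement about the whole multivalued curve (all sheets at once, which is what the evolved polynomial $Q_\tau$ in \Cref{cor:CoefficientMethod} needs). But it leaves implicit exactly the branch-consistency issue you flag.

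Your route actually proves that the physical branch $m_\tau$ lies on the evolved curve, in two steps. First, you verify $m(\zeta(z))=\tau m_\tau(z)$ for large $|z|$, using the local invertibility of $w\mapsto R(w)+1/w$ near $0$ (equivalently of $-m$ near $\infty$). Second, you extend by the identity theorem on the connected set $\mathbb{C}\setminus(\supp\nu_\tau\cup m_\tau^{-1}(0))$, with the Herglotz property confining the zeros of $m_\tau$ to isolated points in the real gaps. This also makes explicit where $m_\tau\neq 0$ is needed.

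The price is that you use analyticity of $R$ at $0$ (from compact support), the existence of $\nu_\tau$, and the germ identity $R_\tau(w)=R(\tau w)$. The paper instead encodes that identity directly at the level of curves and never has to touch analytic germs.
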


Especially noteworthy here is the lack of any further assumptions on the nature of \(m\) or on the explicit construction of the \(R\)-transform. \Cref{ass:polynomial} naturally allows free decompression to be viewed through a different lens, characterizing it as a family of rational reparameterizations of the spectral curve \(P(z, m) = 0\). This characterization is convenient, as it circumvents the manual analytic continuation that plagued the PDE-based approach in \citet{AMELI-2025b}. Indeed, for each fixed \(z\) away from the singular set, the Fundamental Theorem of Algebra guarantees that \(P(z, m) = 0\) has \(s\) solutions in \(m\), up to multiplicity. Viewing \(m(z)\) as a multivalued algebraic function, these roots represent the sheets of the analytic continuation of the Stieltjes transform. The physical Stieltjes transform is one such sheet, and free decompression transports it along the spectral curve. This geometric viewpoint is developed in \Cref{sec:fd-on-sc}; \Cref{fig:mp-branches} illustrates the sheet structure and the gluing of branches in the Marchenko--Pastur case.

\begin{figure*}[!t]
    \begin{minipage}[b]{0.45\textwidth}
        \centering
        \includegraphics[width=\textwidth]{\figdir/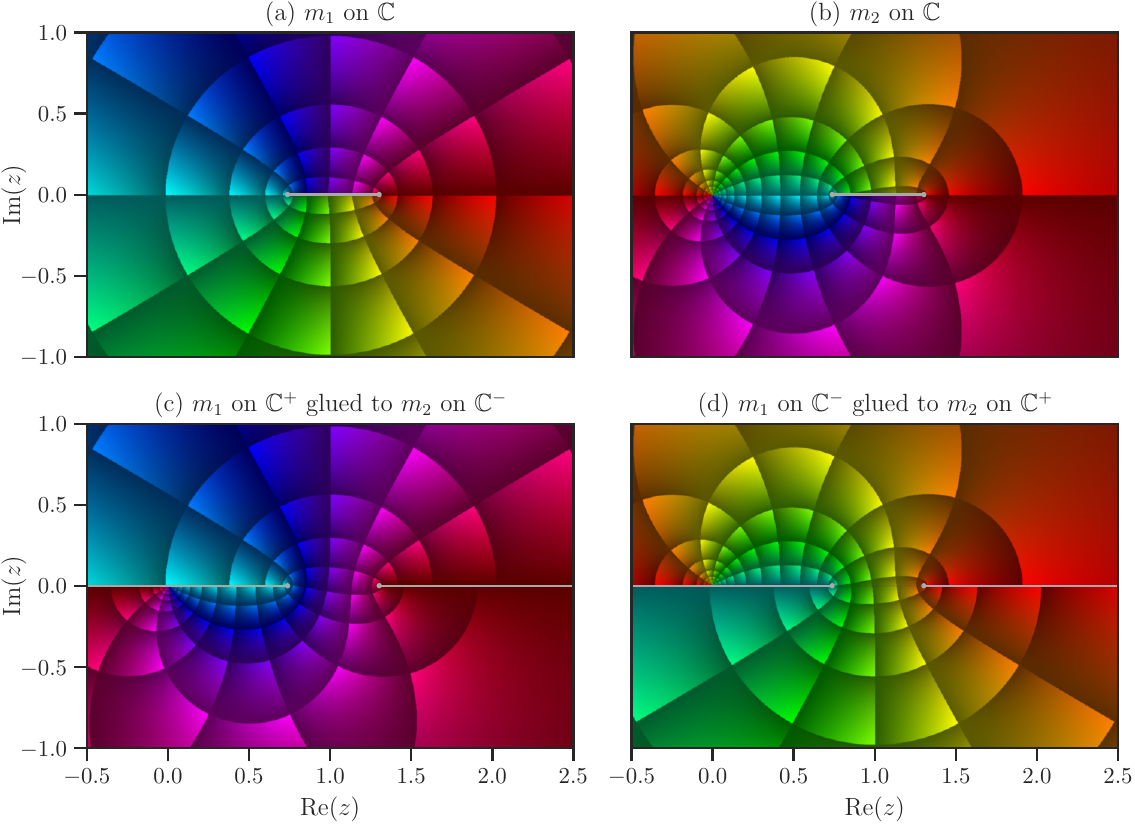}
    \end{minipage}
    \hfill %
    \begin{minipage}[b]{0.54\textwidth}
        \centering
        \raisebox{2mm}{\includegraphics[width=\textwidth]{\figdir/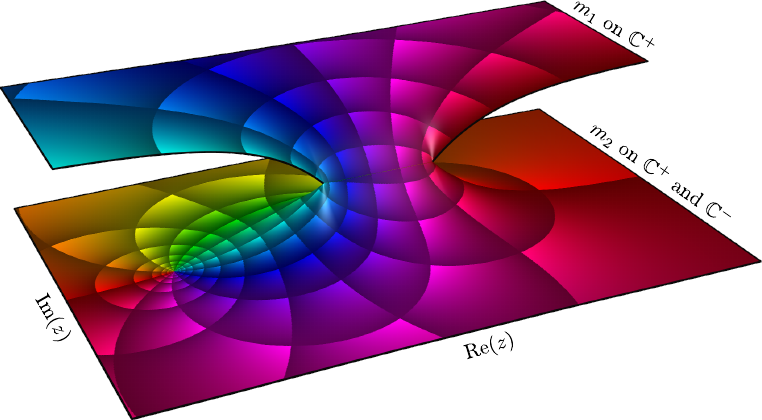}}
    \end{minipage}
    \caption{Visualization of multiple solutions to the algebraic relation \eqref{ass:polynomial} for the Marchenko--Pastur density with $\lambda = \frac{1}{50}$ (left), and the surface obtained by gluing the Stieltjes solution $m_1$ to the secondary branch $m_2$ (right); colors are interpreted as in \Cref{sec:notation}. Free decompression requires tracing families of continuous paths along this surface, from the top sheet down to the lower sheet.}
    \label{fig:mp-branches}
    \ifjournal
        \vspace{-3mm}
    \fi
\end{figure*}


\paragraph{Evolving the Physical Branch.}
The evolved algebraic relation in \Cref{thm:SpectralCurve}, and its polynomial form in \Cref{cor:CoefficientMethod}, gives all candidate roots of \(m_{\tau}(z)\), but it does not identify the physical branch. To select the physical root, we track the known physical sheet at \(\tau = 1\) along the free-decompression flow. For a fixed query point \(z\), we work in the original spectral-curve coordinates \((\zeta, y)\), where \(P(\zeta, y) = 0\), and impose the characteristic relation
\begin{equation*}
    \zeta - (\tau - 1)y^{-1} - z = 0.
\end{equation*}
Equivalently, \((\zeta, y)\) is determined by the coupled system
\begin{equation*}
    F_1(\zeta, y) \coloneqq P(\zeta, y) = 0,
    \qquad
    F_2(\zeta, y; \tau, z) \coloneqq \zeta - (\tau - 1)y^{-1} - z = 0.
\end{equation*}
At \(\tau = 1\), this system reduces to \(\zeta = z\) and \(y = m_0(z)\), where \(m_0(z)\) is already known from the physical-branch selection procedure above. Differentiating the system with respect to \(\tau\), while keeping \(z\) fixed, gives the tangent equation
\begin{equation*}
    \begin{bmatrix}
        \partial_{\zeta} P(\zeta, y) & \partial_y P(\zeta, y) \\
        1 & (\tau - 1)y^{-2}
    \end{bmatrix}
    \begin{bmatrix}
        \dot{\zeta} \\
        \dot{y}
    \end{bmatrix}
    =
    \begin{bmatrix}
        0 \\
        y^{-1}
    \end{bmatrix}.
\end{equation*}
This ordinary differential equation on the spectral curve provides the predictor direction for transporting the physical sheet from one decompression ratio to the next. The predicted point is then corrected by Newton projection onto the nonlinear system \((F_1, F_2) = (0, 0)\), with step subdivision when the local geometry becomes stiff. After correction, the decompressed Stieltjes value is recovered as \(m_{\tau}(z) = \tau^{-1} y\). The complete geometric predictor--corrector procedure is given in \Cref{sec:fd-on-sc}.


\subsection{Evolution of Spectral Quantities}
\label{sec:quantities}

One of the advantages of viewing free decompression through the lens of the polynomial method is that the evolution of a number of quantities of interest becomes clear. In particular, the weight of atomic components of distributions, as well as their moments, are easily computable. Similarly, the edges of the distribution's absolutely-continuous density components evolve in a deterministic way.



\paragraph{Atoms.}
An atom of a distribution at a point $x_{\circ}$ corresponds to a simple pole of the Stieltjes transform $m(z)$ at $z=x_{\circ}$, whose weight is given in terms of its residue
$w = 
\lim_{z\to x_{\circ}}(x_{\circ} - z)m(z).$
Under \Cref{ass:polynomial}, writing $P(z,m) = \sum_{i=0}^s a_i(z)m^i$, a necessary condition for the existence of a pole at finite real $x_{\circ}$ is that $x_{\circ}$ is a root of the leading coefficient, i.e., $a_s(x_{\circ})=0$. 
Computing the residue gives $w(x_{\circ}) = a_{s-1}(x_{\circ})/a^\prime_{s}(x_{\circ})$. Under free decompression, we can track this weight as a function of the decompression ratio $\tau$. This is depicted in \Cref{fig:cfp-atom}, which shows the proportion of probability mass occupied by the atomic component as the matrix decompresses. The precise dynamics are encoded in the following proposition, whose proof is given in \Cref{sec:atoms}.

\begin{proposition} \label{prop:atom-short}
    Let \(x_{\circ} \in \mathbb{R}\) be an atom of \(m(z)\) with initial mass \(w_{\circ} \in [0, 1]\). Under the free decompression flow, the atom location remains fixed at \(x_{\circ}(\tau) = x_{\circ}\), while its weight evolves via \(w(\tau) = 1 - \tau^{-1}(1 - w_{\circ})\).
\end{proposition}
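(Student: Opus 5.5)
The plan is to work directly from the characteristic form of \Cref{thm:SpectralCurve}. Writing $y = \tau m_\tau(z)$ and $\zeta = z + (\tau-1)/y$, the decompressed transform satisfies $P(\zeta, y) = 0$. In other words, the physical branch $y$ of the decompressed problem is the original Stieltjes transform $m$ evaluated at the shifted point, so $y = m(\zeta)$. The atom at $x_\circ$ is a simple pole of $m$ with $m(\zeta) \sim w_\circ/(x_\circ - \zeta)$ as $\zeta \to x_\circ$. I will look for a pole of $m_\tau$ at some real $x$ by letting $z \to x$ and assuming $|m_\tau(z)| \to \infty$, hence $|y| \to \infty$. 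In that regime $(\tau-1)/y \to 0$, so $\zeta - z \to 0$. Since $y = m(\zeta)$ blows up, $\zeta$ must approach a pole of $m$, namely $x_\circ$. This forces $z \to x_\circ$ as well, so the atom location is fixed: $x_\circ(\tau) = x_\circ$.

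For the weight, I will do a first-order expansion. Set $\varepsilon = x_\circ - z$, and assume $m_\tau(z) \sim w(\tau)/\varepsilon$, so that $y \sim \tau w/\varepsilon$. Then
\[
x_\circ - \zeta = \varepsilon - \frac{(\tau-1)\varepsilon}{\tau w} + o(\varepsilon) = \varepsilon\,\frac{\tau w - \tau + 1}{\tau w} + o(\varepsilon).
\]
Substituting into $y = m(\zeta) \sim w_\circ/(x_\circ-\zeta)$ gives
\[
\frac{\tau w}{\varepsilon} \sim \frac{w_\circ\,\tau w}{\varepsilon(\tau w - \tau + 1)},
\]
so $\tau w - \tau + 1 = w_\circ$, i.e.\ $w(\tau) = 1 - \tau^{-1}(1 - w_\circ)$. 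As a consistency check, I will reproduce this algebraically with the residue formula $w = a_{s-1}/a_s'$ from the paragraph above. Multiplying $P(z + (\tau-1)/(\tau m), \tau m)$ by a suitable power of $m$ gives a polynomial in $m$ whose leading coefficient still vanishes at $x_\circ$. Reading off its top two coefficients should produce the same affine relation in $w_\circ$.

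The main obstacle is justifying the ansatz that $m_\tau$ has a simple pole at $x_\circ$ at all, which is what makes the expansion legitimate. The formula shows that the weight $w(\tau)$ stays positive only when $w_\circ > 1 - \tau$. For $\tau \ge 1$ and $w_\circ \ge 0$ this holds, except in the degenerate case $w_\circ = 0$, $\tau = 1$. When $w_\circ > 0$, I will handle it by arguing in reverse. Take the candidate $m_\tau$ with a pole of residue $-w(\tau)$, $w(\tau)>0$, and check via the computation above that $\zeta(z) \to x_\circ$ nontangentially. This confirms that the pole of $m$ at $x_\circ$ maps to a pole of $m_\tau$ on the physical branch. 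Uniqueness of the branch, from continuation in $\tau$ starting at $\tau=1$, then rules out any alternative. I would also remark that free compression ($\tau<1$) can kill the atom, consistent with the formula producing a negative weight.
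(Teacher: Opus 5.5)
Your route is sound and genuinely different from the paper's. The paper argues infinitesimally. It substitutes the moving-pole ansatz $m(t,z)=-w(t)/(z-x_\circ(t))+O(1)$ into the FD PDE $\partial_t m=-m+m^{-1}\partial_z m$, reads off $w\,x_\circ'=0$ from the double pole and $w'=1-w$ from the simple pole, and integrates. You instead work at fixed $\tau$ with the integrated (characteristic) relation of \Cref{thm:SpectralCurve}. This buys three things:
\begin{itemize}
\item You need neither the PDE nor differentiability of $(x_\circ,w)$ in $\tau$.
\item Your observation that $\zeta-x_\circ\approx (z-x_\circ)\,w_\circ/(\tau w)$, with a positive real factor, addresses something the paper's proof silently assumes: that the pole actually lives on the physical sheet.
\item Your algebraic cross-check closes exactly. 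The two top $m$-coefficients of $m^{d_z}P(z+(\tau-1)/(\tau m),\tau m)$ are $\tau^s a_s(z)$ and $\tau^{s-1}(a_{s-1}(z)+(\tau-1)a_s'(z))$. So the candidate pole locations (zeros of $a_s$) do not depend on $\tau$, and $a_{s-1}/a_s'$ becomes $(w_\circ+\tau-1)/\tau$.
\end{itemize}
The paper's route, in exchange, is shorter and never has to decide which root of $P$ is physical. It also yields an ODE that describes the backward (compression) flow and the extinction time equally well.

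One intermediate claim is false and should be dropped rather than relied on. \Cref{thm:SpectralCurve} only says that $y$ is \emph{some} root of $P(\zeta,\cdot)$. Since $\Im(1/m_\tau)<0$ on $\mathbb{C}^+$, the point $\zeta=z+(\tau-1)/(\tau m_\tau(z))$ can lie in $\mathbb{C}^-$, where $y$ is a continued and possibly non-physical sheet. Hence ``a pole of $m_\tau$ must sit at a pole of $m$'' fails for the physical $m$. For example, compound free Poisson with rate $\lambda>1$ has no atom at $0$, yet $\nu_\tau$ has one of mass $1-\lambda/\tau$ once $\tau>\lambda$. This is exactly your coefficient formula applied to the non-physical pole, whose formal weight is $a_{s-1}(0)/a_s'(0)=1-\lambda<0$.

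You do not need that step; run the matching forward instead. For $\zeta\in\mathbb{C}^+$ one has $z(\zeta)=\zeta-(\tau-1)/m(\zeta)\in\mathbb{C}^+$ and $m(\zeta)=\tau\,m_\tau(z(\zeta))$ with physical branches; this is the subordination form of the characteristics. Let $\zeta=x_\circ+i\eta$ with $\eta\downarrow 0$. Then $z-x_\circ\sim(\zeta-x_\circ)(1+(\tau-1)/w_\circ)$, which is a nontangential approach, and $(x_\circ-z)m_\tau(z)\to(w_\circ+\tau-1)/\tau$. This gives existence, fixed location and the weight at once, with no ansatz and no continuation-in-$\tau$ uniqueness argument.

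Finally, $w_\circ=0$ must be excluded outright, not only at $\tau=1$. In that case there is no pole of $m$ to match against, and the formula would otherwise assign mass $1-1/\tau$ to every real point.
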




\paragraph{Spectral Edges.} 
Under FD, it is also possible to track the support of the spectral density, whose bulk regions may split or merge as the spectral density evolves. This turns out to be a simpler task than accurately estimating the entire spectral density, and can often be used to diagnose and correct poor density approximations.
We assume that the spectral distribution can be decomposed into a finite set of atoms and absolutely-continuous bulk components. The support of the latter can be written as $I(\tau) = \bigcup_{j=1}^k I_j(\tau)$, where each $I_j(\tau) = [a_j(\tau),b_j(\tau)]$. 
The spectral edges are then these boundary points $\{a_j(\tau),b_j(\tau)\}_{j=1}^k$. 
Under \Cref{ass:polynomial}, these points have a precise characterization in terms of the \emph{branch points} of the spectral curve.
These are points at which different sheets of the spectral curve coalesce, satisfying 
\begin{align*}
    P(\zeta,y) = 0,
    \qquad \text{and} \qquad
    y^2\frac{\partial P}{\partial y}(\zeta,y) - (\tau-1)\frac{\partial P}{\partial \zeta}(\zeta, y) = 0,\qquad \zeta \in \mathbb{R},
\end{align*}
where $y$ belongs to the physical branch, as derived in \Cref{prop:fd-branch-point} in \Cref{sec:edge}. 
To identify which candidate points are spectral edges requires branch selection via the continuity method from known points on the physical branch, similar to the method described in \Cref{sec:fd-on-sc}. At $\tau=1$, we use nearby points to identify which roots come from the physical branch, and for $\tau>1$ we use points identified in previous time-steps. An algorithmic description of a procedure for finding the spectral edges is given in \Cref{alg:edge}. 
For some positive-definite matrices, FD can predict that the left edge of the support converges to zero much faster than it does in practice. We can remedy this through a finite-size correction; see \Cref{sec:finite-correction} for details. 


\paragraph{Merging and Splitting Bulks.}
As distributions evolve under free decompression, we may encounter situations where either multiple bulks merge into one continuous region, or a connected bulk region splits into separate bulks. Fortunately, we can identify points where this occurs. Structurally, these points are second-order branch points of the spectral curve, identified as satisfying 
\begin{gather*}
    P(\zeta_{\ast}, y_{\ast}) = 0,\qquad
    y_{\ast}^2\ \partial_{y} P - (\tau_{\ast} - 1)\ \partial_{\zeta} P = 0,\qquad\text{and} \\
    y_{\ast} \left((\partial_{\zeta \zeta} P) (\partial_y P)^2  - 2 (\partial_{\zeta y} P)  (\partial_{\zeta} P) (\partial_y P) + (\partial_{yy} P) (\partial_{\zeta} P)^2\right) + 2 (\partial_{\zeta} P)^2 (\partial_y P) = 0,
\end{gather*}
as shown in \Cref{prop:cusp} in \Cref{sec:merge}. We derive similar equations for the dynamics of the gaps between bulk regions, as well as the support intervals in \Cref{sec:gap}.


\paragraph{Moments.}
Computing the moments $\mu_k = \int x^k \nu(\mathrm{d} x)$ of a spectral distribution $\nu$ of interest is also of significant importance. Since moments are easy to estimate from eigenvalues, they comprise valuable criteria to ensure an effective polynomial fit to the Stieltjes transform (see \Cref{sec:constraint-moments}). For example, a necessary condition for $P$ to have a Stieltjes transform as a root is for $\sum_{i-j = e_{\max}} c_{ij} (-1)^j = 0$ where $e_{\max} \coloneqq \max\{i - j\;:\; c_{ij} \neq 0\}$ (see \Cref{lem:PolyNu0} in \Cref{sec:moments}). \Cref{prop:AlgebraicMomentFormula} provides an efficient $\mathcal{O}(n^2)$ recurrence relation to compute the first $n$ moments $\mu_1,\dots,\mu_n$ directly from $P$, without root selection. Furthermore, the moments can be evolved under FD without performing FD on the density itself. Specifically, the $n$-th moment of a probability measure $\nu$ under FD with ratio $\tau$ satisfies $\mu_n^{(\tau)} = \sum_{k=0}^{n-1} \kappa_{n,k} \tau^k$ where the coefficients $\kappa_{n,k}$ are computed in \Cref{prop:moment-decompression} and depend only on the moments $\mu_1,\dots,\mu_n$ of $\nu$.

\begin{figure}[t]
    \centering
    \includegraphics[width=\textwidth]{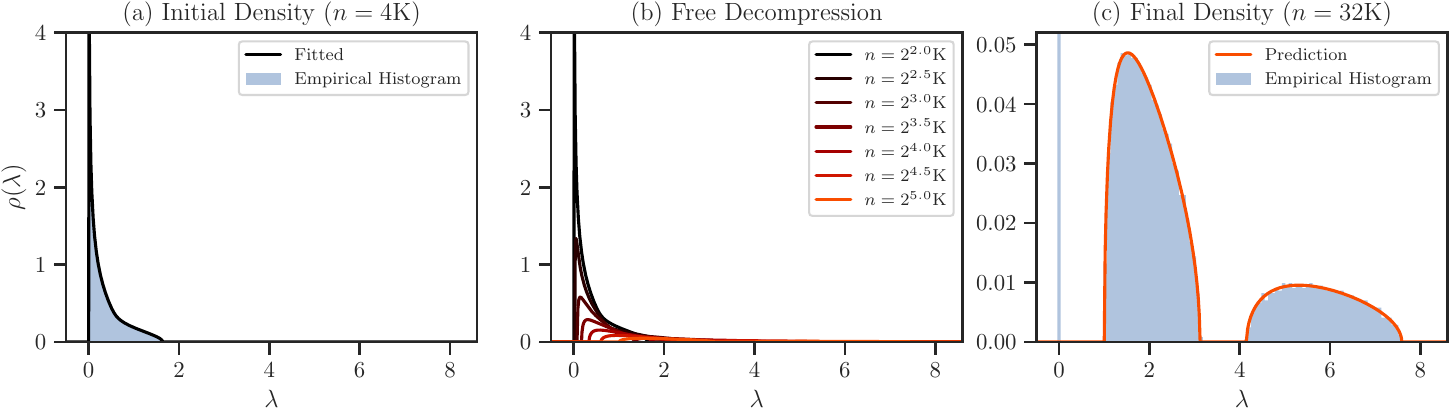}
    \ifjournal
        \vspace{-6mm}
    \fi
    \caption{Example of free decompression for recovering the ESD of a \(\num{32000} \times \num{32000}\) matrix, generated from the compound free Poisson law described in \Cref{sec:results}. (a) ESD of the sampled \(\num{4000} \times \num{4000}\) submatrix (histogram) and the fitted density (solid curve). (b) Free decompression flow over $n$. (c) Final predicted ESD at size \(n = \num{32000}\), compared with the ESD of the full matrix.}
    \label{fig:cfp-flow}
\end{figure}


\section{Empirical Results} 
\label{sec:results}

\begin{figure}[t]
    \centering
    \includegraphics[width=\textwidth]{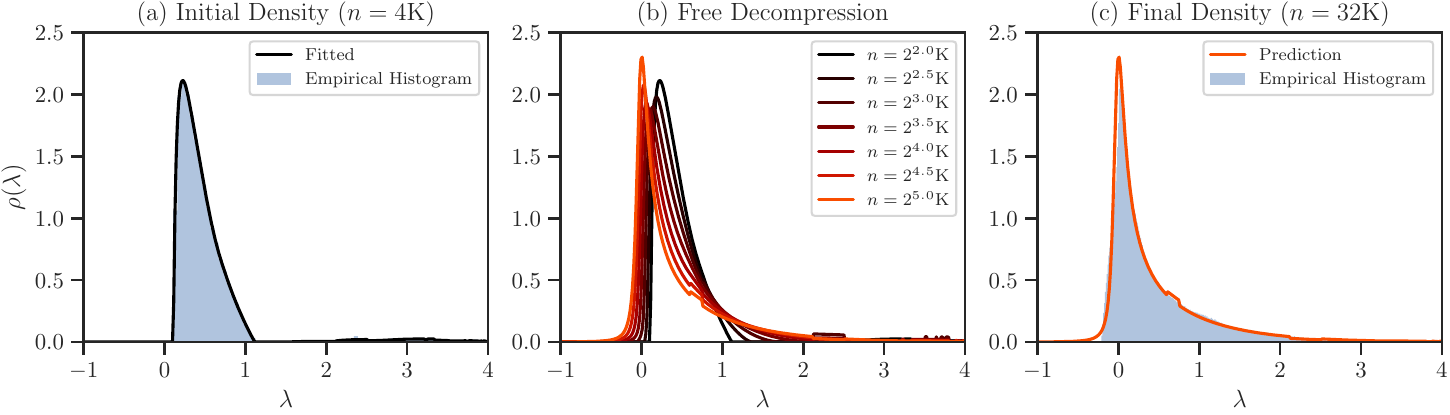}
    \caption{Example of free decompression for recovering the ESD of a \(\num{32768} \times \num{32768}\) matrix, generated from the neural network Hessian described in \Cref{sec:results}. (a) ESD of the sampled \(\num{4000} \times \num{4000}\) submatrix (histogram) and the fitted density (solid curve). (b) Free decompression flow from \(n_0\) to \(n\). (c) Final predicted ESD at size \(n = \num{32768}\), compared with the ESD of the original matrix.}
    \label{fig:hessian_flow}
    \ifjournal
        \vspace{-4mm}
    \fi
\end{figure}


We now consider three examples that demonstrate the ability to decompress distributions of interest in ML. First, we consider an approximation of a Compound Free Poisson Law that contains both atoms and splitting bulk regions. Then, we consider the Pennington--Bahri model of the limiting spectral distributions of Hessian matrices of neural networks at critical points of the MSE loss \citep{PENNINGTON-2017c}. Finally, we consider the activation matrix of a random Fourier features-based diffusion model.


\paragraph{Compound Free Poisson.}
Our first example belongs to the family of compound free Poisson distributions (see \Cref{sec:compound-poisson} for details). In our example we take $\nu(\mathrm{d}x) = \frac{3}{4} \delta_2 (\mathrm{d}x) + \frac{1}{4} \delta_{5.5} (\mathrm{d}x)$, and the distribution can be viewed as $(\frac{1}{4} \delta_2 (\mathrm{d}x) \boxtimes \mu_{MP}(\lambda)) \boxplus (\frac{3}{4} \delta_2 (\mathrm{d}x) \boxtimes \mu_{MP}(\lambda))$. The target density has aspect ratio $\lambda = 0.1$, and consists of a spike at $0$, and two absolutely continuous bulk regions. A $\num{6000} \times \num{6000}$ matrix sampled with these properties, and a $\num{1000} \times \num{1000}$ submatrix was subsampled and used as initial data, and we remark that its corresponding law (and the spectral density approximated from its polynomial fit) consists of a single bulk region. Free decompression on this submatrix is depicted in \Cref{fig:cfp-flow}. This example clearly demonstrates both the atom dynamics and bulk-splitting behavior described in \Cref{sec:quantities}, as shown in \Cref{fig:cfp-atom}.


\paragraph{Neural Network Hessian.}
Our second example is the Hessian matrix of the $\ell^2$-loss of a two-layer neural-network autoencoder model with CIFAR-10 data, at initialization. \citet{PENNINGTON-2017c} gave a free probabilistic approximation of the Hessian, which is treated separately in \Cref{sec:pennington-bahri}. Noting that the Hessian matrix decomposes as $\tens{H} = \tens{H}_0 + \tens{H}_1$, with $\tens{H}_0$ being the squared-Jacobian component, and $\tens{H}_1$ the network Hessian term, the authors treated each of these components separately in a similar but smaller autoencoder model to the one we consider here. Full details on the architecture, training, and matrix formation can be found in \Cref{sec:hessian-example}. 

\begin{figure}[t]
    \centering
    \includegraphics[width=0.54\textwidth]{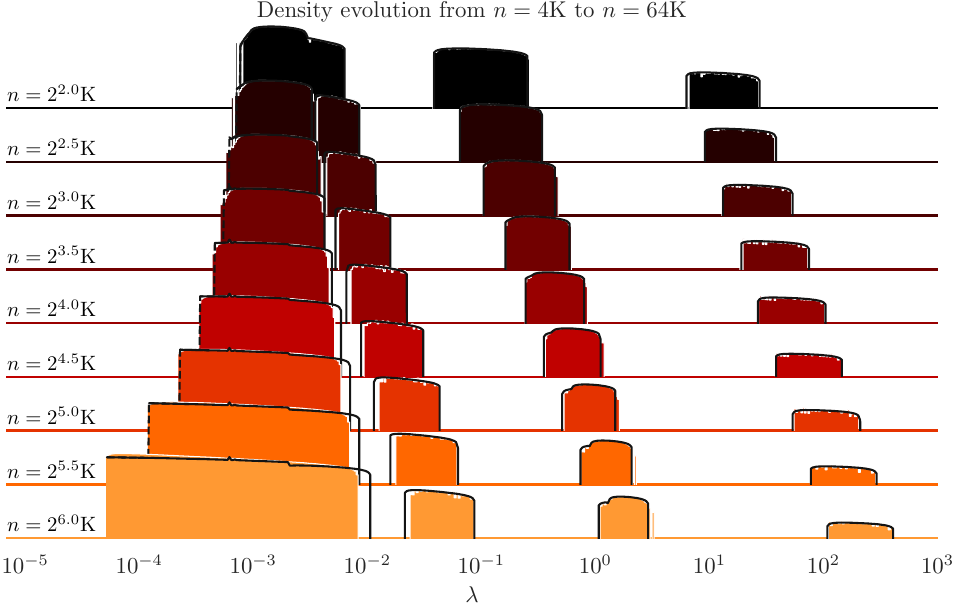}%
    \hfill%
    \includegraphics[width=0.45\textwidth]{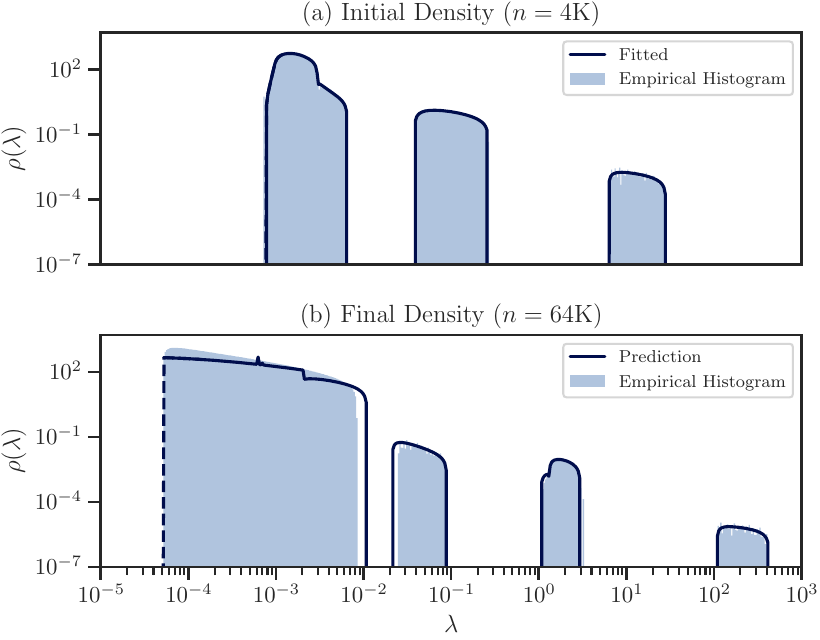}
    \ifjournal
        \vspace{-2mm}
    \fi
    \caption{Free decompression for recovering the ESD of an \(n \times n\) activation matrix \(\tens{A}\) of a random-features diffusion model with \(n = \num{64000}\) random feature weights. \emph{Left:} ridge plot of the FD density evolution on log scale (solid lines), compared with empirical spectral distributions of sampled submatrices (colored histograms). \emph{Right:} (a) ESD of the sampled \(n_0 \times n_0\) matrix \(\tens{A}_0\), with \(n_0 = \num{4000}\), together with the fitted density; and (b) final predicted ESD at \(n = \num{64000}\), compared with the ESD of the original matrix \(\tens{A}\). The \(y\)-axis is logarithmic in all panels; the leftmost narrow bulk is atom-like and stands \emph{ten billion} times higher than the right bulk.}
    \label{fig:diff-ridge}
    \ifjournal
        \vspace{-3mm}
    \fi
\end{figure}

For the purposes of decompression, the full $\num{32768} \times \num{32768}$ matrix was formed, and $\num{4000} \times \num{4000}$ submatrices were sampled and their eigenvalues combined to form an empirical spectral density. After fitting the bivariate polynomial, the Stieltjes transform was expanded under free decompression back to the full matrix, as shown in \Cref{fig:hessian_flow}. In \Cref{sec:hessian-example}, we use the decompressed distributions to accurately track the index of the Hessian subsamples. We remark that while the full Hessian was computed as a baseline for our experiments, subsamples of Hessian matrices too large to practically compute can be readily obtained by masking using standard automatic differentiation tools. 

\begin{wrapfigure}{R}{0.58\textwidth}
    \centering
    \ifjournal
        \vspace{-4mm}
    \fi
    \includegraphics[width=0.58\textwidth]{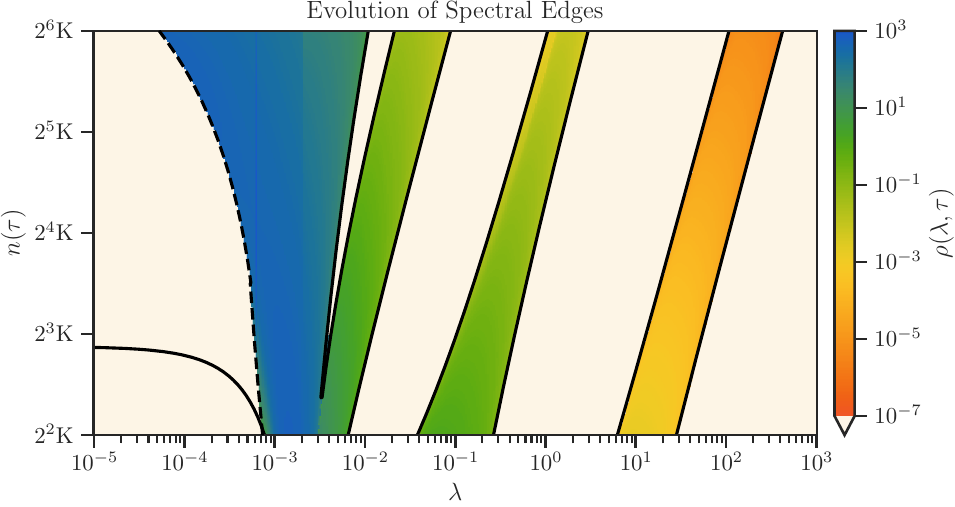}
    \ifjournal
        \vspace{-7mm}
    \fi
    \caption{Evolution of the ESD of the diffusion model. Solid black curves show the predicted spectral edges, including a cusp point. The edge behavior closely matches predictions, except for the leftmost edge. Our finite-size correction of this edge is shown in the dashed black line. The blue region converges to an atom.
    }
    \label{fig:diff-edge}
    \ifjournal
        \vspace{-6mm}
    \fi
\end{wrapfigure}


\paragraph{Diffusion Model.}
Our third example considers the expected activation matrix of a diffusion model studied in \citet{GEORGE-2025}; see \Cref{sec:bg-diffusion} for model and matrix details. Its large-scale limit, studied in \citet{BONNAIRE-2025}, has a highly multi-scale spectrum with several bulks, including a narrow atom-like bulk away from zero, making this a stringent log-scale test case for branch selection and density recovery.

To perform free decompression, we formed a \(\num{64000}\times\num{64000}\) matrix and sampled a \(\num{4000}\times\num{4000}\) submatrix. The eigenvalues of the subsampled matrices were pooled to obtain a more accurate initial ESD. \Cref{fig:diff-ridge} (left) shows ridge plots of the FD evolution up to \(64\)K, overlaid with empirical spectra from subsamples at each scale. The right panels isolate the initial and final densities on the same vertical scale, showing that the final FD prediction matches the empirical spectrum across the full log-scale range.

The bulk-edge behavior is also important: the leftmost spike-like bulk follows atom-like mass dynamics, while its smallest empirical eigenvalue is affected by finite-size extreme-value effects. \Cref{fig:diff-edge} shows the edge evolution and bulk splitting. At these logarithmic scales, the leftmost empirical edge does not follow the macroscopic FD flow, but is captured by the finite-size correction of \Cref{sec:finite-correction}. Overall, this example demonstrates that FDPM can evolve multi-scale, multi-bulk spectra even when branch selection, log-scale density recovery, and finite-size edge effects are all simultaneously active.

\begin{table}[t!]
\caption{Comparison of direct eigendecomposition with free decompression (FD) for the diffusion-model random-feature matrix. The Direct column reports the process time for computing the eigenvalues at each matrix size. In the FD column, the first term is the time to compute eigenvalues of the initial \(2^{2}\mathrm{K}\)-sized submatrix, and the second term is the FD evolution time. Distributional distances compare the empirical spectrum with the FD reconstruction, with \(W_1/L\) denoting the Wasserstein-1 distance on log-eigenvalues normalized by the log-width \(L\) of the plotted spectral range, and MMD denoting the Gaussian-kernel maximum mean discrepancy on log-eigenvalues. Moment errors are relative errors of the first moment and standard deviation. Log edge errors compare empirically detected spectral edges with FD-evolved edges on the log-\(\lambda\) axis, excluding the unstable left edge of the spike-like leftmost bulk. All reported errors are multiplied by \(100\) and shown as percentages.}
\label{tab:diffusion-complexity}
\vspace{1mm}
\centering
\ifjournal
    \begin{adjustbox}{width=\textwidth}
\else
\fi
\begin{tabular}{lrrrrrrrr}
    \toprule
    Size & \multicolumn{2}{c}{Process Time (sec)} & \multicolumn{2}{c}{Distributional Distance} & \multicolumn{2}{c}{Moments Rel. Error} & \multicolumn{2}{c}{Log Edges Error} \\
    \cmidrule(lr){2-3} \cmidrule(lr){4-5} \cmidrule(lr){6-7} \cmidrule(lr){8-9}
    \(n\) & Direct & FD (ours) & \(W_1/L\) & MMD & \(\Delta \mu_1 /\mu_{1}\) & \(\Delta \sigma / \sigma\) & Mean & Max \\
    \midrule
    \(2^{2}\)K & \(     32.2\) & \(32.2 + 0.0\) & \(0.01\%\) & \(0.01\%\) & \(0.01\%\) & \(0.01\%\) & \(0.1\%\) & \(0.2\%\) \\
    \(2^{3}\)K & \(   431.43\) & \(32.2 + 2.3\) & \(0.22\%\) & \(0.08\%\) & \(0.02\%\) & \(0.03\%\) & \(0.2\%\) & \(0.5\%\) \\
    \(2^{4}\)K & \(   3534.5\) & \(32.2 + 2.9\) & \(0.43\%\) & \(0.12\%\) & \(0.01\%\) & \(0.00\%\) & \(0.2\%\) & \(0.2\%\) \\
    \(2^{5}\)K & \(  27056.2\) & \(32.2 + 4.3\) & \(0.70\%\) & \(0.15\%\) & \(0.05\%\) & \(0.08\%\) & \(0.4\%\) & \(0.8\%\) \\
    \(2^{6}\)K & \( 192447.7\) & \(32.2 + 6.8\) & \(2.19\%\) & \(0.53\%\) & \(0.08\%\) & \(0.09\%\) & \(0.4\%\) & \(1.2\%\) \\
    \bottomrule
\end{tabular}
\ifjournal
    \end{adjustbox}
\else
\fi
\ifjournal
    \vspace{-4mm}
\fi
\end{table}


\section{Conclusions}
\label{sec:conclusion}
We have developed a new method of free decompression that generalizes previous work by representing the Stieltjes transform of the spectral density through a bivariate polynomial of arbitrary degree. Under this algebraic ansatz, free decompression becomes an evolution on spectral curves, allowing us to track the physical branch, evolve the density, and derive equations for spectral quantities such as atoms, moments, and bulk edges. This enables FD to handle substantially more challenging spectra than previous approaches, including multi-modal, multi-bulk, and multi-scale distributions arising in realistic ML matrices.

\paragraph{Limitations.}
While the current method expands the range of matrices accessible to free decompression, some important regimes remain challenging. In particular, heavy-tailed spectral distributions are still elusive with the present tools. In addition, although the finite-size correction introduced here improves the behavior of near-singular edges at log scale, further refinement may be needed for highly accurate edge estimation in extreme cases. We hope these results provide a foundation for further development of algebraic spectral-curve methods for large-scale spectral inference.


\newcommand{\acknowledgmenttext}
{
    LH is supported by the Australian Research Council through a Discovery Early Career Researcher Award (DE240100144). MWM acknowledges partial support from DARPA, DOE, NSF, and~ONR. This work was supported in part by the U.S. Department of Energy, Office of Science, Office of Advanced Scientific Computing Research's Applied Mathematics Competitive Portfolios program under Contract No. AC02-05CH11231.
}



\newpage

\ifjournal
    \bibliographystyle{apalike2}
\else
    \bibliographystyle{apalike2}
\fi
{
\bibliography{refs}
}


\newpage

\ifjournal
    \onecolumn
\fi

\appendix
\begin{appendices}

\addtocontents{toc}{\protect\setcounter{tocdepth}{2}}  
\tableofcontents

\section{Notation and Conventions} \label{sec:notation}

\paragraph{Nomenclature.}
We use boldface lowercase letters for vectors, boldface upper case letters for matrices, and normal face letters for scalars, including the components of vectors and matrices. \Cref{tab:nomenclature} summarizes the main symbols and notations used throughout the paper, organized by context.

\begin{table}[hpt!]
\centering

\caption{Common notations used throughout the manuscript.} 
\label{tab:nomenclature}

\begin{tabular}{lll}
    
    \toprule
    Context & Symbol & Description \\
    \midrule
    \multirow{2}{*}{\makecell[l]{Matrices \& \\ Models}}
                  & \(\mathbf{A}, \mathbf{A}_n\) & General \(n \times n\) matrix (often symmetric or Hermitian) \\
                  & \(\mathbf{A}_n^{(k)}\) & \(k \times k\) principal submatrix of \(\mathbf{A}_n\) \\
                  & \(\mathbf{I}\) & The identity matrix \\
                  & \(\lambda_i\) & The \(i\)-th eigenvalue of a matrix \\
                  & \(\operatorname{trace}(\cdot)\) & Standard trace of a matrix \\
                  & \(\operatorname{tr}_n(\cdot)\) & Normalized trace, \(n^{-1} \operatorname{trace}(\cdot)\) \\
                  & \(\mathbf{Q}_n\) & Haar-distributed random orthogonal matrix \\
                  & \(\mathbf{H}, \mathbf{J}\) & Hessian matrix (and subcomponents) and Jacobian matrix \\
                  & \(\mathbf{\Sigma}\) & Population covariance matrix \\
                  & \(\mathbf{W}, \mathbf{X}, \mathbf{Z}\) & Random matrices with i.i.d. entries (e.g., Wigner, Gaussian) \\
                  & \(\mathbf{U}\) & Expected diffusion activation matrix (random Fourier features) \\
                  & \(\lambda, c\) & Aspect ratios (\(p/n\)) or intensities/rates for random matrix models \\
    \addlinespace[1mm]  
    \arrayrulecolor{gray!30}\cline{1-3} 
    \addlinespace[1mm]
    \arrayrulecolor{black}
    Free Probability
                  & \(\rho\) & Absolutely continuous empirical spectral density, \(\mathbb{R} \to \mathbb{R}^{+}\) \\
                  & \(I\) & Compact support interval of density \(\rho\), \(I = [\lambda_{-}, \lambda_{+}] \subsetneq \mathbb{R}\) \\
                  & \(m\) & Stieltjes transform of density, \(m(z) = \int_{\mathbb{R}} \frac{\rho(y)}{y - z} \, \mathrm{d} y\), \(\mathbb{C} \setminus I \to \mathbb{C}\) \\
                  & \(\omega\) & Functional inverse of the Stieltjes transform, \(\omega(m(z)) = z\) \\
                  & \(\mathcal{H}[\rho]\) & Hilbert transform of density, \(\mathcal{H}[\rho](x) = \operatorname{p.v.} \int \frac{\rho(y)}{x - y} \, \mathrm{d} y\), \(\mathbb{R} \to \mathbb{R}\) \\
                  & \(R\) & Voiculescu's R-transform, \(R(z) = \omega(-z) - \frac{1}{z}\) with \(\omega(m(z)) = z\) \\
                  & \(S\) & S-transform, \(S(z) = \frac{1+z}{z}\psi^{-1}(z)\), where \(\psi(z) = \int_{\mathbb{R}} \frac{zy}{1 - zy} \rho(y)\,\mathrm{d}y\) \\
                  & \(\nu\) & Probability measure associated with spectral density \(\rho\) \\
                  & \(\nu^{\boxplus s}\) & \(s\)-fold free additive convolution power of \(\nu\) \\
                  & \(D_c\nu\) & Dilation of measure \(\nu\) by \(c\), i.e., push-forward under \(x \mapsto cx\) \\
                  & \(M(z)\) & Truncated moment-generating function \\
                  & \(\chi(z)\) & Functional inverse of the truncated moment-generating function \\
                  & \(r_k(\nu)\) & \(k\)-th free cumulant of the measure \(\nu\) \\
                  & \(\mu_p, \mu_p(t)\) & \(p\)-th moment of a probability measure (and its time evolution) \\
                  & \(\boxplus\) & Free additive convolution \\
                  & \(\boxtimes\) & Free multiplicative convolution \\
    \addlinespace[1mm]  
    \arrayrulecolor{gray!30}\cline{1-3} 
    \addlinespace[1mm]
    \arrayrulecolor{black}
    \multirow{2}{*}{\makecell[l]{Algebraic\\Curve}}
                  & \(P(z,m)\) & Algebraic equation for the Stieltjes transform, \(P(z,m(z)) = 0\) \\
                  & \(\mathcal{C}\) & Complex 1D spectral curve \(\mathcal{C} = \{(z, m) \in \mathbb{C}^2 \mid P(z, m) = 0\}\) \\
    \addlinespace[1mm]  
    \arrayrulecolor{gray!30}\cline{1-3} 
    \addlinespace[1mm]
    \arrayrulecolor{black}
    \multirow{2}{*}{\makecell[l]{Free\\Decompression}}
                  & \(n\) & Size of the target (large) matrix \\
                  & \(n_s\) & Size of the sampled (small) sub-matrix, \(n_s < n\) \\
                  & \(\tau, t\) & Decompression scale, \(\tau = \frac{n}{n_s}\), \(t = \log(\tau)\) \\
                  & \(\rho_0, m_0\) & Density and Stieltjes transform at \(t=0\) \\
    \addlinespace[1mm]  
    \arrayrulecolor{gray!30}\cline{1-3} 
    \addlinespace[1mm]
    \arrayrulecolor{black}
    Notation     & \(\mathbb{C}^{\pm}\) & Upper (lower) half complex plane \\
                 & \(\Re, \Im\) & Real and imaginary part of complex variable \\
                 & \(\operatorname{p.v.}\) & Cauchy principal value \\
    \bottomrule
\end{tabular}
\end{table}

\paragraph{Color code for plots.}
In our complex-plane plots, colors encode the values of a complex quantity using \emph{domain coloring}: hue represents phase, while brightness represents magnitude; see the plain version in \Cref{fig:domain-coloring-convention}(b). For high-dynamic-range fields, we also use the tiled version shown in \Cref{fig:domain-coloring-convention}(a), in which phase and logarithmic magnitude are periodically reset, following the phase-portrait convention of Wegert~\cite[Section~2.5]{WEGERT-2012}.

\begin{figure}[h!]
    \centering
    \includegraphics[width=\textwidth]{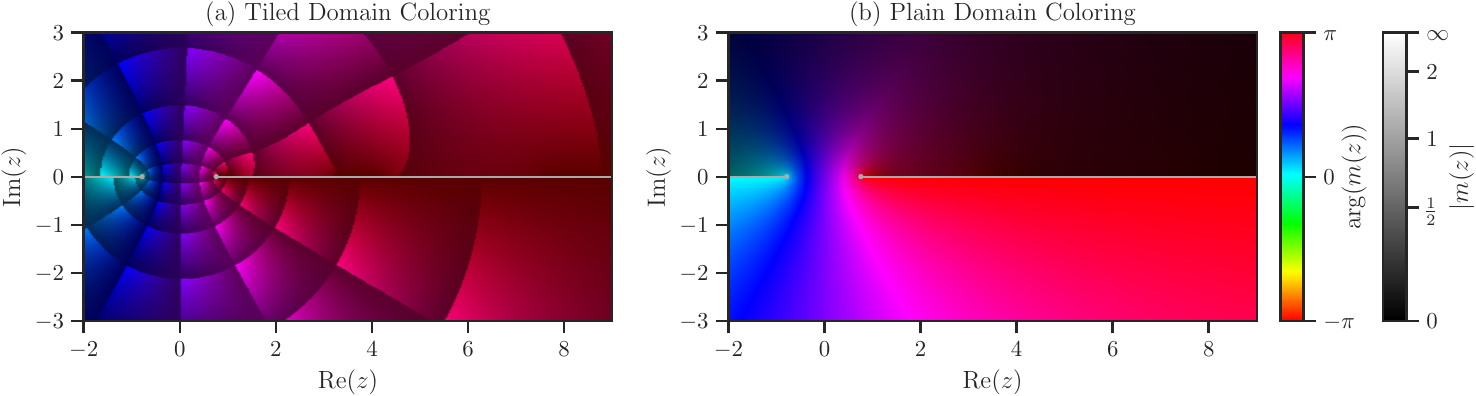}
    \caption{Domain coloring of a Stieltjes-transform branch for the Free L\'{e}vy example in \Cref{fig:fl-branches}. Right: plain domain coloring, with hue encoding \(\arg m(z)\) and brightness encoding \(|m(z)|\). Left: tiled domain coloring, where phase and logarithmic magnitude are periodically reset to make high-dynamic-range structure more visible.}
    \label{fig:domain-coloring-convention}
\end{figure}


\section{Background} \label{sec:bg}


\subsection{Stieltjes Transform and Boundary Values} \label{sec:bg-stieltjes}

Let \(\nu\) be a probability measure on \(\mathbb{R}\) with compact support \(\supp(\nu) \subset \mathbb{R}\). When \(\nu\) is absolutely continuous with respect to Lebesgue measure \(\mathrm{d} x\), we denote its density by \(\rho(x) \coloneqq \nu(\mathrm{d} x) / \mathrm{d} x\) as the Radon--Nikodym derivative of \(\nu\).

The \emph{Stieltjes transform} of \(\nu\) is the function \(m: \mathbb{C} \setminus \supp(\nu) \to \mathbb{C}\) defined by
\begin{equation}
    m(z) \coloneqq \int_{\mathbb{R}} \frac{1}{x - z} \, \nu(\mathrm{d} x), \quad z \in \mathbb{C} \setminus \supp(\nu). \label{eq:def-stieltjes}
\end{equation}
The Stieltjes transform is a \emph{Herglotz} function: \(\Im m(z) > 0\) for \(z \in \mathbb{C}^{+}\). Here, \(\mathbb{C}^{+} \coloneqq \{z \in \mathbb{C} \mid \Im z > 0\}\), and \(\Im w\) denotes the imaginary part for \(w \in \mathbb{C}\). Moreover, \(m\) satisfies the \emph{normalization} at infinity: \(m(z) = -z^{-1} + O(z^{-2})\) as \(\lvert z \rvert \to \infty\).

The function \(m\) admits non-tangential boundary values on the real line, denoted by \(m^{\pm}(x) \coloneqq \lim_{y \to 0^{\pm}} m(x + i y)\) when the limits exist. If \(\nu(\mathrm{d} x) = \rho(x) \, \mathrm{d} x\) in a neighborhood of \(x\), the Sokhotski--Plemelj formula \citep[p.~42]{MUSKHELISHVILI-1992} yields
\begin{equation*}
    m^{\pm}(x) =  -\pi \mathcal{H}\{\rho\}(x) \pm i \pi \rho(x),
\end{equation*}
where \(\mathcal{H}\{\rho\}(x) \coloneqq \frac{1}{\pi} \, \mathrm{p.v.} \int_{\mathbb{R}} \frac{\rho(t)}{x - t} \, \mathrm{d} t\) is the \emph{Hilbert transform} of \(\rho\) and \(\mathrm{p.v.}\) denotes the Cauchy's principal value of a singular integral. The density can be recovered from boundary values by
\begin{equation}
    \rho(x) = \pi^{-1} \Im m^{+}(x) = - \pi^{-1} \Im m^{-}(x),
    \label{eq:inv-stieltjes}
\end{equation}
which is the inversion of the Stieltjes transform.

The jump of \(m\) across the real axis is
\begin{equation*}
    m^{+}(x) - m^{-}(x) = 2 i \pi \rho(x).
\end{equation*}
In particular, for \(x \notin \supp(\rho)\), we have \(\rho(x) = 0\), hence the boundary values coincide, and \(m\) is holomorphic across the real axis at \(x\). In contrast, for \(x \in \supp(\rho)\) the boundary values differ, which manifests as a branch cut in the boundary values of \(m\) across \(\supp(\rho)\). An example of a branch cut is shown in \Cref{fig:mp-branches}(a), illustrated by the white line segment in the center of the plot. In the remainder, we denote the support of the absolutely continuous part of \(\mu\) by \(I \coloneqq \supp(\rho)\), and we will typically assume that it consists of finitely many disjoint compact intervals,
\begin{equation*}
    I = \bigcup_{j=1}^{k} I_j, \qquad I_j \coloneqq [a_j, b_j],
\end{equation*}
where \(k \geq 1\), and \(a_j < b_j < a_{j+1}\). We refer to \(k\) as the number of spectrum bulks, which is also the number of branch cuts of \(m\). Also, any atomic component of \(\mu\) produces isolated poles of \(m\) on the real axis rather than branch cuts, and will be treated separately when relevant.


\subsection{Analytic Continuation Beyond the Physical Branch} 
\label{sec:bg-continuation}

Several applications, and in particular the method of characteristics used in free decompression \citep{AMELI-2025b}, require evaluating the Stieltjes field \(m(z)\) along trajectories in the complex \(z\)-plane that cross the cuts \(I\). Since the Stieltjes transform \(m(z)\) is defined as a single-valued holomorphic function only on \(\mathbb{C} \setminus I\), such evaluations necessarily invoke analytic continuation across \(I\).  It is therefore natural to regard \(m\) as belonging to a larger object: a multi-valued analytic function of \(z\), whose single-valued realizations correspond to branches (or sheets) of a Riemann surface lying over the \(z\)-plane.

We denote the Stieltjes transform defined by \eqref{eq:def-stieltjes} by \(m_1\) and refer to it as the \emph{physical branch}, characterized uniquely by the Herglotz property together with the normalization at infinity. Other branches, when they exist, are denoted by \(m_i\), \(i > 1\), and are referred to as \emph{non-physical}.

Crossing a cut exchanges boundary traces of branches. Specifically, for \(x\) in the interior of an interval \(I_j\), analytic continuation across \(I_j\) identifies the upper trace of one branch with the lower trace of another, and vice versa. Equivalently, two sheets are glued along \(I_j\) by matching boundary values with opposite orientations, so that a trajectory approaching \(I_j\) from \(\mathbb{C}^{+}\) and crossing the cut may emerge on a different branch on the other side; see \Cref{fig:mp-branches} for an illustration.

In our setting, the physical branch \(m_1\) is assumed known, either because \(\mu\) is a model law or because \(\rho\) is estimated from an empirical spectrum and \(m_1\) is computed by \eqref{eq:def-stieltjes}. In contrast, the continuation of \(m_1\) beyond the physical domain is not determined uniquely by these data. In particular, the phrase ``the other branch'' is generally ambiguous: beyond matching boundary traces on \(I\), there is no canonical choice of a global holomorphic continuation, and the ambiguity becomes more pronounced for multi-bulk spectra, where continuations around different endpoints \(\{a_j, b_j\}_{j=1}^{k}\) can lead to distinct sheets.

A convenient way to regularize this ill-posedness is to restrict attention to model classes in which the analytic continuation of \(m\) has only finitely many branches. This reduces continuation from an infinite-dimensional ambiguity to a finite-sheeted branched covering of the \(z\)-plane. We adopt an algebraic model class, which we describe next.


\section{Algebraic Stieltjes Transforms} \label{sec:bg-alg-stieltjes}

Our structural assumption is that the Stieltjes transform \(m\) is \emph{algebraic}: there exists a nonzero bivariate polynomial \(P\) such that
\begin{equation}
    P(z, m) = 0. \label{eq:poly-appendix}
\end{equation}
We write \(s \coloneqq \deg_{m}(P)\) for the degree of \(P\) with respect to its second argument. For each \(z\) away from the branch locus, the algebraic equation \(P(z, m) = 0\) admits \(s\) (not necessarily distinct) solutions in \(m\), and the corresponding analytic branches are denoted by \(\{m_i(z)\}_{i=1}^{s}\). In this notation, \(m_1\) is the physical branch, singled out among the \(s\) branches by the Herglotz property together with the normalization at infinity, while the remaining branches are non-physical.

\emph{Branch points} occur at those \(z_{\ast}\) for which two or more roots of \(P\) collide. Equivalently, there exists \(m_{\ast} \in \mathbb{C}\) such that
\begin{equation}
	P(z_{\ast}, m_{\ast}) = 0, \qquad \partial_{m} P(z_{\ast}, m_{\ast}) = 0. \label{eq:branch-condition}
\end{equation}
Eliminating \(m_{\ast}\) yields the discriminant \(\Delta(z) \coloneqq \mathrm{Disc}_m P(z, \cdot) \in \mathbb{R}[z]\), whose zeros identify the branch locus of the multi-valued function defined implicitly by \eqref{eq:poly-appendix}. In Stieltjes settings with conjugation symmetry, the branch points \(z_{\ast}\) on the real line include the spectral edges \(\{a_j, b_j\}_{j=1}^{k}\) of the cut set \(I\), though the algebraic model may admit additional branch points away from the real axis.

A \emph{branch cut} \(I\) is a choice of curves connecting branch points such that each branch \(m_i\) becomes single-valued and holomorphic on \(\mathbb{C} \setminus I\). Once \(I\) is fixed, each branch admits non-tangential boundary values on \(I\), denoted by \(m_i^{\pm}\). Across a cut component, analytic continuation can only permute branches. It is therefore convenient to collect branches into a vector,
\begin{equation*}
	\vect{m}(z) \coloneqq [m_1(z), m_2(z), \dots, m_s(z)]^{\intercal}.
\end{equation*}
For \(x \in I_j^{\circ}\), analytic continuation across \(I_j\) permutes the boundary traces, hence there exists a permutation matrix \(\gtens{\Pi}_j \in \{0, 1\}^{s \times s}\) such that
\begin{equation*}
	\vect{m}^{+}(x) = \gtens{\Pi}_j \, \vect{m}^{-}(x), \qquad x \in I_j^{\circ},
\end{equation*}
where \(\vect{m}^{\pm}(x)\) denote non-tangential boundary values taken from \(\mathbb{C}^{\pm}\). The collection \(\{\gtens{\Pi}_j\}_{j=1}^{k}\) encodes how the \(s\) sheets are glued along the real cuts and provides a finite description of the continuation rules needed to follow trajectories across \(I\).

In \Cref{sec:implicit}, we describe the construction of a finite-sheet continuation from samples of the physical branch \(m_1\) by fitting \eqref{eq:poly-appendix}. Before that, in \Cref{sec:moments}, we describe moment-based necessary conditions for the feasibility of such an algebraic representation.


\subsection{Moments from the Algebraic Relation} \label{sec:moments}

One of the biggest challenges with the algebraic approach to performing free decompression is the identification of the root $m(z)$ in the algebraic relation that corresponds to the principal branch $m_1(z)$ of the Stieltjes transform we seek. This identification can be performed in one of two ways. The first is by homotopy, where one takes advantage of continuity and identifies the correct root at time $t + \delta t$ using knowledge of the correct root at time $t$. This approach performs well when the roots are distinct, but can fail in regimes where the roots become close or ``cross over''. The second approach involves the computation of \emph{moments}; while this is less direct, it is arguably more concrete and provides a powerful check to ensure the correct solution is obtained.

Let \(\mu_p\) denote the \(p\)-th moment of the probability measure $\nu$
\begin{equation}
    \mu_p \coloneqq \int_{\mathbb{R}} x^p \nu(\mathrm{d} x), \qquad p \in \mathbb{N}_{0}, \label{eq:def-moments}
\end{equation}
where \(\mu_0 = 1\) corresponds to unit mass normalization. If $m_1$ denotes the Stieltjes transform of $\nu$, then for sufficiently large $|z|$, $m_1$ satisfies the Laurent expansion
\begin{equation}
    \label{eq:MomentExpansion}
    m_1(z)=\int_{\mathbb{R}}\frac{1}{x-z}\nu(\mathrm{d}x)=-\frac{1}{z}\int_{\mathbb{R}}\frac{1}{1-\frac{x}{z}}\nu(\mathrm{d}x)=-\frac{1}{z}\sum_{p=0}^{\infty}\frac{1}{z^{p}}\int_{\mathbb{R}} x^{p}\nu(\mathrm{d}x)=-\sum_{p=0}^{\infty}\frac{\mu_{p}}{z^{p+1}},
\end{equation}
providing a concrete characterisation of the Stieltjes transform in terms of its moments. 

Suppose that a polynomial parameterized by an index set $\mathcal{A} \subset \mathbb{N}_0^2$, $P(z,m)=\sum_{(i,j) \in \mathcal{A}} c_{ij}z^{i}m^{j}$, is given. Our objective is to find a solution $m(z)$ satisfying $P(z,m(z))=0$ and an expansion of the form $m(z)=\sum_{k=0}^{\infty}\vartheta_{k}z^{-k-1}$ for all sufficiently large $|z|$. Let $w=1/z$ and $S(w)=\sum_{k=0}^{\infty}\vartheta_{k}w^{k}$ so that $m(z)=wS(w)$. If $m(z)$ corresponds to the principal branch $m_1(z)$ with moments $\mu_{k}$, then $\vartheta_{0}=-1$ and $\mu_{k}=\vartheta_{k}/\vartheta_{0}=-\vartheta_{k}$ for each $k=1,2,\dots$. 

\begin{lemma}
    \label{lem:PolyNu0}
    There exists a function $m(z)$ satisfying $P(z,m(z))=0$ and $m(z)=\sum_{k=0}^\infty \vartheta_k z^{-k-1}$ for sufficiently large $|z|$ if and only if $\vartheta_{0}$ is a root of the polynomial
    \[
    L(\vartheta)=\sum_{\substack{(i, j) \in \mathcal{A} \\ i-j=e_{\mathrm{max}}}}c_{ij}\vartheta^{j},\qquad e_{\mathrm{max}}=\max\{i-j: (i,j) \in \mathcal{A},\; c_{ij}\neq0\}.
    \]
\end{lemma}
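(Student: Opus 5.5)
The plan is to change variables at infinity and read off the lowest-order term. Put $w=1/z$ and $m=wS(w)$, so that
\[
P(1/w,\,wS)=\sum_{(i,j)\in\mathcal{A}} c_{ij}\,w^{\,j-i}S^{j}.
\]
The most negative power of $w$ that occurs is $w^{-e_{\max}}$. Multiplying through by $w^{e_{\max}}$ gives a genuine polynomial
\[
Q(w,S)\coloneqq w^{e_{\max}}P(1/w,wS)=\sum_{(i,j)\in\mathcal{A}}c_{ij}\,w^{\,e_{\max}-(i-j)}S^{j}\in\mathbb{C}[w,S],
\]
whose exponents in $w$ are all nonnegative, and $Q(0,S)=L(S)$. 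For $w\neq0$, the equation $P(z,m(z))=0$ is equivalent to $Q(w,S(w))=0$. The whole argument rests on this identity.

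For the ``only if'' direction, suppose $m(z)=\sum_k\vartheta_kz^{-k-1}$ satisfies $P(z,m(z))=0$ for large $|z|$. Then $S(w)=\sum_k\vartheta_kw^k$ is analytic near $w=0$ and satisfies $Q(w,S(w))=0$ on a punctured disc. By continuity this also holds at $w=0$, which gives $L(\vartheta_0)=Q(0,\vartheta_0)=0$. Equivalently, one collects the coefficient of $z^{e_{\max}}$ in the Laurent expansion of $P(z,m(z))$; it is $\sum_{i-j=e_{\max}}c_{ij}\vartheta_0^j$ and must vanish.

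For the ``if'' direction, let $L(\vartheta_0)=0$. The first step is to produce a branch of $Q(w,\cdot)=0$ that tends to $\vartheta_0$ as $w\to0$. The roots of $Q(w,\cdot)$ depend continuously on $w$, and by the Newton--Puiseux theorem they are given by convergent Puiseux series in $w$. Every root of $Q(0,\cdot)=L$ is the limit of at least one such branch. This requires the roots to be counted with multiplicity; any roots escaping to infinity correspond to a drop in degree from $\deg_SQ$ to $\deg L$, and those do not affect finite roots such as $\vartheta_0$. So there is a solution $S(w)=\vartheta_0+\sum_{k\ge1}\beta_k w^{k/q}$ with $Q(w,S(w))=0$, and setting $m(z)=S(1/z)/z$ gives $P(z,m(z))=0$ for large $|z|$ with leading coefficient $\vartheta_0$.

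I would then handle the two kinds of root of $L$ separately.
\begin{itemize}
\item When $\vartheta_0$ is a simple root of $L$, we have $\partial_SQ(0,\vartheta_0)=L'(\vartheta_0)\neq0$. The implicit function theorem then gives $q=1$ and an honest power series, and the coefficients $\vartheta_k$ are determined recursively by linear equations with pivot $L'(\vartheta_0)$.
\item When $\vartheta_0$ is a multiple root, I would iterate the same construction. Substitute $S=\vartheta_0+wS_1$, divide by the largest power of $w$ that factors out, and obtain a new polynomial $Q_1$ whose $w=0$ restriction $L_1$ plays the role of $L$ for $\vartheta_1$. Repeat this at each stage.
\end{itemize}

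I expect the main obstacle to be this multiple-root case, namely showing that the expansion has the stated integer-power form $\sum_k\vartheta_kz^{-k-1}$ rather than a genuinely fractional Puiseux form. I would first try to show that each $L_k$ has a root. At any stage where a root of $L_k$ is simple, the implicit function theorem terminates the recursion. If the recursion never terminates this way, I would use the Newton-polygon edge of slope $1$ at $\vartheta_0$ to control the exponents. The honest fallback is to read the ``if'' direction at the level at which the lemma is used, as a characterization of admissible leading coefficients: $\vartheta_0$ is the leading coefficient of some algebraic branch $m$ with $m(z)\sim\vartheta_0 z^{-1}$. This is exactly what is needed to impose the normalization $\vartheta_0=-1$, which yields the necessary condition $\sum_{i-j=e_{\max}}c_{ij}(-1)^j=0$ quoted in \Cref{sec:quantities}.
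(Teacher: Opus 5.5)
Your ``only if'' argument is the paper's own: substitute $w=1/z$ and $m=wS(w)$, multiply by $w^{e_{\max}}$, and let $w\to0$. Your simple-root step via the implicit function theorem is correct and more careful than the paper, which handles the converse with a single appeal to the Newton--Puiseux theorem. You are right that Newton--Puiseux only yields a fractional series. However, your proposed repair for a multiple root cannot succeed, because the ``if'' direction is false in that case.

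Take $P(z,m)=z^3m^2+2z^2m+z-1=z(zm+1)^2-1$. It is irreducible, since its discriminant in $m$ is $4z^3$, which is not a square, and its coefficients have trivial content. Here $e_{\max}=1$, $L(\vartheta)=(\vartheta+1)^2$ and $Q(w,S)=(S+1)^2-w$. The only solutions are $m(z)=-z^{-1}\pm z^{-3/2}$. These are distinct for large $|z|$ and are exchanged by monodromy around $z=\infty$. So no single-valued $m(z)=\sum_k\vartheta_k z^{-k-1}$ satisfies $P(z,m(z))=0$, even though $L(-1)=0$.

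Your iteration breaks at the first step. Substituting $S=-1+wS_1$ gives $Q=w(wS_1^2-1)$, so $Q_1=wS_1^2-1$ and $L_1\equiv-1$ has no root. The step ``each $L_k$ has a root'' therefore fails immediately. The Newton polygon of $Q(w,\vartheta_0+T)=T^2-w$ forces $T\sim w^{1/2}$, not an integer power. In general, a root of multiplicity $r$ can give a first correction of order $w^{1/r}$, as $(S-\vartheta_0)^r-w$ shows.

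What can actually be proved is your fallback, stated precisely:
\begin{itemize}
\item the ``only if'' direction as written;
\item the ``if'' direction whenever $L'(\vartheta_0)\neq0$, by your implicit-function-theorem argument; this is exactly the hypothesis under which \Cref{prop:AlgebraicMomentFormula} recovers the $\vartheta_k$;
\item for a general root, only the existence of a Puiseux branch $m(z)=\vartheta_0z^{-1}+o(z^{-1})$.
\end{itemize}
The paper's proof establishes nothing beyond this either. The normalization condition $\sum_{i-j=e_{\max}}(-1)^jc_{ij}=0$ uses only the ``only if'' direction, so downstream uses are unaffected. The lemma's ``if and only if'' should be restricted to simple roots of $L$, or weakened to the Puiseux statement.
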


\begin{proof}
    Substituting $m(z) = wS(w)$ into $P(z,m(z))$ and multiplying by $w^{e_{\mathrm{max}}}$ gives
    \begin{equation*}
        \sum_{(i,j) \in \mathcal{A}} c_{ij} w^{e_{\mathrm{max}}-(i-j)} S(w)^j = 0.
    \end{equation*}
    Taking $w \to 0^+$ gives $L(\vartheta_0) = 0$, as required. The existence of the root also guarantees the solution $m(z)$ has the corresponding series expansion by the Newton--Puiseux Theorem. 
\end{proof}

\begin{example*}
    As an example, consider the polynomial $P(z,m)=\lambda\sigma^{2}zm^{2}+(z-\sigma^{2}(1-\lambda))m+1$ for the Marchenko--Pastur distribution with scale $\sigma^{2}$ and aspect ratio $\lambda$. In this case, $e_{\mathrm{max}}=0$ and $L(\vartheta)=1+\vartheta$, which does indeed have a single root at $\vartheta=-1$. 
\end{example*}

In many practical cases, a root of $P$ will only \emph{approximate} a Stieltjes transform, and so it may only be the case that $\vartheta_0 \approx -1$. Fortunately, $\vartheta_0$ can be estimated by finding the closest root of $L(\vartheta)$ to $-1$. As long as the remaining coefficients $\vartheta_k$ can be recovered, the ``moments'' of the corresponding Stieltjes transform can be estimated by $\mu_k \approx \vartheta_k / \vartheta_0$. Recovery of $\vartheta_k$ is trickier, but can be achieved using \Cref{prop:AlgebraicMomentFormula} under the condition that $L'(\vartheta_0) \neq 0$. 

\begin{proposition}
    \label{prop:AlgebraicMomentFormula}
    The coefficients $\vartheta_k$ can be computed using the recurrence relations
    \[
    d_{j,k} = \begin{cases}
    \vartheta_0^j &\quad\text{if } j \geq 0,\;k=0,\\
    0 & \quad \text{if }j=0,\; k \geq 1,\\
    \sum_{l=0}^k \vartheta_l d_{j-1,k-l} & \quad \text{otherwise,}
    \end{cases}\quad 
    \tilde{d}_{j,k} = \begin{cases}
    1 &\quad\text{if } j =k=0,\\
    0 & \quad \text{if }j=0,\; k \geq 1,\\
    \sum_{l=0}^{k-1} \vartheta_l \tilde{d}_{j-1,k-l} & \quad \text{otherwise.}
    \end{cases}
    \]
    \[
    \vartheta_k = -\frac{1}{L'(\vartheta_0)}\left(\sum_{\substack{(i, j) \in \mathcal{A} \\ i-j=e_{\mathrm{max}}}}c_{ij}\tilde{d}_{j,k}
    + \sum_{p<e_{\mathrm{max}}}\sum_{\substack{(i, j) \in \mathcal{A} \\ i-j=p}}c_{ij}d_{j,k-e_{\mathrm{max}}+p}\right),
    \; 
    L'(\vartheta) = \sum_{\substack{(i, j) \in \mathcal{A} \\ i-j=e_{\mathrm{max}}}} j c_{ij} \vartheta^{j-1}.
    \]
\end{proposition}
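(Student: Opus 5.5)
The plan is to substitute $m(z)=wS(w)$ with $w=1/z$ into $P$, exactly as in the proof of \Cref{lem:PolyNu0}, and match coefficients of $w^k$ in the identity
\[
\sum_{(i,j)\in\mathcal{A}} c_{ij}\, w^{e_{\max}-(i-j)} S(w)^j = 0 .
\]
Write $S(w)^j=\sum_{k\ge0} d_{j,k}w^k$. Then the recurrence for $d_{j,k}$ in the statement is the Cauchy product $S^j=S\cdot S^{j-1}$, with $d_{0,k}=\delta_{k0}$ and $d_{j,0}=\vartheta_0^j$. I will group the terms by the exponent $p=i-j\le e_{\max}$, which shifts $w$ by $e_{\max}-p\ge 0$. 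Taking the coefficient of $w^k$ then gives
\[
\sum_{i-j=e_{\max}} c_{ij}\, d_{j,k} \;+\; \sum_{p<e_{\max}}\ \sum_{i-j=p} c_{ij}\, d_{j,\,k-e_{\max}+p} = 0,
\]
with the convention that $d_{j,\ell}=0$ for $\ell<0$. At $k=0$ this reduces to $L(\vartheta_0)=0$, which recovers \Cref{lem:PolyNu0}.

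The key step is to isolate $\vartheta_k$ inside $d_{j,k}$ for $k\ge1$. Expanding the product $S^j$, the coefficient of $w^k$ contains $\vartheta_k$ only through the terms where one factor contributes $\vartheta_k w^k$ and the others contribute $\vartheta_0$. This gives the decomposition
\[
d_{j,k}= j\vartheta_0^{j-1}\vartheta_k + \tilde d_{j,k},
\]
where $\tilde d_{j,k}$ depends only on $\vartheta_0,\dots,\vartheta_{k-1}$. I will prove this by induction on $j$ using $d_{j,k}=\sum_{l=0}^k\vartheta_l d_{j-1,k-l}$. In that sum, the $l=k$ term is $\vartheta_k\vartheta_0^{j-1}$, and the $l=0$ term is $\vartheta_0 d_{j-1,k}$, which by the induction hypothesis contributes $\vartheta_0(j-1)\vartheta_0^{j-2}\vartheta_k$. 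Together these give $j\vartheta_0^{j-1}\vartheta_k$.

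The main obstacle is reconciling this with the stated recurrence for $\tilde d$. That recurrence sums only over $l\le k-1$ and has base case $\tilde d_{j,0}$ unspecified for $j\ge1$. I need to check that it reproduces exactly the $\vartheta_k$-free remainder, presumably by carefully tracking what happens at $k=0$ under the convention used. If the indexing turns out to be slightly off, I would instead define $\tilde d_{j,k}\coloneqq d_{j,k}-j\vartheta_0^{j-1}\vartheta_k$ directly and verify its recurrence by induction on $j$.

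In the second sum, the index $k-e_{\max}+p$ is at most $k-1$ because $p<e_{\max}$. Those terms therefore involve only $\vartheta_0,\dots,\vartheta_{k-1}$, and $d_{j,\cdot}$ there can be taken as already computed. Substituting the decomposition into the coefficient identity gives
\[
L'(\vartheta_0)\vartheta_k+\sum_{i-j=e_{\max}}c_{ij}\tilde d_{j,k}+\sum_{p<e_{\max}}\sum_{i-j=p}c_{ij}d_{j,k-e_{\max}+p}=0,
\]
where the $L'$ factor arises because $\sum_{i-j=e_{\max}} c_{ij}\, j\vartheta_0^{j-1}=L'(\vartheta_0)$. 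Dividing by $L'(\vartheta_0)\neq0$ yields the stated formula. This also shows inductively that the coefficients are uniquely determined, consistent with the implicit function theorem at a simple root of $L$.
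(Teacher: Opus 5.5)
Your route is the paper's. You substitute $m=wS(w)$, take the coefficient of $w^k$, split $d_{j,k}=j\vartheta_0^{j-1}\vartheta_k+\tilde d_{j,k}$, and solve using $L'(\vartheta_0)\neq0$. The paper reads the split off the multinomial expansion of $S(w)^j$ instead of inducting on $j$, but that is the same fact.

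The step you left open should be settled in favor of your fallback, because the printed $\tilde d$ recurrence does not compute the $\vartheta_k$-free remainder. The trouble is not at $k=0$, since those values never enter; it is in the terms with $l\ge1$.

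\begin{itemize}
\item The rule only involves $\tilde d_{j-1,k-l}$ with $k-l\ge1$, and $\tilde d_{0,m}=0$ for $m\ge1$. Induction on $j$ then shows that, as printed, it returns $\tilde d_{j,k}=0$ for all $j,k\ge1$.
\item Already $d_{2,2}-2\vartheta_0\vartheta_2=\vartheta_1^2$, which is not zero in general.
\item The paper's proof makes the same slip. It defines $\tilde d_{j,k}$ as the $w^k$ coefficient of $S_k(w)^j$, whose truncation depends on $k$. For $l\ge1$ it then uses $\tilde d_{j-1,k-l}$, which is truncated at level $k-l$, where the untruncated $d_{j-1,k-l}$ is needed.
\end{itemize}

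The correct recursion, which is exactly what your induction produces, is
\[
\tilde d_{0,k}=0\quad(k\ge1),\qquad \tilde d_{j,k}=\vartheta_0\,\tilde d_{j-1,k}+\sum_{l=1}^{k-1}\vartheta_l\,d_{j-1,k-l}\quad(j,k\ge1).
\]
With $\tilde d$ defined this way, your proof is complete.

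The printed rule is adequate when every monomial on the edge $i-j=e_{\mathrm{max}}$ has $j\le1$, as for Marchenko--Pastur, since then only $\tilde d_{0,k}=\tilde d_{1,k}=0$ enter. Otherwise it can fail. For example, $P=(zm+2)(m^2+zm+1)$ annihilates the standard semicircle transform and has $L(\vartheta)=(\vartheta+1)(\vartheta+2)$ with $L'(-1)=1$. For this $P$ the printed recurrence returns $\vartheta_4=-1$ instead of the correct $-2$.
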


\begin{proof}
    Let $d_{j,k}$ denote the $k$-th power series coefficient of $S(w)^j$ so that
    \[
    S(w)^j = \sum_k d_{j,k} w^k.
    \]
    It is clear that $d_{j,0} = \vartheta_0^j$ and $d_{0,k} = 0$ if $k \geq 1$. Furthermore,
    \begin{equation}
    \label{eq:MomentMultinomial}
    d_{j,k}=\sum_{\substack{t_{1},\dots,t_{j}\geq0\\
    t_{1}+\cdots+t_{j}=k
    }
    }\vartheta_{t_{1}}\cdots\vartheta_{t_{j}}.
    \end{equation}
    Next, we observe that since $S(w)^j = S(w) \cdot S(w)^{j-1}$, using the Cauchy product,
    \[
    S(w)^j = \left(\sum_{k}\vartheta_{k}w^{k}\right)
    \left(\sum_{k}d_{j-1,k}w^{k}\right)
    = \sum_k \left(\sum_{l=0}^k \vartheta_l d_{j-1,k-l}\right) w^k,
    \]
    and so
    \begin{equation}
    d_{j,k}=\sum_{l=0}^{k}\vartheta_{l}d_{j-1,k-l}.
    \end{equation}
    To separate the effect of $\vartheta_k$, let us consider the quantity $d_{j,k}$ in the case where $\vartheta_k = 0$. Let $\tilde{d}_{j,k}$ be defined by
    \[
    S_k(w) = \sum_{l=0}^{k-1} \vartheta_l w^l,
    \qquad
    S_k(w)^j = \sum_l \tilde{d}_{j,l} w^l.
    \]
    Similar reasoning implies that $\tilde{d}_{0,0} = 1$, $\tilde{d}_{0,k} = 0$ for $k \geq 1$ and
    \begin{equation}
    \tilde{d}_{j,k} = \sum_{l=0}^{k-1} \vartheta_l \tilde{d}_{j-1,k-l}.
    \end{equation}
    From \eqref{eq:MomentMultinomial}, we find that 
    \begin{equation}
    \label{eq:DTildeDifference}
    d_{j,k} - \tilde{d}_{j,k} = j \vartheta_0^{j-1} \vartheta_k. 
    \end{equation}
    Now, recalling from the proof of \Cref{lem:PolyNu0} that
    \[
    \sum_{(i,j) \in \mathcal{A}} c_{ij} w^{e_{\mathrm{max}} + j - i} S(w)^j = 0,
    \]
    there is
    \[
    \sum_{(i,j) \in \mathcal{A}} \sum_k c_{ij} d_{j,k} w^{e_{\mathrm{max}} + j - i + k} = 0.
    \]
    Rearranging gives
    \[
    \sum_k \sum_{p \leq e_{\mathrm{max}}} \sum_{\substack{(i, j) \in \mathcal{A} \\ i - j = p}} c_{ij} d_{j,k-e_{\mathrm{max}}+p} w^k = 0,
    \]
    and therefore
    \begin{equation}
    \label{eq:MomentsGlobal}
    \sum_{p \leq e_{\mathrm{max}}} \sum_{\substack{(i, j) \in \mathcal{A} \\ i-j=p}} c_{ij} d_{j,k-e_{\mathrm{max}}+p} = 0.
    \end{equation}
    Substituting \eqref{eq:DTildeDifference} into \eqref{eq:MomentsGlobal} gives
    \[
    \sum_{\substack{(i, j) \in \mathcal{A} \\ i-j=e_{\mathrm{max}}}}c_{ij}\left(\tilde{d}_{j,k}
    + j\vartheta_{0}^{j-1}\vartheta_{k}\right)
    + \sum_{p<e_{\mathrm{max}}}\sum_{\substack{(i, j) \in \mathcal{A} \\ i-j=p}}c_{ij}d_{j,k-e_{\mathrm{max}}+p}=0,
    \]
    and therefore
    \[
    \vartheta_{k}L'(\vartheta_{0}) + \sum_{\substack{(i, j) \in \mathcal{A} \\ i-j=e_{\mathrm{max}}}}c_{ij}\tilde{d}_{j,k}
    + \sum_{p<e_{\mathrm{max}}}\sum_{\substack{(i, j) \in \mathcal{A} \\ i-j=p}}c_{ij}d_{j,k-e_{\mathrm{max}}+p}=0.
    \]
    Rearranging for $\vartheta_k$ yields the result.
\end{proof}


\section{Fitting the Spectral Curve} \label{sec:implicit}

In this section, we describe a general-purpose procedure for reconstructing non-physical branches of an algebraic Stieltjes transform from pointwise evaluations of the physical branch \(m_1\). The basic idea is to fit an implicit algebraic relation \(P(z, m) = 0\) as in \eqref{eq:poly-appendix} from samples \(\{(z_\ell, m_1(z_\ell))\}_{\ell=1}^n\). Once such a relation is available, the candidate branch values at a given \(z \in \mathbb{C} \setminus I\) are obtained as the \(s\) roots of the univariate polynomial \(m \mapsto P(z, m)\), and non-physical branches can be accessed and tracked by analytic continuation away from the branch locus.

Throughout, we assume that \(m\) is algebraic of degree \(s\) in the sense that there exists a nonzero bivariate polynomial \(P \in \mathbb{C}[z, m]\), the ring of bivariate polynomials in \((z, m)\) with complex coefficients, with \(\deg_m P = s\) such that \eqref{eq:poly-appendix} holds for \(m = m_1\) on \(\mathbb{C} \setminus I\). Among all such relations, it is natural to seek a minimal one in the sense of having minimal degree in \(m\), equivalently an irreducible \(P\) that does not contain spurious factors producing extraneous roots. Before turning to the numerical fitting procedure, we record structural constraints implied by the Stieltjes symmetry of \(m_1\), starting with the fact that \eqref{eq:poly-appendix} admits a real-coefficient representative.

\begin{lemma} \label{lem:poly-real}
    Assume that \(P \in \mathbb{C}[z, m]\) satisfies \eqref{eq:poly-appendix} for \(m = m_1\) and is irreducible in \(\mathbb{C}[z, m]\). Then there exists a nonzero scalar \(\alpha \in \mathbb{C}\) such that \(\alpha P \in \mathbb{R}[z, m]\). In particular, after an irrelevant rescaling of \(P\), one may assume \(P \in \mathbb{R}[z, m]\).
\end{lemma}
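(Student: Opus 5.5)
The plan is to exploit the reflection symmetry of the Stieltjes transform, \(m_1(\bar z) = \overline{m_1(z)}\), which follows directly from \eqref{eq:def-stieltjes} because \(\nu\) is a real measure. Define the conjugate polynomial \(\bar P(z,m) \coloneqq \sum_{i,j} \overline{c_{ij}}\, z^i m^j\). For any \(z \in \mathbb{C}\setminus I\) outside the finite exceptional set, we have \(\bar z \in \mathbb{C}\setminus I\) as well. Conjugating the identity \(P(\bar z, m_1(\bar z)) = 0\) gives
\[
\bar P\bigl(z, \overline{m_1(\bar z)}\bigr) = 0, \qquad \text{that is,} \qquad \bar P(z, m_1(z)) = 0 .
\]
So \(\bar P\) also annihilates \(m_1\) on an open set.

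Next I would argue that \(P\) is, up to a scalar, the unique irreducible polynomial vanishing on the graph of \(m_1\). The graph \(\{(z,m_1(z))\}\) over a connected open subset of \(\mathbb{C}\setminus I\) is contained in the zero set of both \(P\) and \(\bar P\). Note that \(\bar P\) is irreducible, since any factorization of \(\bar P\) conjugates to one of \(P\).

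Suppose \(P\) does not divide \(\bar P\). Because \(P\) is irreducible, \(P\) and \(\bar P\) then have no common factor. The resultant \(\mathrm{Res}_m(P,\bar P)(z)\) is then a nonzero polynomial in \(z\), so it has only finitely many zeros. However, it vanishes at every \(z\) where \(m_1(z)\) is a common root, which is an open set. This is a contradiction, so \(P \mid \bar P\). Two cases need a separate check: if \(\deg_m P = 0\) then \(P\) would be a polynomial in \(z\) alone vanishing on an open set, hence zero, which is excluded. We also need \(\deg_m P\) to equal \(\deg_m \bar P\), which holds since conjugating the coefficients preserves degrees. Comparing total degrees then gives \(\bar P = \beta P\) for some \(\beta \in \mathbb{C}\setminus\{0\}\).

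Finally, choose a coefficient \(c_{i_0 j_0} \neq 0\) and set \(\alpha = \overline{c_{i_0 j_0}}\) (or \(\alpha = 1/c_{i_0j_0}\)). From \(\overline{c_{ij}} = \beta c_{ij}\) for all \(i,j\), we get \(\overline{\alpha c_{ij}} = \bar\alpha\beta c_{ij}\). Taking \(\alpha\) with \(\bar\alpha\beta = \alpha\) makes every \(\alpha c_{ij}\) real. Such an \(\alpha\) exists because \(|\beta| = 1\), which follows from applying the relation twice: \(c = \overline{\beta c}\cdot\bar\beta^{-1}\) gives \(|\beta|^2 = 1\). Concretely, write \(\beta = e^{-2i\theta}\) and take \(\alpha = e^{i\theta}\); then \(\alpha c_{ij}\) is real.

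The main obstacle is the resultant/divisibility step, namely passing from "vanishes on an open piece of the graph" to polynomial divisibility. It needs care only in stating that a nonzero resultant has finitely many zeros while the common-root set contains an open set. The rest is bookkeeping with the scalar \(\beta\).
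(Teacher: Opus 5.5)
Your proposal is correct and follows the paper's own argument: the same conjugate-reflected polynomial ($\bar P$ is the paper's $P^{\sharp}$), irreducibility forcing $\bar P = \beta P$, $|\beta| = 1$ by applying the relation twice, and rescaling by $\alpha$ with $\alpha/\bar\alpha = \beta$. Your resultant argument usefully fills in the divisibility step that the paper only asserts. There are two small slips. First, the $|\beta| = 1$ line as written is wrong and should read $c = \overline{\bar c} = \overline{\beta c} = |\beta|^2 c$. Second, with $\beta = e^{-2i\theta}$ your own condition $\bar\alpha\beta = \alpha$ requires $\alpha = e^{-i\theta}$, not $e^{i\theta}$; your earlier choices $\alpha = \overline{c_{i_0 j_0}}$ or $\alpha = 1/c_{i_0 j_0}$ are fine.
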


\begin{proof}
    Since the distribution \(\mu\) is real-valued, the Stieltjes transform \(m_1\) from \eqref{eq:def-stieltjes} satisfies the Schwarz symmetry
    \begin{equation*}
        m_1(\bar{z}) = \overline{m_1(z)}, \qquad z \in \mathbb{C} \setminus I,
    \end{equation*}
    where \(\bar{z}\) denotes the complex conjugate of \(z\). Define the conjugate-reflected polynomial \(P^{\sharp} \in \mathbb{C}[z, m]\) by
    \begin{equation*}
        P^{\sharp}(z, m) \coloneqq \overline{P(\bar{z}, \bar{m})}.
    \end{equation*}
    Then, for every \(z \in \mathbb{C} \setminus I\),
    \begin{equation*}
        P^{\sharp}(z, m_1(z)) = \overline{P(\bar{z}, \overline{m_1(z)})} = \overline{P(\bar{z}, m_1(\bar{z}))} = 0,
    \end{equation*}
    so \(P^{\sharp}\) also annihilates \(m_1\) identically on \(\mathbb{C} \setminus I\). By irreducibility, \(P\) and \(P^{\sharp}\) must agree up to a nonzero scalar factor, hence there exists \(\lambda \in \mathbb{C} \setminus \{0\}\) such that \(P^{\sharp} = \lambda P\). Applying the \(\sharp\) operation twice yields \(P = (P^{\sharp})^{\sharp} = \bar{\lambda} \lambda P\), so \(\lvert \lambda \rvert = 1\). Choose \(\alpha \in \mathbb{C} \setminus \{0\}\) such that \(\alpha / \bar{\alpha} = \lambda\) and set \(\tilde{P} \coloneqq \alpha P\). Then \(\tilde{P}^{\sharp} = \bar{\alpha} P^{\sharp} = \bar{\alpha} \lambda P = \alpha P = \tilde{P}\), which implies that \(\tilde{P}\) has real coefficients, that is, \(\tilde{P} \in \mathbb{R}[z, m]\).
\end{proof}

\begin{remark} \label{rem:gauge}
    The rescaling in \Cref{lem:poly-real} does not affect the roots of \(m \mapsto P(z, m)\) for any fixed \(z\), and has no impact on the reconstructed branches. A numerical normalization fixing the scalar ambiguity is described later in this section.
\end{remark}

Having established that \eqref{eq:poly-appendix} admits a real-coefficient representative, we henceforth assume \(P \in \mathbb{R}[z, m]\). To fit \(P\) from pointwise evaluations of \(m_1\), we restrict \(P\) to a finite-dimensional model class by expanding it in a bivariate monomial basis. Concretely, we fix an index set \(\mathcal{A} \subset \mathbb{N}_0^2\) and parameterize
\begin{equation}
    P(z, m) \coloneqq \sum_{(i, j) \in \mathcal{A}} c_{ij} z^i m^j, \qquad c_{ij} \in \mathbb{R}.
    \label{eq:poly-param}
\end{equation}
Let \(N \coloneqq \lvert \mathcal{A} \rvert\), and collect the coefficients into a vector \(\vect{c} \in \mathbb{R}^{N}\) according to a fixed ordering of \(\mathcal{A}\).

A common choice for \(\mathcal{A}\) is a rectangular support,
\begin{equation}
    \mathcal{A} \coloneqq \{0, 1, \dots, d_z\} \times \{0, 1, \dots, s\},
    \label{eq:index-rect}
\end{equation}
which corresponds to allowing degrees up to \(d_z\) in \(z\) and up to \(s\) in \(m\). More generally, the flexibility of \(\mathcal{A}\) allows us to incorporate structural constraints on \eqref{eq:poly-appendix} such as moment constraints at infinity as in \Cref{sec:constraint-moments} or reduced supports motivated by asymptotic cancellations.


\subsection{Constraints at Infinity from Moments} \label{sec:constraint-moments}

Assuming that a finite number of moments of the target probability measure \(\nu\) are available, the Laurent expansion of \(m_1\) at infinity induces linear constraints on the coefficients \(\{c_{ij}\}_{(i, j) \in \mathcal{A}}\). Fix \(r \in \mathbb{N}_0\) and define the moment vector \(\vect{\mu} \coloneqq (\mu_0, \mu_1, \dots, \mu_r) \in \mathbb{R}^{r+1}\).

\begin{definition} \label{def:convolution}
    For \(j \in \mathbb{N}_{0}\), define the \(j\)-fold discrete convolution
    \begin{equation*}
        \vect{\mu}^{\ast j} \coloneqq \underbrace{\vect{\mu} \ast \cdots \ast \vect{\mu}}_{j \text{ times}},
    \end{equation*}
    with the convention \(\vect{\mu}^{\ast 0} = [1]\). For \(q \in \mathbb{N}_0\), denote by \(\mu^{\ast j}_{q}\) the \(q\)-th entry of \(\vect{\mu}^{\ast j}\), equivalently
    \begin{equation*}
        \mu^{\ast j}_{q} \coloneqq \sum_{\substack{p_1 + \cdots + p_j = q \\ 0 \leq p_{\ell} \leq r}} \prod_{\ell=1}^j \mu_{p_{\ell}},
    \end{equation*}
    with the conventions \(\mu^{\ast 0}_{0} = 1\), \(\mu^{\ast 0}_q = 0\) for \(q \geq 1\), and \(\mu^{\ast j}_{q} = 0\) whenever \(q < 0\) or \(q > jr\).
\end{definition}

The following \Cref{prop:moments-constraint} provides linear constraints on the coefficients using known moments. We note that \Cref{prop:moments-constraint} is effectively a higher-order generalization of \Cref{lem:PolyNu0}. 

\begin{proposition} \label{prop:moments-constraint}
    Let \(P\) be parameterized as in \eqref{eq:poly-param} for a fixed choice of index set \(\mathcal{A} \subset \mathbb{N}_0^2\), and assume that \eqref{eq:poly-appendix} holds for \(m = m_1\). Suppose that the moments \(\{\mu_p \}_{p = 0}^{r}\) are available, and let
    \begin{equation}
        e_{\mathrm{max}} \coloneqq \max\{i - j\; : \; (i, j) \in \mathcal{A},\; c_{ij} \neq 0\}.
    \end{equation}
    Then the coefficients \(\{ c_{ij} \}_{(i, j) \in \mathcal{A}}\) satisfy the linear constraints
    \begin{equation}
        \sum_{(i, j) \in \mathcal{A}} (-1)^j \, c_{ij} \, \mu^{\ast j}_{i-j-e_{\mathrm{max}} + l} = 0, \qquad l = 0, 1, \dots, r. \label{eq:moment-constraint}
    \end{equation}
\end{proposition}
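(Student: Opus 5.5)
The plan is to substitute the Laurent expansion \eqref{eq:MomentExpansion} of \(m_1\) into \(P\) and collect powers of \(w = 1/z\), as in the proof of \Cref{lem:PolyNu0}. For large \(|z|\) we have \(m_1(z) = -w\,M(w)\), where \(M(w) \coloneqq \sum_{p \geq 0} \mu_p w^p\) is the moment generating series. Then \(m_1^j = (-1)^j w^j M(w)^j\). The \(q\)-th coefficient of \(M(w)^j\) is the \(j\)-fold Cauchy product \(\sum_{p_1+\cdots+p_j=q}\prod_\ell \mu_{p_\ell}\), which coincides with \(\mu^{\ast j}_q\) of \Cref{def:convolution} whenever \(q \leq r\). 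This holds because every index \(p_\ell\) is at most \(q \leq r\), so the truncation to \(0 \leq p_\ell \leq r\) never binds.

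Substituting gives
\[
0 = P(z, m_1(z)) = \sum_{(i,j)\in\mathcal{A}} (-1)^j c_{ij}\, w^{j-i} M(w)^j .
\]
Multiplying by \(w^{e_{\mathrm{max}}}\) turns this into a power series in \(w\) with no negative powers, since \(e_{\mathrm{max}} - (i-j) \geq 0\) for every nonzero term. Terms with \(c_{ij}=0\) contribute nothing, so we may sum over all of \(\mathcal{A}\).

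We then extract the coefficient of \(w^l\). A term \((i,j)\) contributes \((-1)^j c_{ij}\) times the coefficient of \(w^{\,l - e_{\mathrm{max}} + i - j}\) in \(M^j\). Since \(i - j \leq e_{\mathrm{max}}\), the index \(q = i - j - e_{\mathrm{max}} + l\) satisfies \(q \leq l \leq r\), so the coefficient equals \(\mu^{\ast j}_q\). When \(q < 0\) it vanishes, consistent with the convention \(\mu^{\ast j}_q = 0\). Because the series is identically zero on a neighbourhood of \(w = 0\), every coefficient must vanish by uniqueness of power series. This gives \eqref{eq:moment-constraint} for \(l = 0, \dots, r\).

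The main obstacle is bookkeeping rather than conceptual. We must check that the truncation in \Cref{def:convolution} is harmless: since \(q \leq l \leq r\), only moments up to order \(r\) ever appear. We must also justify convergence of the Laurent series. This follows from compact support of \(\nu\), which gives convergence for \(|z| > \sup|\supp\nu|\), and there \(P(z,m_1(z))=0\) holds because the singular set \(\mathcal{K}\) is finite. Finally, we should note that the case \(l=0\) recovers \Cref{lem:PolyNu0} with \(\vartheta_0=-1\), which serves as a consistency check.
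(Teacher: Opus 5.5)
Your proof is correct and is essentially the paper's argument: you substitute the Laurent expansion of \(m_1\) at infinity into \(P\) and set to zero the coefficients of \(z^{e_{\mathrm{max}}-l}\) (your \(w^{l}\)) for \(l=0,\dots,r\). The only difference is cosmetic. You use the full convergent moment series and check that the truncation in \Cref{def:convolution} never binds for \(q \le r\), whereas the paper works with the truncated sum \(\phi_r(z)=\sum_{p=0}^{r}\mu_p z^{-p}\) and absorbs the rest into an \(O(z^{i-j-r-1})\) remainder.
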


\begin{proof}
    Using the identity \((x - z)^{-1} = -z^{-1} \sum_{p=0}^{r} (x/z)^p + O\big(z^{-r-2}\big)\) in \eqref{eq:def-stieltjes} and applying \eqref{eq:def-moments}, we obtain the truncated Laurent expansion
    \begin{equation*}
        m_1(z) = - \sum_{p=0}^{r} \frac{\mu_p}{z^{p+1}} + O\big(z^{-r-2}\big), \qquad \lvert z \rvert \to \infty.
    \end{equation*}
    Set \(\phi_r(z) \coloneqq \sum_{p=0}^{r} \mu_p z^{-p}\), so that \(m_1(z) = -z^{-1} \phi_r(z) + O(z^{-r-2})\). For each \(j \in \mathbb{N}_0\),
    \begin{equation*}
        m_1(z)^{j} = (-1)^{j} z^{-j} \phi_r(z)^{j} + O\big(z^{-(j+r+1)}\big), \qquad \lvert z \rvert \to \infty.
    \end{equation*}
    Expanding the multinomial \(\phi_r(z)^{j}\) and collecting coefficients gives
    \begin{equation*}
        \phi_r(z)^{j} = \sum_{q=0}^{jr} \mu^{\ast j}_{q} z^{-q},
    \end{equation*}
    where \(\mu^{\ast j}_{q}\) is the \(q\)-th entry of the \(j\)-fold discrete convolution of \(\vect{\mu}\), as stated in \Cref{def:convolution}. Consequently, for each \((i, j) \in \mathcal{A}\),
    \begin{equation*}
        z^{i} m_1(z)^{j} = (-1)^{j} \sum_{q=0}^{jr} \mu^{\ast j}_{q} z^{i - j - q} + O\big(z^{i-j-(r+1)}\big), \qquad \lvert z \rvert \to \infty.
    \end{equation*}
    Substituting this into \eqref{eq:poly-param} and canceling the coefficients of \(z^{e_{\mathrm{max}} - l}\) for \(l = 0, 1, \dots, r\) in the truncated expansion of \(P(z, m_1(z))\) yields \eqref{eq:moment-constraint}.
\end{proof}

\begin{remark} \label{rm:moments-constraint-leading}
    In practice, while higher moments of an empirical distribution may be too noisy to estimate reliably, the leading constraint \(l = 0\) in \eqref{eq:moment-constraint} is always available since \(\mu_0 = 1\) holds unconditionally. This constraint enforces cancellation on the top \((i - j)\)-edge of \(\mathcal{A}\),
    \begin{equation*}
            \sum_{\substack{(i, j) \in \mathcal{A} \\ i - j = e_{\mathrm{max}}}} (-1)^{j} c_{ij} = 0.
    \end{equation*}
    For a full rectangular support \eqref{eq:index-rect}, the top \((i - j)\)-edge consists of the single monomial \((i, j) = (d_z, 0)\), and the above condition reduces to \(c_{d_z, 0} = 0\). It is therefore natural to exclude this monomial from \(\mathcal{A}\) a priori. More refined choices of \(\mathcal{A}\), including slanted Newton polygon supports, are discussed later in connection with branch-point constraints.
\end{remark}


\subsection{Constraints from Branch Points} \label{sec:constraint-branch}

We now exploit the prior knowledge of the branch-cut geometry \(I = \bigcup_{j=1}^{k} [a_j, b_j]\). As discussed in \Cref{sec:bg-alg-stieltjes}, the branch locus of the algebraic relation \eqref{eq:poly-appendix} is identified by the zeros of the discriminant \(\Delta(z)\), obtained by eliminating \(m_{\ast}\) from \eqref{eq:branch-condition}. In the present setting, we assume that the spectral edges \(\{a_j, b_j\}_{j=1}^{k}\) are among these branch points, which yields the nonlinear endpoint constraints
\begin{equation}
    \Delta(a_j) = 0 \qquad \text{and} \qquad \Delta(b_j) = 0, \qquad j = 1, \dots, k. \label{eq:endpoint-constraint}
\end{equation}

Since \(\Delta \in \mathbb{R}[z]\) is a univariate polynomial, each condition in \eqref{eq:endpoint-constraint} implies that \((z - a_j)\) and \((z - b_j)\) divide \(\Delta\). If the endpoints \(\{a_j, b_j\}_{j=1}^k\) are distinct, then with \(\pi(z) \coloneqq \prod_{j=1}^{k} (z - a_j) (z - b_j)\) we obtain the divisibility condition \(\pi \mid \Delta\). This divisibility viewpoint suggests an alternative enforcement strategy, for instance by parameterizing \(\Delta(z) = \pi(z) \tilde{\Delta}(z)\), although in our implementation we impose the endpoint conditions \eqref{eq:endpoint-constraint} directly.

The endpoint constraints \eqref{eq:endpoint-constraint} also imply a simple necessary richness condition on the polynomial model class. Recall \(s = \deg_m(P)\) and write \(P(z, m) = \sum_{j=0}^{s} a_j(z) m^j\) with \(\deg_z(a_j) \leq d_z\). Then
\begin{equation*}
    \deg_z(\Delta) \leq (2s - 1) d_z.
\end{equation*}
Indeed, up to a nonzero factor depending on the leading coefficient \(a_s(z)\), one has \(\Delta(z) = \Res_m(P, \partial_m P)\), and the resultant is the determinant of the \((2s -1) \times (2s - 1)\) Sylvester matrix in the variable \(m\). Since all Sylvester entries are coefficients of \(P\) and \(\partial_m P\) and satisfy \(\deg_z(P) = \deg_z(\partial_m P) \leq d_z\), it follows that \(\deg_z(\Delta) \leq (2s -1 ) d_z\). See, for example, \citep[Section 4.1]{STURMFELS-2002}.

From the fact that a nonzero polynomial with \(2k\) distinct real zeros must have degree at least \(2k\), this yields the necessary condition
\begin{equation}
    (2s - 1) d_z \geq 2k, \label{eq:bound-degree}
\end{equation}
for \(P\) to accommodate \(2k\) prescribed distinct endpoints. We emphasize that \eqref{eq:bound-degree} is only a lower bound; the algebraic model may admit additional branch points away from \(\{a_j, b_j\}_{j=1}^{k}\), contributing further zeros of \(\Delta\).


\subsection{Numerical Reconstruction from Samples} \label{sec:numerical-implicit}

We now describe the numerical procedure used to fit an implicit relation \(P(z, m) = 0\) from samples of the physical branch \(m_1\), and to recover non-physical branches by root finding and continuation. Throughout, we fix an index set \(\mathcal{A} \subset \mathbb{N}_0^2\) and parameterize \(P\) by \eqref{eq:poly-param} with coefficient vector \(\vect{c} \in \mathbb{R}^{N}\), \(N \coloneqq \lvert \mathcal{A} \rvert\).

A convenient choice of \(\mathcal{A}\) is the rectangular set \eqref{eq:index-rect}, where the sheet count \(s\) is set empirically, while \(d_z\) can be initialized using the lower bound \eqref{eq:bound-degree} and increased as needed. In practice, we increase \((s, d_z)\) until the fitting error stabilizes while avoiding over-parameterization.

\paragraph{Sampling.}
We choose sampling points \(\{z_{\ell}\}_{\ell=1}^{n} \subset \mathbb{C} \setminus I\) along multiple contours enclosing each \(I_j\) and the overall \(I\), but away from the branch locus, in particular separated from the endpoints \(\{a_j, b_j\}_{j=1}^{k}\) by a small buffer. At each sample, we evaluate \(\hat{m}_{\ell} \coloneqq m_1(z_{\ell})\). The details of our approach for evaluating \(m_1\) from empirical measures are described in \Cref{sec:eval-stieltjes}. To preserve the Schwarz symmetry needed for real-coefficient structure as discussed in \Cref{lem:poly-real}, we sample in conjugate pairs: whenever \(z_{\ell}\) is included, we also include \(\bar{z}_{\ell}\), which yields \(m_1(\bar{z}_{\ell}) = \overline{m_1(z_{\ell})}\).

\paragraph{Linear system for the coefficients.}
Define the design matrix \(\tens{A} \in \mathbb{C}^{n \times N}\) with entries
\begin{equation}
    A_{\ell, (i, j)} \coloneqq z^{i}_{\ell} \hat{m}^{j}_{\ell}, \qquad (i, j) \in \mathcal{A}, \quad \ell = 1, \dots, n, \label{eq:design-matrix}
\end{equation}
where the column index \((i, j)\) follows a fixed ordering of \(\mathcal{A}\). The samples satisfy \(P(z_{\ell}, \hat{m}_{\ell}) \approx 0\), hence \(\tens{A} \vect{c} \approx  \tens{0}\). We compute \(\vect{c}\) as a nontrivial approximate null vector via the augmented homogeneous least-squares problem
\begin{equation}
    \hat{\vect{c}} \coloneqq \argmin_{\vect{c} \in \mathbb{R}^N} \lVert \tens{A}_{\eta} \vect{c} \rVert_2,
    \qquad \text{subject to} \qquad
    \lVert \vect{c} \rVert_2 = 1, \label{eq:LS-c}
\end{equation}
where
\begin{equation}
    \tens{A}_{\eta} \coloneqq
    \begin{bmatrix}
        \Re(\tens{A}) \\
        \Im(\tens{A}) \\
        \sqrt{\eta}\ \tens{I}
    \end{bmatrix},
    \qquad \eta \geq 0, \label{eq:augment}
\end{equation}
and \(\tens{I} \in \mathbb{R}^{N \times N}\) is the identity matrix. The case \(\eta = 0\) recovers the unregularized fit, while \(\eta > 0\) provides mild Tikhonov-type stabilization on noisy data. The solution is given by the right singular vector corresponding to the smallest singular value of \(\tens{A}_{\eta}\). The scalar ambiguity of \(P\) (see \Cref{rem:gauge}) is fixed by the normalization \(\lVert \vect{c} \rVert_2 = 1\) and a deterministic sign convention, for example by requiring the first nonzero entry of \(\vect{c}\) to be positive.

\paragraph{Incorporating moment constraints.}
When moment constraints derived in \Cref{prop:moments-constraint} are used, they can be written as a linear system \(\tens{B} \vect{c} = \tens{0}\), where \(\tens{B} \in \mathbb{R}^{(r+1) \times N}\) encodes the \(r+1\) enforced constraints \(\{\mu_p\}_{p=0}^{r}\). To enforce these constraints, we restrict \(\vect{c}\) to \(\ker(\tens{B})\). Concretely, let \(\tens{Q} \in \mathbb{R}^{N \times N_c}\) have orthonormal columns spanning \(\ker(\tens{B})\), obtained for example from an SVD of \(\tens{B}\). Then, any feasible coefficient vector can be written as \(\vect{c} = \tens{Q} \vect{y}\). We then solve the reduced problem
\begin{equation}
    \hat{\vect{y}} \coloneqq \argmin_{\vect{y} \in \mathbb{R}^{N_c}} \lVert \tilde{\tens{A}}_{\eta} \vect{y} \rVert_2
    \qquad \text{subject to} \qquad
    \lVert \vect{y} \rVert_2 = 1, \label{eq:LS-y}
\end{equation}
where
\begin{equation*}
    \tilde{\tens{A}}_{\eta} \coloneqq
    \begin{bmatrix}
        \Re(\tens{A} \tens{Q}) \\
        \Im(\tens{A} \tens{Q}) \\
        \sqrt{\eta}\ \tens{I}
    \end{bmatrix},
    \label{eq:augment-AQ}
\end{equation*}
followed by \(\hat{\vect{c}} \coloneqq \tens{Q} \hat{\vect{y}}\). In practice, enforcing only the leading constraint \(\mu_0 = 1\) (that is, \(r = 0\)) is often beneficial even when higher-order moments are unreliable.

\Cref{alg:poly} summarizes the coefficient-estimation step (with optional moment constraints), producing a fitted polynomial \(\hat P\).

\begin{algorithm}[ht!]
    \caption{Implicit fitting of an algebraic relation \(P(z, m)=0\) from samples of \(m_1\)}
    \label{alg:poly}
    \SetKwInOut{Input}{Input}
    \SetKwInOut{Output}{Output}
    \DontPrintSemicolon

    \vspace{1mm}
    \Input{
        Index set \(\mathcal{A}\subset\mathbb{N}_0^2\) such as in \eqref{eq:index-rect} with \(N\coloneqq|\mathcal{A}|\); \\
        Oracle for the physical branch \(m_1(z)\) on \(\mathbb{C} \setminus I\) (see \Cref{alg:stieltjes}); \\
        Number of sampling points \(n\) (even); \\
        Stabilization parameter \(\eta \geq 0\); \\
        Optional moments \(\{\mu_p\}_{p=0}^{r}\) (to enforce \Cref{prop:moments-constraint}).
    }

    \vspace{2mm}
    \Output{
        Coefficient vector \(\hat{\vect{c}}\in\mathbb{R}^{N}\) defining \(\hat P(z,m)=\sum_{(i,j)\in\mathcal{A}}\hat c_{ij} z^i m^j\); \\
        Fitting residual error \(e \coloneqq \lVert \tens{A} \hat{\vect{c}} \rVert_2 / \lVert \tens{A} \rVert_F\) for model selection.
    }

    \vspace{2mm}
    \tcp{Sample the physical branch away from the cut, in conjugate pairs}
    Initialize set of points \(\mathcal{Z} \leftarrow \{\}\) and function values \(\mathcal{M} \leftarrow \{\}\). \;
    Generate \(n/2\) points \(\{z_{\ell}\}_{\ell=1}^{n/2} \subset \mathbb{C}^{+}\) on arbitrary contours enclosing \(I\), excluding \(\epsilon\)-neighborhood from \(I\). \;
    Evaluate \(\hat{m}_{\ell} \leftarrow m_1(z_{\ell})\). \;
    Append \((z_{\ell}, \hat{m}_{\ell})\) and \((\bar{z}_{\ell}, \overline{\hat{m}}_{\ell})\) to \((\mathcal{Z}, \mathcal{M})\). \tcp*{Total of \(n\) samples with conjugate points}

    \vspace{2mm}
    \tcp{Build the design matrix \eqref{eq:design-matrix} from samples}
    Form \(\tens{A}\in\mathbb{C}^{n\times N}\) with columns indexed by \((i,j)\in\mathcal{A}\):
    \begin{equation*}
        A_{\ell,(i,j)} \leftarrow z_\ell^i\,\hat m_\ell^{\,j},
        \qquad
        (z_{\ell}, \hat{m}_{\ell}) \in (\mathcal{Z}, \mathcal{M}).
    \end{equation*}

    \vspace{2mm}
    \tcp{Optional: enforce linear moment constraints \(\tens{B} \vect{c}=\tens{0}\) from \Cref{prop:moments-constraint}}
    \If{moments \(\{\mu_p\}_{p=0}^{r}\) are provided}{
        Build \(\tens{B}\in\mathbb{R}^{(r+1)\times N}\) from \eqref{eq:moment-constraint}\;
        Compute an orthonormal basis \(\tens{Q}\in\mathbb{R}^{N\times N_c}\) for \(\ker(\tens{B})\) (e.g. using SVD) \tcp*{\(N_{c} = N - \operatorname{rank}(\tens{B})\)}
        Replace \(\tens{A}\leftarrow \tens{A}\tens{Q}\) and interpret unknowns as \(\vect{y}\in\mathbb{R}^{N_c}\)\;
    }

    \vspace{2mm}
    \tcp{Incorporate Tikhonov stabilization}
    Form the augmented matrix \(\tens{A}_{\eta} \leftarrow \begin{bmatrix} \Re(\tens{A}) \\ \Im(\tens{A}) \\ \sqrt{\eta}\ \tens{I} \end{bmatrix}\) in \eqref{eq:augment} \tcp*{\(\tens{I}\) is the identity of matching size}

    \vspace{2mm}
    \tcp{Solve the homogeneous least-squares problem \(\min \lVert \tens{A}_{\eta} \vect{c} \rVert_2\) in \eqref{eq:LS-c} or in \eqref{eq:LS-y} if constraints exit}
    \(\tens{U} \gtens{\Sigma} \tens{V} \leftarrow \tens{A}_{\eta}\) \tcp*{Perform thin SVD where \(\gtens{\Sigma} = \operatorname{diag}(\sigma_1, \dots, \sigma_N)\) and \(\sigma_1 \geq \dots \geq \sigma_{N}\)}
    \(\hat{\vect{v}} \leftarrow V_{[:, N]}\) \tcp*{The last column of \(\tens{V}\), \ie the right singular vector for the smallest singular value \(\sigma_N\)}
    \leIf{moment constraints were enforced}{
        \(\hat{\vect{c}} \leftarrow \tens{Q}\hat{\vect{v}}\)
    }{
        \(\hat{\vect{c}} \leftarrow \hat{\vect{v}}\)
    }

    \(\hat{\vect{c}} \leftarrow \hat{\vect{c}} / \lVert \hat{\vect{c}} \rVert_2\) \tcp*{Normalize, optionally, fix a deterministic sign convention}

    \vspace{2mm}
    \Return{\(\hat{\vect{c}}\) and residual error \( e \coloneqq \|\tens{A}\hat{\vect{c}}\|_2/\|\tens{A}\|_F\).}\;
\end{algorithm}


\section{Computing the Stieltjes Transform and Density} \label{sec:eval-stieltjes}

This section describes the numerical procedure used to evaluate the Stieltjes transform and to recover the density from it. The first task is to evaluate the physical Stieltjes branch \(m(z)\). When empirical eigenvalues are available, this can be done directly by summing the Cauchy kernel. However, after fitting an algebraic spectral curve \(P(z, m) = 0\), evaluating \(m(z)\) becomes a \emph{root-selection} problem: for each query point \(z\), one must identify the \emph{physical} root among all algebraic roots of \(P(z, \cdot)\). This process is described in \Cref{sec:root-selection}. The second task, discussed in \Cref{sec:inv-stieltjes}, is the inverse Stieltjes problem: once \(m(z)\) has been evaluated near the real axis, we recover the density \(\rho\) from its boundary values.


\subsection{Physical Root Selection for the Stieltjes Transform} \label{sec:root-selection}

Recall from \eqref{eq:def-stieltjes} that the Stieltjes transform of a spectral density \(\rho\) is
\begin{equation}
    m(z) = \int_{\mathbb{R}} \frac{\rho(x)}{x - z} \, \mathrm{d}x, \quad z \in \mathbb{C} \setminus I.
    \label{eq:stieltjes-emp}
\end{equation}
For empirical eigenvalues \(\{\lambda_1, \dots, \lambda_n\}\), this quantity is computed directly as
\begin{equation*}
    m_n(z) = \frac{1}{n} \sum_{i=1}^n \frac{1}{\lambda_i - z}.
\end{equation*}
The more difficult problem arises when we evaluate the Stieltjes transform through a fitted algebraic relation \(P(z, m) = 0\). For each fixed query point \(z \in \mathbb{C} \setminus I\), the polynomial \(m \mapsto P(z, m)\) has \(s=\deg_m(P)\) roots. Only one of these roots is the physical Stieltjes value, while the remaining roots belong to non-physical sheets of the algebraic curve. Thus, evaluating \(m(z)\) from \(P\) is a nontrivial branch-selection problem, especially for high-degree curves and multi-bulk spectra, where roots may approach one another, exchange ordering, or pass near branch points.

\Cref{fig:diffusion-candidates} illustrates this issue for the diffusion model example; see \Cref{sec:bg-diffusion}. In the left panel, we evaluate all roots of \(m \mapsto P(z,m)\) at \(z=\lambda+i\delta\), for a small fixed \(\delta>0\) and a broad range of \(\lambda\), and plot the candidate values \(\pi^{-1}\Im m_i(\lambda+i\delta)\), \(i=1,\dots,s\). Since the roots are unordered from one value of \(\lambda\) to the next, the result is an unordered scatter of algebraic candidates rather than a connected curve. At sufficiently fine resolution, the neighboring points visually trace several candidate branches. Only one of these traces corresponds to the physical density,
\begin{equation*}
    \rho(\lambda) \approx \frac{1}{\pi}\Im m(\lambda+i\delta),
\end{equation*}
namely the trace that agrees with the empirical histogram shown in the background. The right panel shows the result after continuously selecting the physical branch. Thus, the purpose of the procedure below is to pass from the unordered scatter of algebraic roots in the left panel to the coherent physical curve in the right panel.

\begin{figure}[t]
    \centering
    \ifjournal
        \includegraphics[width=\textwidth]{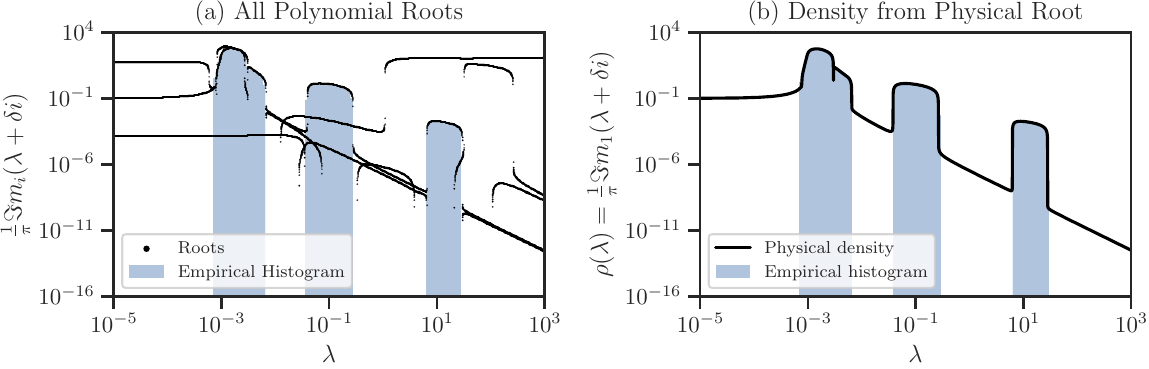}
    \else
        \includegraphics[width=0.9\textwidth]{\figdir/diffusion-candidates}
    \fi
    \caption{Physical branch selection for the fitted algebraic Stieltjes transform in the diffusion-model example at the initial size \(n=2^2\mathrm{K}\). Panel (a) shows all candidate densities \(\pi^{-1}\Im m_i(\lambda + i\delta)\) with \(\delta = 10^{-6}\), obtained from the seven roots \(m_i\) of \(P(\lambda + i\delta, m) = 0\), where \(\deg_m(P)=7\). These unordered candidates include both physical and non-physical sheets and illustrate the difficulty of pointwise root selection in a multi-bulk, log-scale spectrum. Panel (b) shows that the continuation procedure successfully selects the physical branch, producing a coherent density curve that matches the empirical histogram on the bulks. The remaining sloped baseline in the spectral gaps is a finite-\(\delta\) artifact and is addressed in \Cref{sec:inv-stieltjes}.}
    \label{fig:diffusion-candidates}
\end{figure}

The physical branch is characterized by the Herglotz property and by its normalization at infinity,
\begin{equation*}
    \Im m(z) > 0 \quad \text{for } z \in \mathbb{C}^+,
    \qquad
    m(z) \sim - \frac{1}{z} \quad \text{as } \vert z \vert \to \infty.
\end{equation*}
However, these conditions alone are not always sufficient for stable pointwise root selection. Instead, we treat the fitted relation as a spectral curve
\begin{equation*}
    \mathcal{C} = \left\{ (z, m) \in \mathbb{C}^2 \mid P(z, m) = 0 \right\},
\end{equation*}
and track the physical sheet geometrically on this curve from an anchor point to the desired target point.

The anchor point \(z_{\mathrm{a}}\) is chosen so that the physical root \(m_{\mathrm{a}}\) can be identified reliably. One option is to place \(z_{\mathrm{a}}\) far from the spectrum, where the normalization \(m(z) \sim -1/z\) selects the correct algebraic root. Another option is to place \(z_{\mathrm{a}}\) closer to the real axis and select the algebraic root closest to the empirical value \(m_n(z_{\mathrm{a}})\) from \eqref{eq:stieltjes-emp}. The latter approach is often more stable for complicated algebraic curves.

Starting from \((z_{\mathrm{a}}, m_{\mathrm{a}}) \in \mathcal{C}\), we choose a continuous path \(\gamma : [0,1] \to \mathbb{C}^{+}\) with \(\gamma(0)=z_{\mathrm{a}}\) and \(\gamma(1)=z\), and lift this path to the spectral curve by seeking \(m(s)\) such that
\begin{equation*}
    P(\gamma(s), m(s)) = 0,
    \qquad
    m(0)=m_{\mathrm{a}}.
\end{equation*}
Root ordering alone is unreliable along such paths: near branch points, two sheets may come close, exchange order, or locally appear to cross. To avoid jumping between sheets, we follow the local geometry of the curve using a predictor--corrector procedure. Differentiating \(P(\gamma(s),m(s))=0\) along the path gives
\begin{equation*}
    \frac{\mathrm{d}m}{\mathrm{d}s} = -\frac{\partial_z P(\gamma(s),m(s))} {\partial_m P(\gamma(s),m(s))} \, \gamma'(s).
\end{equation*}
Equivalently, away from branch points, the local tangent satisfies
\begin{equation*}
    \frac{\mathrm{d} m}{\mathrm{d}z} = - \frac{\partial_z P(z, m)}{\partial_m P(z, m)}.
\end{equation*}

This tangent relation provides a predictor for the next point \(z_{\mathrm{new}}\) along the path on the same sheet. The predicted value is then corrected by enforcing
\begin{equation*}
    P(z_{\mathrm{new}}, m_{\mathrm{new}}) = 0,
\end{equation*}
which amounts to projecting the predicted point back onto the spectral curve. This correction is performed by Newton iteration, viewing \(m_{\mathrm{new}}\in\mathbb{C}\) as a two-dimensional real variable. The selected root at \(z_{\mathrm{new}}\) is the algebraic root closest to this corrected value. If no root is sufficiently close, or if the local geometry is too stiff, the continuation step is subdivided and repeated with smaller increments until the root closeness criterion is satisfied. Repeating this predictor--corrector continuation along the path yields a continuous selection of the physical branch.

This continuation-based, or homotopic, root selection is what allows the fitted polynomial to be used as a numerical Stieltjes transform. The resulting physical-branch selection procedure is summarized in \Cref{alg:stieltjes}.

\begin{algorithm}[ht!]
    \caption{Physical branch selection for an algebraic Stieltjes transform}
    \label{alg:stieltjes}
    \SetKwInOut{Input}{Input}
    \SetKwInOut{Output}{Output}
    \DontPrintSemicolon

    \vspace{1mm}
    \Input{
        Polynomial \(P \in \mathbb{R}[z,m]\) (see \Cref{alg:poly}); \\
        Query points \(\{z_j\}_{j=1}^{N_z}\subset\mathbb{C}^{+}\); \\
        Anchor rule for selecting \(m_{\mathrm{a}}\) at \(z_{\mathrm{a}}\); \\
        Newton tolerance \(\epsilon>0\).
    }

    \vspace{2mm}
    \Output{
        Physical Stieltjes values \(m(z_j)\) satisfying \(P(z_j,m(z_j))=0\).
    }

    \vspace{2mm}
    \For{\(j = 1,\dots,N_z\)}{
        Choose an anchor point \(z_{\mathrm{a}}\) and a path
        \(\gamma:[0,1]\to\mathbb{C}^{+}\) with
        \(\gamma(0)=z_{\mathrm{a}}\), \(\gamma(1)=z_j\). \;

        Select the physical root \(m_{\mathrm{a}}\) at \(z_{\mathrm{a}}\)
        using either \(m(z)\sim -z^{-1}\) or the empirical value \(m_n(z_{\mathrm{a}})\). \;

        Initialize \(m^{(0)} \leftarrow m_{\mathrm{a}}\). \;

        \vspace{2mm}
        \tcp{Lift the path \(\gamma\) to the spectral curve \(P(z,m)=0\)}
        \For{successive points \(z_{\mathrm{old}}, z_{\mathrm{new}}\) along \(\gamma\)}{
            Compute the tangent predictor
            \[
                \dot{m}
                =
                -\frac{\partial_z P(z_{\mathrm{old}},m)}
                       {\partial_m P(z_{\mathrm{old}},m)}
                \dot{z}.
            \]

            Predict \(m_{\mathrm{pred}}\) at \(z_{\mathrm{new}}\). \;

            Correct \(m_{\mathrm{pred}}\) by Newton iteration on
            \[
                P(z_{\mathrm{new}},m)=0.
            \]

            \eIf{Newton correction is accepted}{
                Set \(m \leftarrow m_{\mathrm{new}}\). \;
            }{
                Subdivide the path segment and repeat. \tcp*{Avoid sheet jumping}
            }
        }

        Store \(m(z_j)\leftarrow m\). \;
    }

    \vspace{2mm}
    \Return{\(\{m(z_j)\}_{j=1}^{N_z}\).}\;
\end{algorithm}

The continuation step presented here is not merely a numerical convenience. Without it, direct pointwise root selection is unstable even for relatively simple algebraic curves, because root orderings can change from one query point to the next. This instability becomes much more severe for high-degree, multi-bulk, or log-scale spectra, where several sheets may come close over many orders of magnitude. The diffusion-model example in \Cref{fig:diffusion-candidates} is deliberately chosen as a stringent stress test of this type: the fitted curve has degree \(\deg_m(P)=7\), the density spans many decades in both \(\lambda\) and \(\rho(\lambda)\), the geometry is highly stiff near the spectral edges, and the support contains several strongly unbalanced bulks, with the leftmost bulk orders of magnitude taller than the others. Nevertheless, the geometric continuation procedure reliably resolves the physical branch and produces a coherent density from the fitted algebraic curve.

With the physical root \(m(z)\) selected consistently over a query grid \(z=\lambda+i\delta\), the density \(\rho(\lambda)\) can then be recovered from its boundary values using the inverse Stieltjes procedure described next.


\subsection{Density Recovery and \texorpdfstring{\(\delta\)}{delta}-Extrapolation}
\label{sec:inv-stieltjes}

Once the physical branch \(m(z)\) has been selected, the density is recovered from the boundary values of the Stieltjes transform. Recall the Plemelj formula from \eqref{eq:inv-stieltjes}:
\begin{equation*}
    \rho(\lambda) = \frac{1}{\pi} \lim_{\delta \to 0^{+}} \Im m(\lambda + i \delta).
\end{equation*}
In numerical computations, however, one cannot evaluate exactly on the real axis at \(\delta = 0\). Instead, one evaluates \(m\) at a small positive offset \(\delta\), giving the regularized density
\begin{equation*}
    \rho_{\delta}(\lambda) \coloneqq \frac{1}{\pi} \Im m(\lambda + i \delta).
\end{equation*}
The parameter \(\delta\) plays the role of a smoothing scale: larger values of \(\delta\) improve numerical stability but smear sharp features of the density, while smaller values give sharper reconstructions at the cost of increased sensitivity near branch points and spectral edges.

This smoothing effect has a precise interpretation. Indeed,
\begin{equation*}
    \rho_{\delta}(\lambda) = \frac{1}{\pi} \Im m(\lambda + i\delta) = \int_{\mathbb{R}} \frac{1}{\pi}\frac{\delta}{(\lambda-\xi)^2+\delta^2}\rho(\xi)\, \mathrm{d}\xi
    = (P_{\delta} * \rho)(\lambda),
\end{equation*}
where
\begin{equation*}
    P_{\delta}(\lambda) \coloneqq \frac{1}{\pi}\frac{\delta}{\lambda^2+\delta^2}
\end{equation*}
is the Poisson kernel in the upper half-plane \citep{STEIN-2003a}. Thus, evaluating the inverse Stieltjes formula at a single finite $\delta$ amounts to convolving $\rho$ with $P_{\delta}$, which introduces a finite-$\delta$ blur and a nonzero baseline outside the support.

This effect is especially visible for spectra plotted on logarithmic scales. In spectral gaps, where the limiting density should vanish, the finite-\(\delta\) Poisson tail creates a nonzero baseline. Away from the support, this baseline behaves like \(P_{\delta}(\lambda) \sim \delta/(\pi\lambda^2)\), which appears as a line of slope \(-2\) on a log--log plot. We refer to this artifact as the Poisson \(\delta\)-floor. Decreasing \(\delta\) lowers this floor and makes the bulks stand out more sharply, but it also makes branch tracking and root selection more sensitive. The left panel of \Cref{fig:diffusion-inv-stieltjes} illustrates this issue for the diffusion-model example. In the right panel, the baseline is eliminated by the extrapolation procedure described below, and the bulk edges are sharply resolved.

\begin{figure}[t]
    \centering
    \ifjournal
        \includegraphics[width=\textwidth]{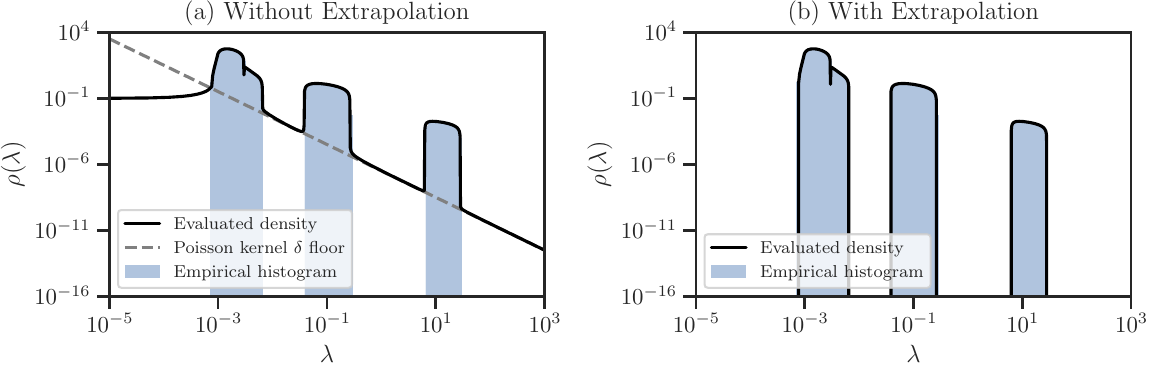}
    \else
        \includegraphics[width=0.9\textwidth]{\figdir/diffusion-inv-stieltjes}
    \fi
    \caption{Inverse Stieltjes recovery for the diffusion-model spectrum at the initial size \(n=2^2\mathrm{K}\). Panel (a) shows the direct finite-\(\delta\) density \(\rho_{\delta}(\lambda) = \pi^{-1}\Im m(\lambda + i\delta)\), which agrees with the empirical histogram on the bulks but exhibits a Poisson \(\delta\)-floor in spectral gaps. Panel (b) shows the density after extrapolating the finite-\(\delta\) evaluations to \(\delta=0\), removing the artificial gap baseline while preserving the bulk densities.}
    \label{fig:diffusion-inv-stieltjes}
\end{figure}

To remove this artifact, we evaluate the physical branch on a ladder of offsets
\begin{equation*}
    \delta_0 < \delta_1 < \cdots < \delta_r
\end{equation*}
and extrapolate the sequence \(\rho_{\delta_i}(\lambda)\) to \(\delta=0\) at each fixed \(\lambda\). A simple approach is to use a low-degree polynomial fit in \(\delta\). In practice, we use a Chebyshev basis for this extrapolation, which is more stable than a monomial basis when several \(\delta\)-levels are used  \citep{TREFETHEN-2019a}. Concretely, after mapping the offset ladder \(\{\delta_i\}_{i=0}^r\) to points \(\{\eta_i\}_{i=0}^r \subset [-1,1]\), we approximate
\begin{equation*}
    \rho_{\delta_i}(\lambda)
    \approx
    \sum_{k=0}^{q} c_k(\lambda) T_k(\eta_i),
\end{equation*}
where \(T_k\) are Chebyshev polynomials. The extrapolated density is then obtained by evaluating this fitted polynomial at the value of \(\eta\) corresponding to \(\delta=0\). This approach is particularly useful for log-scale spectra, such as the diffusion-model example shown in \Cref{fig:diffusion-inv-stieltjes}. In this example, we used four \(\delta\)-levels and a quadratic Chebyshev extrapolation. This was sufficient to reduce the artificial Poisson floor below \(10^{-20}\) across the spectral gaps, without additional regularization.

For densities represented on a uniform linear grid, one can alternatively exploit the convolution identity \(\rho_{\delta} = P_{\delta} \ast \rho\). In that case, the collection of finite-\(\delta\) evaluations can be viewed as a regularized deconvolution problem, which can be solved efficiently in Fourier space. Indeed, on a uniform linear grid, the convolution relation gives
\begin{equation*}
    \mathcal{F}\{\rho_{\delta}\}(\omega)
    =
    \mathcal{F}\{P_{\delta}\}(\omega) \, \mathcal{F}\{\rho\}(\omega)
    =
    e^{-\delta \vert \omega \vert} \mathcal{F}\{\rho\}(\omega),
\end{equation*}
where \(\mathcal{F}\) is the Fourier transform. Thus, if \(\rho_{\delta}\) were known exactly on a uniform grid, the formal inverse would be
\begin{equation*}
    \rho
    =
    \mathcal{F}^{-1}
    \left[
        e^{\delta \vert \omega \vert}
        \mathcal{F}\{\rho_{\delta}\}(\omega)
    \right].
\end{equation*}
The factor \(e^{\delta \vert \omega \vert}\) inverts the Poisson smoothing, but it also amplifies high-frequency numerical noise. Hence, the Fourier recovery must be regularized, for instance by damping or truncating high-frequency modes.

The \(\delta\)-extrapolation is similarly important for densities with wide log--log scale spectra, such as the diffusion-model example shown in \Cref{fig:diffusion-inv-stieltjes}. Without it, the finite-\(\delta\) Poisson floor obscures spectral gaps and makes narrow bulks appear artificially connected on a log--log plot. After extrapolation, the density retains the same bulk-scale agreement with the empirical histogram, while the gap baseline is pushed below the plotted range. This allows the multi-bulk structure and spectral edges to be resolved cleanly, even in this stringent log-scale test case.


\section{Free Decompression on Algebraic Spectral Curves}
\label{sec:fd-algebraic}

In this section, we develop the mathematical and computational foundation of free decompression from an algebraic Stieltjes relation. We begin in \Cref{sec:nica} with the Nica--Speicher theorem, which identifies the \(R\)-transform scaling that underlies free compression and motivates its inverse operation. We then define infinite decompressibility in \Cref{sec:decompressibility} and show its equivalence with free infinite divisibility.

The next issue is computational. Although free decompression leads naturally to a nonlinear PDE for the Stieltjes transform, \Cref{sec:bg-pde-hard} explains why solving this PDE directly is numerically unstable in realistic examples, especially when analytic continuation across branch cuts is required. The main purpose of the remainder of the section is to show that this difficulty can be avoided when the initial Stieltjes transform is represented by an algebraic relation \(P(z,m)=0\). In \Cref{sec:formal}, free decompression is lifted to a rational transformation of the associated spectral curve. In \Cref{sec:coeffmethod}, this transformation is converted into an evolved polynomial relation whose roots give all algebraic candidates at later times.

The remaining challenge is to select the physical Stieltjes branch from these candidates. We address this in \Cref{sec:fd-on-sc} by tracking the physical sheet geometrically along the free-decompression flow. This leads to a predictor--corrector method on the spectral curve, formulated as a \(2\times2\) Newton system in the natural curve coordinates. This viewpoint is the basis of the practical free-decompression algorithm used in the numerical experiments.


\subsection{The Nica--Speicher Theorem}
\label{sec:nica}

Our exposition begins with a complete proof of the R-transform relationship which underpins the free decompression procedure, also known as the \emph{Nica--Speicher Theorem}. Although our conditions appear to be less general than required in practice, our emphasis here is on establishing a rigorous set of assumptions upon which free decompression can be guaranteed to hold.

To do so, it is important to define several important objects pertaining to free probability theory. First, we begin with the definition of ``limits'' of matrices in the noncommutative algebraic sense.


\begin{definition}[Free Algebra Limit]
    Let $\tr_n$ denote the normalized trace, given by $\tr_n(\tens{A}) = \frac1n \mbox{tr}(\tens{A})$. A sequence of matrices $\tens{A}_n$ has a \emph{limit} $a$ if for every integer $k$, $\tr_n(\tens{A}_n^k)$ converges as $n \to \infty$ to a value, denoted $\tr_{\infty}(a^k)$. More generally, a family of sequences of matrices $(\{\tens{A}_{1,n}\}_{n=1}^{\infty},\dots,\{\tens{A}_{k,n}\}_{n=1}^{\infty})$ has limits $(a_1,\dots,a_k)$ if for any (noncommutative) polynomial combination $p(x_1,\dots,x_k)$, $\tr_n(p(\tens{A}_{1,n},\dots,\tens{A}_{k,n}))$ converges to a value, denoted $\tr_{\infty}(p(a_1,\dots,a_k))$. 
\end{definition}

Next, we consider the critical definition of asymptotic freeness, first introduced in \citet{VOICULESCU-1991}.

\begin{definition}[Asymptotically Free]
    A family of sequences of matrices $(\{\tens{A}_{1,n}\}_{n=1}^{\infty},\dots,\{\tens{A}_{k,n}\}_{n=1}^{\infty})$ is \emph{asymptotically free} from another family of sequences of matrices $(\{\tens{B}_{1,n}\}_{n=1}^{\infty},\dots,\{\tens{B}_{l,n}\}_{n=1}^{\infty})$ if the two families have a joint limit $(a_1,\dots,a_k,b_1,\dots,b_l)$ and $(a_1,\dots,a_k)$, $(b_1,\dots,b_l)$ are \emph{freely independent} in the following sense: for any integer $m \geq 1$ and any ``alternating'' sequence of elements $x_1, x_2, \dots, x_m$ such that $x_{2i+1}$ is a noncommutative polynomial combination of $(a_1,\dots,a_k)$ and $x_{2i}$ is a noncommutative polynomial combination of $(b_1,\dots,b_l)$, if $\tr_{\infty}(x_i) = 0$ for all $i$, then $\tr_{\infty}(x_1 x_2 \cdots x_m) = 0$.
\end{definition}

Finally, we will require another important transform, similar to the R-transform.

\begin{definition}[S-Transform]
    Let $\nu$ be a probability measure and let $\mu_n = \int x^n \mathrm{d} \nu(x)$ denote its moments for each integer $n$. The \emph{truncated moment-generating function} of $\nu$ is given by
    \[
    M(z) = \sum_{n=1}^\infty \mu_n z^n.
    \]
    If $\chi(z)$ is the functional inverse of $M$ so that $M(\chi(z)) = z$, then the S-transform of $\mu$ is given by
    \[
    S(z) = \frac{z + 1}{z} \chi(z).
    \]
\end{definition}

The significance of the S-transform is its behavior for multiplicative free convolution. If $a$ and $b$ are freely independent, then the spectral distribution of $b^{1/2} a b^{1/2}$ has S-transform $S_{ab}$ satisfying \citep[Lemma 2.19]{COUILLET-2022}
\[
    S_{ab}(z) = S_a(z) S_b(z),
\]
where $S_a$ and $S_b$ are the S-transforms of $a$ and $b$ respectively. To relate the R- and S-transforms, we have the following lemma.

\begin{lemma}
    \label{lem:RSTrans}
    Letting $R(z)$ and $S(z)$ denote the R- and S-transforms of a probability measure, if $y = z S(z)$, then $z = y R(y)$.
\end{lemma}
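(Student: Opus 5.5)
The plan is to verify the identity by a direct chain of substitutions. The bridge is to express the truncated moment-generating function \(M\) through the Stieltjes transform \(m\), and then invoke the defining relation \(R(-m(\zeta)) = \zeta + 1/m(\zeta)\) from \Cref{sec:fd}.

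First I relate \(M\) and \(m\). The Laurent expansion \eqref{eq:MomentExpansion} gives \(-\zeta m(\zeta) = \sum_{p \geq 0} \mu_p \zeta^{-p}\) for large \(\lvert \zeta \rvert\). Setting \(w = 1/\zeta\), this reads
\begin{equation*}
    -\frac{1}{w}\, m(1/w) = 1 + M(w), \qquad\text{equivalently}\qquad m(1/w) = -w\bigl(1 + M(w)\bigr),
\end{equation*}
valid for small \(\lvert w \rvert\). Since \(\nu\) has compact support, both sides are analytic near \(w = 0\), so this is an identity of convergent power series.

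Next, fix \(z\) near \(0\) in the domain of \(\chi\). Set \(w \coloneqq \chi(z)\), so that \(M(w) = z\), and set \(\zeta \coloneqq 1/w\). By the definition of the S-transform,
\begin{equation*}
    y = zS(z) = (z+1)\chi(z) = (1+z)\,w .
\end{equation*}
The identity above then gives \(m(\zeta) = -w(1 + M(w)) = -w(1+z) = -y\). Substituting \(-m(\zeta) = y\) into the defining relation of the R-transform yields
\begin{equation*}
    R(y) = R(-m(\zeta)) = \zeta + \frac{1}{m(\zeta)} = \frac{1}{w} - \frac{1}{y}.
\end{equation*}
Multiplying by \(y\) gives \(yR(y) = y/w - 1 = (1+z) - 1 = z\), which is the claim.

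The main obstacle is not the algebra but making the domains and branches precise. The function \(\chi\) exists as a local analytic inverse of \(M\) near \(0\) only when \(M'(0) = \mu_1 \neq 0\). Moreover, \(R\) is defined implicitly and may be multi-valued. I would handle this by working in a neighbourhood of \(z = 0\) under the assumption \(\mu_1 \neq 0\). Then \(w = \chi(z)\) is small, so \(\zeta = 1/w\) is large and lies outside \(\supp(\nu)\). In that regime \(m\) is invertible near \(\infty\) and \(R\) is the single-valued branch analytic at \(0\) that appears in the standard theory.

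The identity \(z = yR(y)\) is then established on this neighbourhood. It extends by analytic continuation, or alternatively can be read as an identity of formal power series. For \(\mu_1 = 0\), I would either treat the statement formally or argue by a shift \(\nu \mapsto \nu(\cdot - c)\) and take a limit. I would flag this case explicitly rather than resolve it fully.
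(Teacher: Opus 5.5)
Your argument is correct and is essentially the paper's proof: evaluating $m(1/w) = -w\bigl(1+M(w)\bigr)$ at $w=\chi(z)$ is exactly the identity $m\bigl((z+1)/(zS(z))\bigr) = -zS(z)$ that the paper takes from \citet[Definition 5]{COUILLET-2022}, and both proofs then feed $-m = y$ into the defining relation of $R$. The differences are small tightenings rather than a different route. You derive that identity from the moment expansion instead of citing it. You also apply $R(-m(\zeta)) = \zeta + 1/m(\zeta)$ directly, whereas the paper equates the arguments in $m\bigl((z+1)/y\bigr) = m\bigl(R(y)+y^{-1}\bigr)$, which tacitly uses local injectivity of $m$ near infinity. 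Finally, you correctly flag that $\chi$ exists only when $\mu_1 \neq 0$.
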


\begin{proof}
    From \citet[Definition 5]{COUILLET-2022}, if $m(z)$ is the Stieltjes transform, then
    \[
        m\left(\frac{z+1}{zS(z)}\right) = -zS(z).
    \]
    Substituting $y = zS(z)$ gives
    \[
        m\left(\frac{z+1}{y}\right) = -y.
    \]
    But since $m(R(y) + y^{-1}) = -y$ from \citet[Definition 5]{COUILLET-2022}, $z/y = R(y)$ and so $z = y R(y)$.
\end{proof}

Now we may prove the Nica--Speicher Theorem \citep{NICA-1996}.

\begin{theorem}[Nica--Speicher]
    Let $\{\tens{A}_n\}_{n=1}^\infty$ be a sequence of $n \times n$ deterministic real symmetric matrices whose empirical spectral distribution converges weakly as $n \to \infty$ to a compactly supported probability measure with R-transform $R(z)$ and S-transform $S(z)$. Consider a sequence of integers $\{k_n\}_{n=1}^\infty$ such that $\lim_{n\to\infty} \frac{k_n}{n} = \tau \in (0,1)$. For each $n$, let $\tens{Q}_n$ be a Haar-distributed random orthogonal matrix of size $n \times n$ and define $\tilde{\tens{A}}_n = \tens{Q}_n^{\intercal} \tens{A}_n \tens{Q}_n$. Let $\sigma_n$ be a random permutation of $\{1,\dots,n\}$ and let $\tens{B}_n = (\tilde{\tens{A}}_{\sigma_n(i)\sigma_n(j)})_{i,j=1}^{k_n}$. Then as $n \to \infty$, the empirical spectral distribution of $\tens{B}_n$ converges to a limiting probability measure whose R- and S-transforms $R_\tau(z)$ and $S_\tau(z)$ are given by \[
        R_\tau(z) = R(\tau z),\qquad S_\tau(z) = S(\tau z).
    \]
\end{theorem}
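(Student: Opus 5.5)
The plan is to realize the compressed matrix as a free multiplicative convolution with a projection, then read off the transforms. Since \(\sigma_n\) is a uniform permutation and \(\tilde{\tens{A}}_n = \tens{Q}_n^{\intercal}\tens{A}_n\tens{Q}_n\) is already orthogonally invariant in law, the permutation is immaterial. We may take \(\tens{B}_n\) to be the upper-left \(k_n\times k_n\) block of \(\tilde{\tens{A}}_n\). Let \(\tens{\Pi}_n = \operatorname{diag}(\tens{I}_{k_n}, \tens{0})\) be the coordinate projection. Then the nonzero spectrum of \(\tens{\Pi}_n\tilde{\tens{A}}_n\tens{\Pi}_n\) coincides with that of \(\tens{B}_n\), up to \(n-k_n\) extra zero eigenvalues. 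By Voiculescu's asymptotic freeness theorem for Haar-conjugated deterministic matrices with converging spectra (we invoke it as a black box), \(\tilde{\tens{A}}_n\) and \(\tens{\Pi}_n\) are asymptotically free. Their limits are \(a\), with law \(\nu\), and a projection \(p\) with \(\tr_\infty(p)=\tau\). Almost sure convergence of the empirical spectral distributions follows from the standard concentration for Haar matrices; I would cite this rather than reprove it. Compact support of \(\nu\) ensures moments determine the limit.

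The first step is to compute the S-transform of \(p\). Its moments are all \(\tau\), so \(M_p(z)=\tau z/(1-z)\) and \(\chi_p(z)=z/(\tau+z)\). This gives \(S_p(z)=(z+1)/(z+\tau)\). By multiplicativity, \(S_{pap}(z)=S(z)\,(z+1)/(z+\tau)\), since the law of \(p a p\) agrees in moments with that of \(a^{1/2}pa^{1/2}\), or one may use traciality directly.

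The second step is to pass from \(pap\) on the full algebra to the compressed algebra \(p\mathcal{A}p\) with normalized trace \(\tau^{-1}\tr_\infty\). The moments satisfy \(\mu^{(\tau)}_k=\tau^{-1}\tr_\infty((pap)^k)\) for \(k\ge1\). Hence \(M_\tau(z)=\tau^{-1}M_{pap}(z)\) and \(\chi_\tau(z)=\chi_{pap}(\tau z)\). The definition then yields
\[
S_\tau(z)=\frac{z+1}{z}\chi_{pap}(\tau z)=\frac{z+1}{z}\cdot\frac{\tau z}{\tau z+1}S_{pap}(\tau z)=\frac{\tau(z+1)}{\tau z+1}\cdot\frac{\tau z+1}{\tau z+\tau}S(\tau z)=S(\tau z).
\]
The final step converts to the R-transform via \Cref{lem:RSTrans}. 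If \(y=zS_\tau(z)=zS(\tau z)\), then \(\tau y=(\tau z)S(\tau z)\), so \(\tau z=\tau y\,R(\tau y)\), i.e. \(z=yR(\tau y)\). Comparing with \(z=yR_\tau(y)\) gives \(R_\tau(y)=R(\tau y)\).

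The main obstacle is rigor around the S-transform. Its definition requires \(\mu_1\neq0\) for \(M\) to be locally invertible. If \(\nu\) has zero mean, I would shift \(\tens{A}_n\) by \(c\tens{I}\) and apply the argument there. This is legitimate because the shift commutes with compression and changes \(R\) only by the additive constant \(c\), so the relation \(R_\tau(z)=R(\tau z)\) transfers back. Alternatively, one can work with free cumulants directly, using \(\kappa_k(pap\text{ in }p\mathcal{A}p)=\tau^{k-1}\kappa_k(a)\), as a cross-check. The remaining care is showing that the identities hold as formal power series near \(0\), where all the transforms are analytic.
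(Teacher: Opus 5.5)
Your proposal is correct and follows essentially the same route as the paper. You remove the permutation by Haar invariance, compress with a diagonal projection $p$ of trace $\tau$ that is asymptotically free from $\tilde{a}$, and compute $S_p(z)=(z+1)/(z+\tau)$. You then rescale the moment generating function by $\tau^{-1}$ to pass to the $k_n\times k_n$ block; your $\chi_\tau(z)=\chi_{pap}(\tau z)$ is the paper's $\chi_{\tilde b}(z)=\chi_b(z/\tau)$. Finally you convert $S_\tau(z)=S(\tau z)$ to $R_\tau(z)=R(\tau z)$ via \Cref{lem:RSTrans}. Your shift argument for the case $\mu_1=0$, where the S-transform is undefined, is a sound addition that the paper's proof leaves implicit.
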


\begin{proof}
    Fix an integer $n$. For a permutation $\sigma$ of $\{1,\dots,n\}$ into itself, we may define the permutation matrix $\gtens{\Pi}_\sigma$ such that $(\gtens{\Pi}_\sigma)_{ij} = 1$ if $i = \sigma(j)$ and $0$ otherwise. In this way, $\tens{B}_n$ becomes the top-left $k_n \times k_n$ block of
    \[
        \tilde{\tilde{\tens{A}}}_n = \gtens{\Pi}_{\sigma_n}^{\intercal} \tilde{\tens{A}}_n \gtens{\Pi}_{\sigma_n} = \gtens{\Pi}_{\sigma_n}^{\intercal} \tens{Q}_n^{\intercal} \tens{A}_n \tens{Q}_n \gtens{\Pi}_{\sigma_n}. 
    \]
    Since $\tilde{\tens{Q}}_n = \tens{Q}_n \gtens{\Pi}_{\sigma_n}$ is also an orthogonal matrix and the Haar measure is invariant under permutations, $\tilde{\tens{Q}}_n \overset{\mathcal{D}}{=} \tens{Q}_n$ and so $\tilde{\tilde{\tens{A}}}_n \overset{\mathcal{D}}{=} \tilde{\tens{A}}_n$. Consequently, we can assume without loss of generality that the permutations $\sigma_n$ are the identity and consider
    \[
        \tilde{\tens{B}}_n = \tens{P}_n \tilde{\tens{A}}_n \tens{P}_n,
    \]
    where $\tens{P}_n$ is a diagonal matrix with $(\tens{P}_n)_{ii} = 1$ if $i \leq k_n$ and $0$ otherwise. 
    
    Because $\tens{Q}_n$ is orthogonal, the conjugation $\tilde{\tens{A}}_n = \tens{Q}_n^{\intercal} \tens{A}_n \tens{Q}_n$ leaves the eigenvalues invariant. Therefore, the empirical spectral distribution of $\tilde{\tens{A}}_n$ is identical to that of $\tens{A}_n$ for all $n$, and converges weakly to the same compactly supported probability measure $\nu$, which is uniquely characterised by its R-transform $R(z)$. Furthermore, $\tens{P}_n$ clearly has an empirical spectral distribution converging weakly to a Bernoulli distribution:
    \[
        \lim_{n\to\infty} \nu_{\tens{P}_n} = \nu_p = (1-\tau)\delta_0 + \tau \delta_1.
    \]
    Let $\tr_n(\tens{A}) = \frac{1}{n} \mbox{tr}(\tens{A})$ denote the normalize trace. We note that $\lim_{n\to\infty}\tr_n(\tens{P}_n) = \lim_{n\to\infty} \frac{k_n}{n} = \tau$. Now, since $\tens{A}_n$ and $\tens{P}_n$ both have spectral distributions converging weakly, the family $(\{\tens{A}_n\}_{n=1}^\infty, \{\tens{P}_n\}_{n=1}^\infty)$ has a joint limit $(a, p)$. From Voiculescu's Theorem \citep[Theorem 3.8]{VOICULESCU-1991}, $(\{\tens{Q}_n\}_{n=1}^{\infty}, \{\tens{Q}_n^\ast\}_{n=1}^\infty)$ has a limit $(u,u^\ast)$ and $(\{\tens{A}_n\}_{n=1}^\infty, \{\tens{P}_n\}_{n=1}^\infty)$ and $\{\tens{Q}_n\}_{n=1}^{\infty}, \{\tens{Q}_n^\ast\}_{n=1}^\infty)$ are asymptotically free. To show that $\{\tilde{\tens{A}}_n\}_{n=1}^\infty$ and $\{\tens{P}_n\}_{n=1}^\infty$ are asymptotically free, let $p_i(u^\ast a u)$ and $q_i(p)$ be centered polynomials such that $\tr_{\infty}(p_i) = \tr_{\infty}(q_i) = 0$. Note that $p_i(u^\ast a u) = u^\ast p_i(a) u$ since $u^\ast u$ is the identity. Now since $\tr_{\infty}(x y) = \tr_{\infty}(y x)$, 
    \begin{align*}
        \tr_{\infty}(p_1(u^\ast a u)q_1(p)p_2(u^\ast a u)q_2(p)\cdots) &= \tr_{\infty}(u^\ast p_1(a) u q_1(p) u^\ast p_2(a) u q_2(p) \cdots) \\
        &= \tr_{\infty}(p_1(a)u q_1(p) u^\ast p_2(a) u q_2(p) \cdots u^\ast) = 0,
    \end{align*}
    since it is an alternating sequence of elements belonging to $\{a,p\}$ and $\{u,u^\ast\}$. Since $p_i$ and $q_i$ were arbitrary, $\tilde{a} = u^\ast a u$ and $p$ are freely independent.
    
    To identify the distribution of $\tilde{b} = p \tilde{a} p$, we note that $p^{1/2} = p$ and so the S-transform of $\tilde{b}$, denoted $S_{\tilde{b}}$ satisfies $S_{\tilde{b}} = S_{\tilde{a}} S_p = S_a S_p$. Therefore, we will need to find the S-transform of $p$.
    
    To do so, note that $m_n = \int x^n \nu_p(\mathrm{d} x) = \tau$ for any $n$. Therefore, the truncated moment-generating function $M_p(z) = \sum_{n=1}^\infty \tau z^n = \frac{\tau z}{1-z}$. The inverse is given by $\chi_p(z) = \frac{z}{z + \tau}$ and so
    \[
        S_p(z) = \frac{z+1}{z} \chi(z) = \frac{z+1}{z+\tau}.
    \]
    Consequently,
    \begin{equation}
        \label{eq:NSStransform}
        S_{\tilde{b}}(z) = \frac{z + 1}{z + \tau} S_a(z).
    \end{equation}
    
    Now observe that $\tilde{\tens{B}}_n$ is a block matrix, with the top-left $k_n \times k_n$ block comprised of $\tens{B}_n$, and every other elements is zero. Consequently, $\mbox{tr}(\tens{B}_n^k) = \mbox{tr}(\tilde{\tens{B}}_n^k)$ and so $\tr_n(\tilde{\tens{B}}_n^k) = \frac{1}{n} \mbox{tr}(\tens{B}_n^k) = \frac{k_n}{n} \tr_n(\tens{B}_n^k)$. The limit of $\tilde{\tens{B}}_n$ is $\tilde{b} = p \tilde{a} p$ and so $\tens{B}_n$ has a limit $b$ satisfying
    \[
        \tr_{\infty}(b^k) = \frac{1}{\tau} \tr_{\infty}(\tilde{b}^k).
    \]
    Therefore,
    \[
        M_{\tilde{b}}(z) = \sum_{k=1}^\infty \tr_{\infty}(\tilde{b}^k) z^k = \tau \sum_{k=1}^\infty \tr_{\infty}(b^k) z^k = \tau M_b(z).
    \]
    For the inverses, since $z = M_{\tilde{b}}(\chi_{\tilde{b}}(z)) = \tau M_b(\chi_{\tilde{b}}(z))$, we find that
    \[
        \chi_{\tilde{b}}(z) = \chi_b\left(\frac{z}{\tau}\right).
    \]
    Now observe that
    \begin{align*}
        S_{\tilde{b}}(z) &= \frac{z+1}{z} \chi_b\left(\frac{z}{\tau}\right) \\
        S_b\left(\frac{z}{\tau}\right) &= \frac{\frac{z}{\tau}+1}{\frac{z}{\tau}} \chi_b\left(\frac{z}{\tau}\right) = \frac{z+\tau}{z}\chi_b\left(\frac{z}{\tau}\right),
    \end{align*}
    and so by equating the two through $\chi_b$, 
    \[
        S_{\tilde{b}}(z) = \frac{z+1}{z+\tau} S_b\left(\frac{z}{\tau}\right).
    \]
    Equating with \eqref{eq:NSStransform} gives $S_b(z) = S_a(\tau z)$, completing the first part of the result. Now we need to express this relationship in terms of the R-transform. To do so, recall from \Cref{lem:RSTrans} that if $y = z S_b(z)$, then $z = y R_b(y)$. Since $S_b(z) = S_a(\tau z)$, $y = z S_b(z) = z S_a(\tau z)$ implies that
    \[
        \tau y = (\tau z)S_a(\tau z).
    \]
    This immediately implies that
    \[
        \tau z = (\tau y)R_a(\tau y).
    \]
    But since $y = z S_b(z)$ also implies that $z = y R_b(y)$, there is
    \[
        \tau y R_b(y) = \tau y R_a(\tau y).
    \]
    Therefore, finally, $R_b(y) = R_a(\tau y) = R(\tau y)$, completing the proof.
\end{proof}


\subsection{Infinite Decompressibility} \label{sec:decompressibility}

The Nica--Speicher Theorem guarantees that the distribution obtained under free compression is well-defined through the R-transform by taking $\tau < 1$. However, it does not imply that the inverse operation, taking $\tau > 1$, is generally well-defined. Indeed, there are probability measures where free decompression need not be valid. In this section, we define the class of probability measures for which free decompression can be applied \emph{without bound} on $\tau$. This motivates the following definition.

\begin{definition}[Infinitely Decompressible Measure] \label{def:decompressible}
    A probability measure \(\nu\) with R-transform $R$ is \bemph{infinitely decompressible} if for every \(\tau \geq 1\), there exists a probability measure $\nu_\tau$ with R-transform satisfying $R_{\nu_\tau}(z) = R(\tau z)$.
\end{definition}

Since the R transform admits the series expansion
\begin{equation*}
    R_{\nu_\tau}(w) = \sum_{k \geq 1} r_k(\nu_\tau) w^{k-1},
\end{equation*}
where \(r_k(\nu_\tau)\) is the \(k\)-th \bemph{free cumulant} of \(\nu_\tau\), the Nica--Speicher Theorem implies
\begin{equation*}
    r_k(\nu_\tau) = \tau^{k-1} r_k(\nu), \qquad k \geq 1.
\end{equation*}

Let \(D_c \nu\) denote the \emph{dilation} of a measure \(\nu\), defined as the law of \(cX\) when \(X \sim \nu\). This is also written as \(c_{\ast} \nu\), as this operation is the \emph{push-forward} of \(\nu\) induced by the map \(x \mapsto cx\). Also, denote by \(\nu^{\boxplus s}\) the \(s\)-fold \emph{free additive convolution power}. The R-transform of dilation and convolution power operations satisfies the identities
\begin{subequations}
\begin{align}
    &R_{D_c \nu}(w) = c\, R_{\nu}(c w), \label{eq:dilation} \\
    &R_{\nu^{\boxplus s}}(w) = s \, R_{\nu}(w), \label{eq:power}
\end{align}
\end{subequations}
or, in terms of cumulants:
\begin{subequations}
\begin{align}
    &r_n(D_c \nu) = c^n r_n(\nu), \\
    &r_n\big(\nu^{\boxplus s}\big) = s r_n(\nu).
\end{align}
\end{subequations}
From the above, it is immediate that dilation and power operations commute. The fractional powers \(s\) in the above relations are due to \citet[Proposition 1.2]{NICA-1996} (see also \citep[Proposition 1.2]{SHLYAKHTENKO-2022}).

\begin{definition}
    The measure \(\nu \in \mathcal{P}(\mathbb{R})\) is \bemph{freely infinitely divisible} (FID, also called \(\boxplus\)-infinitely divisible), if for any \(n \in\mathbb{N}\), there exists \(\nu_n \in \mathcal{P}(\mathbb{R})\) such that
    \begin{equation*}
        \nu = \underbrace{\nu_n \boxplus \dots \boxplus \nu_n}_{n \text{ times}} = \nu_n^{\boxplus n}.
    \end{equation*}
\end{definition}

A necessary and sufficient condition for a measure to be FID is that its R transform should have holomorphic extension defined on \(\mathbb{C}^{+}\) with values in \(\mathbb{C}^{-} \setminus \mathbb{R}\) \citep[Theorem 5.10]{BERCOVICI-1993a}. The condition that $\nu$ is FID is closely related to free decompression through the following proposition. 

\begin{proposition}[Infinitely Decompressible \(\iff\) FID] \label{prop:fid}
    A probability law \(\nu\) is infinitely decompressible if and only if it is freely infinitely divisible.
\end{proposition}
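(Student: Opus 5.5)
The whole argument is a comparison of R-transforms using the identities \eqref{eq:dilation} and \eqref{eq:power}. The central observation is that free decompression by $\tau$ equals a dilation of a fractional convolution power:
\begin{equation*}
    R_{\nu_\tau}(w) = R(\tau w) = \tau^{-1}\cdot \tau R_\nu(\tau w),
\end{equation*}
so that, formally, $\nu_\tau = \big(D_\tau \nu\big)^{\boxplus 1/\tau}$. Equivalently, $\nu_\tau = D_\tau\big(\nu^{\boxplus 1/\tau}\big)$, since dilation and powers commute. Since $\tau\ge 1$, we have $1/\tau\in(0,1]$. Therefore infinite decompressibility is the statement that every fractional power $\nu^{\boxplus t}$ with $t\in(0,1]$ exists as a probability measure. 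Dilation by a positive constant always preserves being a probability measure and is invertible, so it plays no role in existence.

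\textbf{($\Leftarrow$)} Assume $\nu$ is FID. By the Bercovici--Voiculescu characterization \citep[Theorem 5.10]{BERCOVICI-1993a}, $R_\nu$ extends holomorphically to $\mathbb{C}^+$ with values in $\mathbb{C}^-\cup\mathbb{R}$. Hence $tR_\nu$ has the same property for every $t>0$, and so $tR_\nu$ is the R-transform of a (FID) probability measure $\nu^{\boxplus t}$. This gives the existence of $\nu^{\boxplus t}$ for all $t>0$, which is the classical statement that FID measures embed in a $\boxplus$-semigroup. Setting $\nu_\tau \coloneqq D_\tau(\nu^{\boxplus 1/\tau})$ and applying \eqref{eq:dilation}--\eqref{eq:power} gives $R_{\nu_\tau}(w)=\tau\cdot\tau^{-1}R_\nu(\tau w)=R(\tau w)$, so $\nu$ is infinitely decompressible.

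\textbf{($\Rightarrow$)} Assume that $\nu_\tau$ exists for every $\tau\ge1$. Fix $n\in\mathbb{N}$, take $\tau=n$, and define $\eta_n \coloneqq D_{1/n}\nu_n$. Then \eqref{eq:dilation} gives
\begin{equation*}
    R_{\eta_n}(w)=n^{-1}R_{\nu_n}(w/n)=n^{-1}R_\nu(w).
\end{equation*}
Applying \eqref{eq:power} to $\eta_n^{\boxplus n}$ then yields R-transform $n\cdot n^{-1}R_\nu = R_\nu$. Since the R-transform determines the measure (as a germ near $0$, or on a Stolz angle at infinity in the Cauchy-transform picture), we conclude $\eta_n^{\boxplus n}=\nu$. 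As $n$ was arbitrary, $\nu$ is FID.

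The main obstacle is in ($\Leftarrow$): justifying that $\nu^{\boxplus t}$ exists for $t\in(0,1)$ for a general, possibly non-compactly supported, FID measure. Here I would rely on the Bercovici--Voiculescu analytic characterization rather than \citet{NICA-1996}, whose fractional powers only cover $t\ge1$. A secondary issue is to make the R-transform identities \eqref{eq:dilation}--\eqref{eq:power} and the uniqueness of the measure given its R-transform rigorous on the appropriate domain where $R$ is defined (a truncated cone). This is standard, and I would state it by appeal to the same reference.
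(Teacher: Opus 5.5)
Your proof is correct and follows the paper's argument. For sufficiency you construct $\nu_\tau = D_\tau(\nu^{\boxplus 1/\tau}) = (D_\tau\nu)^{\boxplus 1/\tau}$ as the paper does, and for necessity you take $\tau = n$ and the dilation $D_{1/n}\nu_n$ to obtain an $n$-th free convolution root, identically to the paper; your appeal to the Bercovici--Voiculescu characterization to justify the existence of $\nu^{\boxplus t}$ for $t<1$ makes explicit a step the paper only asserts.
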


\begin{proof}
    (\emph{Sufficiency}) If \(\nu\) is FID, then for each \(\tau \geq 1\), the dilation \(D_\tau \nu\) is also FID, hence the free convolution power \(\nu_\tau \coloneqq \left(D_\tau \nu\right)^{\boxplus \tau^{-1}}\) exists. Using \eqref{eq:dilation} and \eqref{eq:power}, we obtain
    \begin{equation*}
        R_{\nu_\tau}(w) = R_{\left(D_{\tau} \nu \right)^{\boxplus \tau^{-1}}}(w) = \tau^{-1} R_{D_{\tau} \nu}(w) = R_{\nu}(\tau w),
    \end{equation*}
    so the free decompression flow exists.

    (\emph{Necessity}) Assume \(\nu\) is decompressible. Fix \(n \in \mathbb{N}\) and set \(\tau = n\), yielding a measure \(\nu_n\) such that \(R_{\nu_n}(w) = R_{\nu}(n w)\). Let \(\sigma_{n} \coloneqq D_{n^{-1}} \nu_n\). Then \(R_{\sigma_{n}}(w) = n^{-1} R_{\nu_n}(n^{-1} w) = n^{-1} R_{\nu}(w)\), hence \(R_{\sigma_{n}^{\boxplus n}}(w) = R_{\nu}(w)\) and thus \(\nu = \sigma_{n}^{\boxplus n}\). Since this holds for all \(n\), \(\nu\) is FID.
\end{proof}

We also note that any FID distribution can be approximated by compound free Poisson distributions \cite[p. 117]{SPEICHER-2015} given in \Cref{sec:compound-poisson}.


\subsection{The PDE Approach and Challenges} 
\label{sec:bg-pde-hard}

Let $\nu$ be a probability measure and let $\{\nu_\tau\}_{\tau \geq 1}$ be a family of probability measures obtained by free decompression such that for each $\tau$, denoting by $R_\tau$ the R-transform of $\nu_\tau$, $R_\tau(z) = R(\tau z)$, where $R$ is the R-transform of $\nu$. For each $\tau \geq 1$, let $m_\tau$ denote the Stieltjes transform corresponding to $\nu_\tau$. \citet{AMELI-2025b} proved that  $m(t,z) \coloneqq m_{e^t}(z)$ satisfies the PDE
\begin{equation}
    \label{eq:FreeDecPDE}
    \frac{\partial m(t,z)}{\partial t} = -m(t,z) + m(t,z)^{-1} \frac{\partial m(t,z)}{\partial z}.
\end{equation}
In the following exposition, the time change $\tau = e^t$ will be used repeatedly with reference to the flow \eqref{eq:FreeDecPDE}. 
While free decompression is always achievable in principle via \eqref{eq:FreeDecPDE}, \emph{in practice}, it is \textbf{extraordinarily difficult} to perform for several reasons. The primary difficulty lies in the ill-posedness of \eqref{eq:FreeDecPDE}, which can be seen in two ways. 

The first is through the relationship between \eqref{eq:FreeDecPDE} and the Burgers equation: if $\tau = e^t - 1$ and $u(\tau, z) = -e^{-t} / m(\tau, z)$, then $u$ satisfies the inviscid complex Burgers equation
$\partial_\tau u + u \partial_z u = 0$. 
This equation is well-studied in the setting where $m$ is real-valued, including in related problems in RMT \citep{BLAIZOT-2010, BLAIZOT-2013, BLAIZOT-2015}, and a number of techniques have been developed to obtain accurate approximations of solutions. 
Arguably the most important consideration is the development of \emph{shocks}, where after some time $t$, multiple solutions emerge. 
These shocks present fatal instabilities when solving the PDE directly. 
To deal with this issue, one often introduces regularization through the viscous complex Burgers equation $\partial_\tau u + u \partial_z u = \xi \partial_{zz} u$, taking $\xi$ to be small \citep{SENOUF-1996}. 
This equation has a unique explicit solution, called the \emph{Cole-Hopf solution}, and taking $\xi \to 0^+$ yields the \emph{entropy solution} to the inviscid Burgers equation. 
In the real-valued case, the entropy solution is given by the Hopf-Lax formula. 
However, the current setting requires that $m$ is complex valued, and methods developed for the real-valued setting \citep{SENOUF-1996} cannot be readily deployed. 
When performing free decompression, shocks inevitably emerge, and while the Cole-Hopf solution can be computed, it need not converge as $\xi \to 0^+$ to the solution where $m(\tau,z)$ remains a valid Stieltjes transform. 
To our knowledge, no examination of the complex inviscid Burgers equation has presented a method to reliably estimate this particular solution.

The second is encountered when applying the method of characteristics to solve \eqref{eq:FreeDecPDE}. This is the approach ultimately taken in \citet{AMELI-2025b}, but it is subject to several limitations. The characteristic curves for \eqref{eq:FreeDecPDE} are straightforward to obtain, but at the same time where shocks occur in the Burgers equation formulation, the characteristic curves cross a branch cut in the complex plane. To extend beyond this cut requires \emph{analytic continuation} of the Stieltjes transform $m$. Numerical analytic continuation is notoriously ill-posed, with worst-case error from na\"{i}ve approaches growing doubly exponentially fast in $t$, or exponentially fast in the original matrix size \citep{TREFETHEN-2023}. In reality, 
free decompression exhibits worst-case error that grows no more than polynomially in the matrix size \citep[Proposition 3]{AMELI-2025b}, and 
\citet{AMELI-2025b} presents an approach using polynomial approximation with error rates often growing only \emph{logarithmically} in the size of the original matrix, with the run-time complexity being independent of $t$. 
At these rates, free decompression becomes impressively practical. 
Unfortunately, the approach is mathematically brittle, highly sensitive to the accuracy of the curve-fit used to obtain the initial data, and imposes assumptions on the form of the spectral density that rarely hold outside of the simplest synthetic cases. 
Our primary contribution is to identify an approach that maintains this high level of performance under less restrictive assumptions, rendering free decompression practical for more realistic examples.


\subsection{Formal Free Decompression along Spectral Curves}
\label{sec:formal}

Now, recall that the functional relation for the R-transform in terms of the Stieltjes transform is given by \citet[Definition 5]{COUILLET-2022} as
\[
    R(-m(z)) = z + \frac{1}{m(z)}.
\]
According to the Nica--Speicher Theorem, the operation of free decompression (inverting the operation of free compression) coincides with the map $\nu \mapsto \nu_\tau$ where the R-transforms $R$ and $R_\tau$ of $\nu$ and $\nu_\tau$, respectively, satisfies
\[ 
    R_\tau(z) = R(\tau z),\qquad \tau > 1.
\]
If $m$ is defined to satisfy the formal algebraic relation $P(z,m) = 0$ for some polynomial $P$, then, formally, we can track the behavior of the Stieltjes transform of $\nu_\tau$ under free decompression. In doing so, we can track solutions to free decompression while avoiding any potential issues with $m$ and/or $R$ being multivalued. This is captured in \Cref{thm:FormalFreeDecomp}.

\begin{theorem} \label{thm:FormalFreeDecomp}
    For a polynomial $P(z,m)$, define the following spectral curves:
    \begin{align*}
        \mathcal{C}_{m} & =\{(z,m) \in \mathbb{C}^2 \mid P(z,m)=0\}, & \text{(graph of }m\text{)} \\
        \mathcal{C}_{R} & =\{(-m,z+1/m) \mid (z,m)\in\mathcal{C}_{m}\}, & \text{(R transform of }m\text{)} \\
        \mathcal{C}_{R_{\tau}} &= \{(w,r) \in \mathbb{C}^2 \mid (\tau w,r)\in\mathcal{C}_{R}\}, & \text{(R transform of }m_\tau\text{)} \\
        \mathcal{C}_{m_{\tau}} &= \{(z,\omega) \in \mathbb{C}^2 \mid \omega\neq0\text{ and }(-\omega,z+1/\omega)\in\mathcal{C}_{R_{\tau}}\}. & \text{(graph of }m_\tau\text{)}
    \end{align*}
    Then $$\mathcal{C}_{m_\tau} \subseteq \mathcal{C}_{P_\tau} \coloneqq \left\{(z,\omega) \in \mathbb{C}^2 \,\middle|\, P\left(z+\frac{\tau-1}{\tau\omega},\tau\omega\right)=0\right\}.$$
\end{theorem}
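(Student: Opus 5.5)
This is a direct set-membership chase: take an arbitrary point of \(\mathcal{C}_{m_\tau}\), unwind the four definitions in \Cref{thm:FormalFreeDecomp}, and show that the resulting point of \(\mathcal{C}_m\) is exactly the one appearing in the definition of \(\mathcal{C}_{P_\tau}\). No analytic properties of \(m\) or \(R\) are used, only the set-theoretic definitions, so multivaluedness causes no trouble.

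Fix \((z,\omega)\in\mathcal{C}_{m_\tau}\). By definition \(\omega\neq0\) and \((-\omega,\,z+1/\omega)\in\mathcal{C}_{R_\tau}\). The definition of \(\mathcal{C}_{R_\tau}\) then gives \((-\tau\omega,\,z+1/\omega)\in\mathcal{C}_R\).

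By the definition of \(\mathcal{C}_R\), there exists \((\zeta,y)\in\mathcal{C}_m\) such that
\[
-y=-\tau\omega,\qquad \zeta+\frac1y=z+\frac1\omega .
\]
Here \(y\) must be nonzero for \(1/y\) to make sense, and indeed it is, since the map defining \(\mathcal{C}_R\) is only defined at points with \(m\neq0\), and in any case \(y=\tau\omega\neq0\). Solving the two equations gives
\[
y=\tau\omega,\qquad \zeta=z+\frac1\omega-\frac1{\tau\omega}=z+\frac{\tau-1}{\tau\omega}.
\]

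Since \((\zeta,y)\in\mathcal{C}_m\), we have \(P(\zeta,y)=0\), which is precisely
\[
P\!\left(z+\frac{\tau-1}{\tau\omega},\,\tau\omega\right)=0,
\]
that is, \((z,\omega)\in\mathcal{C}_{P_\tau}\). This proves the inclusion.

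The only point needing care is the implicit nonvanishing of \(m\) in the definition of \(\mathcal{C}_R\), together with \(\tau\neq0\), which guarantee that every division above is legitimate. The inclusion is generally not an equality: spurious points of \(\mathcal{C}_{P_\tau}\) may arise, for example from clearing the denominator \(\omega\). This is why the result is stated only as an inclusion, and why \Cref{thm:SpectralCurve} follows by applying it to the physical points \((z,m_\tau(z))\).
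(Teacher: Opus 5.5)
Your proof is correct and follows the paper's argument essentially step for step: unwind $\mathcal{C}_{m_\tau}\to\mathcal{C}_{R_\tau}\to\mathcal{C}_R$ to obtain a point $(\zeta,y)\in\mathcal{C}_m$ with $y=\tau\omega$ and $\zeta=z+\frac{\tau-1}{\tau\omega}$, then read off $P(\zeta,y)=0$. Your remarks on $y\neq 0$ and $\tau\neq 0$ only make explicit divisions that the paper performs implicitly.
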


\begin{proof}
    We begin by unpacking $\mathcal{C}_{m_\tau}$. Note that $(z,\omega) \in \mathcal{C}_{m_\tau}$ if and only if (excluding the $\omega \neq 0$ case), $(-\omega, z+1/\omega) \in \mathcal{C}_{R_\tau}$. This implies that $(-\tau\omega, z + 1/\omega) \in \mathcal{C}_R$, which in turn implies that there exists $(z_0,m) \in \mathcal{C}_m$ such that
    \[
        -m = -\tau \omega\quad \mbox{and}\quad z_0 + \frac{1}{m} = z + \frac{1}{\omega}.
    \]
    Consequently, 
    \[
        m = \tau \omega\quad \mbox{and}\quad z_0 = z + \frac{\tau - 1}{\tau \omega}.
    \]
    Since $(z_0,m) \in \mathcal{C}_m$ implies that $P(z_0, m) = 0$, this implies that there exists $z_0$ such that
    \[
        P(z_0, \tau \omega) = 0\quad\mbox{and}\quad z_0 = z + \frac{\tau - 1}{\tau \omega}.
    \]
    Consequently, any $(z,\omega) \in \mathcal{C}_{m_\tau}$ must also satisfy $(z,\omega) \in \mathcal{C}_{P_\tau}$. 
\end{proof}

\Cref{thm:FormalFreeDecomp} immediately implies \Cref{thm:SpectralCurve}, as we identify $m_\tau$ to be the free decompression of $m$ by the factor $\tau$.

\subsection{Polynomial Evolution under Free Decompression}
\label{sec:coeffmethod}

In this section, we use \Cref{thm:FormalFreeDecomp} to derive a bivariate polynomial $Q_\tau$ such that the Stieltjes transform $m_\tau$ under free decompression by a decompression ratio $\tau$ satisfies $Q_\tau(z, m_\tau(z)) = 0$ for almost all $z$. Since this polynomial can be readily obtained from $P$ and the roots obtained using standard methods, it provides a fast method of constructing candidate solutions for $m_\tau$. The polynomial is constructed in \Cref{cor:CoefficientMethod}. 

\begin{corollary} \label{cor:CoefficientMethod}
    Suppose that $m_0: \mathbb{C}^+ \to \mathbb{C}^+$ satisfies an algebraic relation of the form $P(z,m_0(z)) = 0$ for $z \in \mathbb{C}^+$ where
    \[
        P(z,m) = \sum_{j=0}^{d_z} \sum_{k=0}^s c_{jk} z^j m^k.
    \]
    Then for $\mathcal{C}_{m_\tau}$ defined in \Cref{thm:FormalFreeDecomp}, 
    \[
        \mathcal{C}_{m_\tau} \subseteq \left\{ (z,m) \in \mathbb{C}^2 \, \middle|\, Q_\tau(z,m) = 0 \right\},
    \]
    where $Q_\tau(z,m)$ is the polynomial given by
    \[
        Q_\tau(z,m) \coloneqq \sum_{k=0}^{d_z} \sum_{l=0}^{d_z+s} c_{kl}^{(\tau)} z^k m^l,
    \]
    and
    \[
        c_{pq}^{(\tau)} \coloneqq \sum_{j=p}^{d_z} c_{j,q-d_z+j-p} \tau^{q-d_z+j-p}\binom{j}{p} (1-\tau^{-1})^{j-p},
    \]
    where $c_{ij} = 0$ if $j \notin \{0,\dots,s\}$. 
\end{corollary}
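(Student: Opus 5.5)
The plan is to reduce the corollary to \Cref{thm:FormalFreeDecomp} and then clear denominators. Take any $(z,\omega) \in \mathcal{C}_{m_\tau}$. By definition $\omega \neq 0$, and \Cref{thm:FormalFreeDecomp} gives
\[
P\Bigl(z + \tfrac{a}{\omega}, \tau\omega\Bigr) = 0, \qquad a \coloneqq \tfrac{\tau-1}{\tau} = 1-\tau^{-1}.
\]
This expression is a Laurent polynomial in $\omega$ whose most negative power is at least $\omega^{-d_z}$, since $z$ appears with degree at most $d_z$. I will multiply it by $\omega^{d_z}$. Because $\omega \neq 0$, the product $\omega^{d_z} P(z + a/\omega, \tau\omega)$ vanishes exactly when $P(z+a/\omega,\tau\omega)$ does. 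So it suffices to show that
\[
\omega^{d_z} P\bigl(z + a/\omega, \tau\omega\bigr) = Q_\tau(z,\omega)
\]
as polynomials in $(z,\omega)$.

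Next I expand term by term. For each monomial $c_{jk} z^j m^k$, the binomial theorem gives
\[
\omega^{d_z} c_{jk}\, (z+a\omega^{-1})^j\, \tau^k \omega^k = \sum_{p=0}^{j} c_{jk}\,\tau^k \binom{j}{p} a^{j-p}\, z^p\, \omega^{\,k+d_z-j+p}.
\]
Set $q \coloneqq k + d_z - j + p$, equivalently $k = q - d_z + j - p$. The exponent $q$ is nonnegative, since $j - p \le j \le d_z$. It is also at most $d_z + s$, since $k \le s$ and $p \le j$. Likewise $p$ ranges over $0,\dots,d_z$.

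Then I collect the coefficient of $z^p\omega^q$ by summing over $j$ from $p$ to $d_z$, with $k$ determined by $(p,q,j)$. Any triple for which $k \notin \{0,\dots,s\}$ contributes nothing, which is exactly the stated convention $c_{ij}=0$ out of range. The resulting coefficient is
\[
\sum_{j=p}^{d_z} c_{j,\,q-d_z+j-p}\;\tau^{\,q-d_z+j-p}\binom{j}{p}(1-\tau^{-1})^{j-p} = c^{(\tau)}_{pq},
\]
matching the corollary with $a = 1-\tau^{-1}$.

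The main difficulty is only this index bookkeeping. I need to check that the reindexing $(j,k,p) \mapsto (p,q,j)$ is a bijection onto the stated summation ranges, and that no terms are lost or double counted. The key checks are the bounds $0\le q\le d_z+s$ and $0\le p\le d_z$, together with the out-of-range convention for $c_{ij}$.

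The hypothesis about $m_0$ is used only through \Cref{thm:FormalFreeDecomp}, which supplies the relation $P(z+a/\omega,\tau\omega)=0$ on $\mathcal{C}_{m_\tau}$. The proof does not depend on whether $Q_\tau$ is nonzero, since only an inclusion is claimed.
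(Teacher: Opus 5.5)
Your proposal is correct and follows essentially the same route as the paper. Both arguments invoke \Cref{thm:FormalFreeDecomp}, multiply by $\omega^{d_z}$ (legitimate since $\omega\neq 0$), expand binomially, reindex via $q = k + d_z - j + p$, and collect coefficients using the out-of-range convention for $c_{ij}$. The only cosmetic difference is that you absorb $a = 1-\tau^{-1}$ from the outset. The paper instead expands in powers of $(\tau-1)/\tau$ and converts $\tau^{q-d_z}(\tau-1)^{j-p} = \tau^{q-d_z+j-p}(1-\tau^{-1})^{j-p}$ only at the end.
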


\begin{proof}
    From \Cref{thm:FormalFreeDecomp}, we have the inclusion
    \[
        \mathcal{C}_{m_\tau} \subseteq \left\{(z,m) \in \mathbb{C}^2 \, \middle| \, P\left(z + \frac{\tau - 1}{\tau m}, \tau m\right) = 0\right\}.
    \]
    In particular, for $(z,m) \in \mathcal{C}_{m_\tau}$, $m \neq 0$ and
    \[
        m^{d_z} P\left(z + \frac{\tau - 1}{\tau m},\tau m\right) = 0.
    \]
    We will show that the left hand side of this expression is the polynomial $Q_\tau(z,m)$. First note that
    \[
        P\left(z + \frac{\tau - 1}{\tau m}, \tau m\right) = \sum_{j=0}^{d_z} \sum_{k=0}^s c_{jk} \left(z + \frac{\tau - 1}{\tau m}\right)^j (\tau m)^k.
    \]
    Expanding the binomial gives
    \[
        \left(z + \frac{\tau - 1}{\tau m}\right)^j = \sum_{i=0}^j \binom{j}{i} z^{j - i} \left(\frac{\tau - 1}{\tau m}\right)^i = \sum_{i=0}^j \binom{j}{i} z^{j-i} (\tau - 1)^i \tau^{-i} m^{-i}.
    \]
    Therefore,
    \[
        m^{d_z} P\left(z + \frac{\tau - 1}{\tau m}, \tau m\right) = \sum_{j=0}^{d_z} \sum_{k=0}^s  \sum_{i=0}^j c_{jk} \binom{j}{i} (\tau - 1)^i \tau^{k-i} z^{j-i} m^{k-i+d_z}.
    \]
    Now we introduce new summation indices in place of $i,k$ by letting $p = j - i$ and $q = k - i + d_z$. Then $i = j - p$ and $k = q + i - d_z = q - d_z + j - p$ and $p \in \{0,\dots,d_z\}$, $q \in \{0,\dots,d_z+s\}$. Using these indices, since $j \geq p$ and $\binom{j}{i} = \binom{j}{j-i}$,
    \[
        m^{d_z} P\left(z + \frac{\tau - 1}{\tau m}, \tau m\right) = \sum_{p=0}^{d_z} \sum_{q=0}^{d_z + s}\sum_{j=p}^{d_z} c_{j, q-d_z+j-p} \binom{j}{p} (\tau - 1)^{j-p} \tau^{q-d_z} z^p m^q,
    \]
    where we have adopted the convention that $c_{j,k} = 0$ if $k \notin \{0,\dots,s\}$. Since $\tau^{q-d_z} (\tau - 1)^{j-p} = \tau^{q-d_z+j-p} (1-\tau^{-1})^{j-p}$. Hence
    \[
        m^{d_z} P\left(z + \frac{\tau - 1}{\tau m}, \tau m\right) = \sum_{p=0}^{d_z} \sum_{q=0}^{d_z + s} c_{pq}^{(\tau)} z^p m^q,
    \]
    as required.
\end{proof}

As stated, $Q_\tau$ need not have the same degree in $m$ as the initial polynomial $P$. However, we recall that polynomial relations of this form are not unique, and they can be turned into a unique minimal form by dividing by common factors of $z$ and $m$, and normalizing so that $\sum_p c_{pq}^{(\tau)} = 1$. We refer to the endpoint of this reduction pipeline as the polynomial $P_\tau$. For common examples (e.g., Marchenko-Pastur or semicircle laws), $P_\tau$ often has the same degree as $P$, due to \Cref{lem:inflation} below.

\begin{lemma}\label{lem:inflation}
    If the index set $\mathcal{A}$ for $P$ is upper-triangular, then $\deg_m(P_\tau) \leq \deg_m(P)$.
\end{lemma}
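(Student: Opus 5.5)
The plan is to read ``upper-triangular'' as the condition that every monomial $z^i m^j$ of $P$ with $c_{ij}\neq 0$ satisfies $i\le j$, i.e.\ $e_{\mathrm{max}}\le 0$ in the notation of \Cref{lem:PolyNu0}. This convention is consistent with the paper's examples: the Marchenko--Pastur polynomial $\lambda\sigma^2 z m^2 + (z-\sigma^2(1-\lambda))m + 1$ and the semicircle polynomial $\sigma^2 m^2 + zm + 1$ both have this form. Under this reading, the lemma says that the factor $m^{d_z}$ used in \Cref{cor:CoefficientMethod} to clear denominators is never actually needed. Once that factor is divided out, what remains has $m$-degree at most $s=\deg_m(P)$.

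\textbf{Step 1 (direct substitution).} Expand $P\bigl(z+\tfrac{\tau-1}{\tau m},\tau m\bigr)$ monomial by monomial, exactly as in the proof of \Cref{cor:CoefficientMethod}, but without multiplying by $m^{d_z}$. The monomial $c_{ij}z^i m^j$ becomes
\[
c_{ij}\sum_{l=0}^{i}\binom{i}{l}(\tau-1)^l\tau^{j-l}\, z^{i-l}m^{j-l}.
\]
Since $0\le l\le i\le j$, every exponent satisfies $0\le j-l\le j\le s$. Hence
\[
\tilde Q_\tau(z,m):=P\bigl(z+\tfrac{\tau-1}{\tau m},\tau m\bigr)
\]
is a genuine polynomial in $(z,m)$ with $\deg_m\tilde Q_\tau\le s$. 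It follows that $Q_\tau=m^{d_z}\tilde Q_\tau$.

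\textbf{Step 2 (nonvanishing).} Check that $\tilde Q_\tau\not\equiv 0$. The map $(z,m)\mapsto\bigl(z+\tfrac{\tau-1}{\tau m},\tau m\bigr)$ is birational, with inverse $(\zeta,y)\mapsto\bigl(\zeta-\tfrac{\tau-1}{y},\,y/\tau\bigr)$. Therefore $\tilde Q_\tau\equiv0$ would force $P\equiv0$ on a Zariski-dense set, and hence $P\equiv 0$, a contradiction. Together with \Cref{thm:FormalFreeDecomp}, this shows that $\tilde Q_\tau$ annihilates $\mathcal{C}_{m_\tau}$.

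\textbf{Step 3 (the reduction pipeline).} By construction, the pipeline producing $P_\tau$ divides $Q_\tau$ by its common monomial factors in $z$ and $m$ and then rescales. Since $m^{d_z}\mid Q_\tau$, the pipeline removes at least $m^{d_z}$. Thus $P_\tau$ is a scalar multiple of $\tilde Q_\tau$ divided by a possible further monomial, so $\deg_m P_\tau\le\deg_m\tilde Q_\tau\le s=\deg_m P$.

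\textbf{Main obstacle.} The delicate point is Step 3: the lemma depends on how the pipeline is defined. If the pipeline only divided out the largest power of $m$ common to \emph{all} coefficients, I would need the bookkeeping to show that this power is at least $d_z$. That follows from the explicit formula for $c^{(\tau)}_{pq}$: the coefficient vanishes for $q<d_z$, because it requires $k=q-d_z+j-p\ge j-p$, while upper-triangularity forces $k\ge j$. So I would verify this index inequality directly from \Cref{cor:CoefficientMethod}. I would also flag that if $\mathcal{A}$ has a monomial with $i>j$, then negative powers of $m$ survive and the $m$-degree can genuinely inflate.
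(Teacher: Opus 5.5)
Your proposal is correct and follows the paper's proof. The paper likewise reads upper-triangular as $k\ge j$ for every monomial $z^jm^k$ of $P$. It rewrites each term as $c_{j,j+k}(\tau zm+\tau-1)^j(\tau m)^k$, so that $P(z+\frac{\tau-1}{\tau m},\tau m)$ is a polynomial of $m$-degree at most $s$, and concludes that $m^{d_z}$ factors out of $Q_\tau$.

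Your Steps 2 and 3 (nonvanishing via the birational change of variables, and tracking the monomial division in the reduction pipeline) make explicit what the paper leaves implicit. The index check in your final paragraph is worded confusingly; state it as: $c_{j,k}\neq 0$ with $k=q-d_z+j-p$ requires $k\ge j$, which forces $q\ge d_z+p\ge d_z$.
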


\begin{proof}
    Note that it will suffice to show that $P(z + (1-\tau^{-1})m^{-1}, \tau m)$ is a bivariate polynomial in $z$ and $m$, as this will imply that $m^{d_z}$ can be factorized from $Q_\tau$. If $\mathcal{A}$ is upper-triangular, then every surviving monomial $z^j m^k$ in the expansion for $P$ satisfies $k \geq j$. But this implies that
    \begin{align*}
        P\left(z + \frac{\tau - 1}{\tau m}, \tau m\right) &= \sum_{j=0}^{d_z} \sum_{k=0}^{s-j} c_{j,j+k} \left(z + \frac{\tau - 1}{\tau m}\right)^j (\tau m)^{j+k} \\ &= \sum_{j=0}^{d_z} \sum_{k=0}^{s-j} c_{j,j+k} (\tau z m + \tau - 1)^j (\tau m)^{k},
    \end{align*}
    which has only positive powers of $z$ and $m$.
\end{proof}
For a bivariate polynomial of the form $P(z, m) = q(z) m^2 - p(z) m + 1$, the condition in \Cref{lem:inflation} suggests that \(\deg_z(q) \leq 2\) and \(\deg_z(p) \leq 1\) implies that $P_\tau$ does not have a higher degree than $P$. Many ensemble models exhibit bivariate polynomial relations of this form, including \citep{WIGNER-1955} semicircle law (free Gaussian); \citep{MARCHENKO-1967} (free Poisson); \citep{KESTEN-1959} and \citep{MCKAY-1981} (free binomial); \cite{WACHTER-1978} (free Jacobi); and the general free-Meixner family \citep{SAITOH-2001,ANSHELEVICH-2008} (free analogs of the classical \cite{MEIXNER-1934} laws). The \(Q\) and \(P\) for these ensemble models can be found in \citep[Table B.3]{AMELI-2025b}. 

However, $P$ need not be upper-triangular in general; for example, the deformed Wigner ensemble \citep[Sections 2.2 and 18.3]{PASTUR-2011} is not upper-triangular. This increase in degree exacerbates the most significant difficulty with performing free decompression using \Cref{cor:CoefficientMethod}: identifying the correct root of $P_\tau$ is extraordinarily challenging. For this reason, we will require a more refined approach by tracing along spectral curves to identify the correct root.


\subsection{Free Decompression on Spectral Curves} \label{sec:fd-on-sc}

The previous subsection shows that, once the initial algebraic curve \(P(z, m) = 0\) is given, free decompression produces an evolved algebraic relation whose roots provide all candidate values of the transformed Stieltjes function at time \(\tau\). However, this polynomial evolution alone does not identify the physical branch. This is illustrated in \Cref{fig:diffusion-candidates-final}(a), where the roots of the evolved polynomial form a cloud of candidate densities, only one of which agrees with the empirical spectrum.

\begin{figure}[t]
    \centering
    \includegraphics[width=\textwidth]{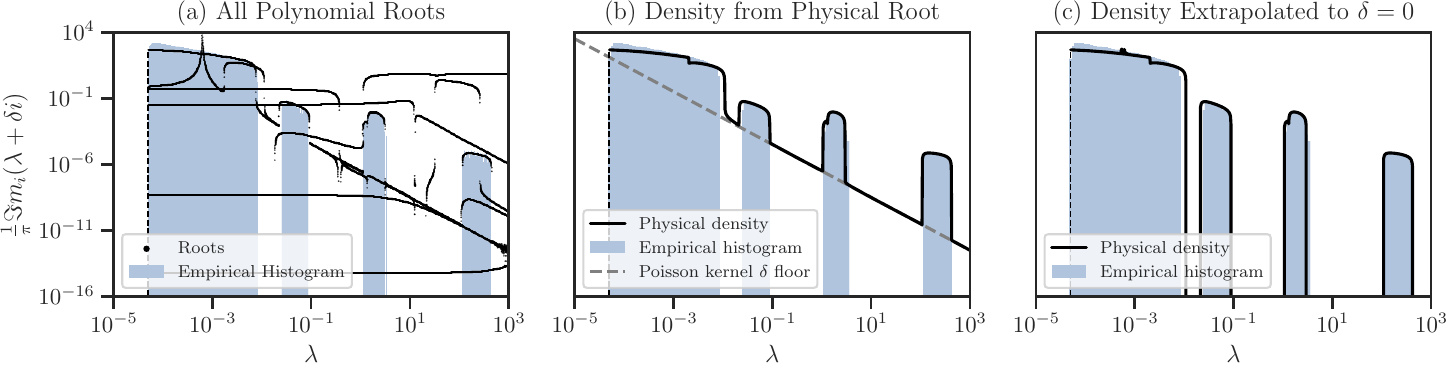}
    \caption{Free decompression on the fitted algebraic spectral curve for the diffusion-model example at the final size \(n = \num{64000}\), evolved from the initial \(n = \num{4000}\) submatrix. Panel (a) shows all candidate densities \(\pi^{-1}\Im m_i(\lambda+i\delta)\) with \(\delta=10^{-6}\), obtained from the seven roots \(m_i\) of the evolved polynomial relation with \(\deg_m(P) = 7\), at the final decompression time. These unordered candidates form a cloud of physical and non-physical sheets. Panel (b) shows the result after tracking the physical branch along the free-decompression flow, yielding a coherent density curve that matches the empirical histogram on the bulks, but still exhibits the finite-\(\delta\) Poisson floor in the spectral gaps. Panel (c) shows the final density after \(\delta\)-extrapolation to \(\delta=0\) using four \(\delta\)-levels and a quadratic Chebyshev fit, which removes the artificial gap baseline while preserving the bulk densities.}
    \label{fig:diffusion-candidates-final}
\end{figure}

This root-selection problem is fundamentally different from the one in \Cref{sec:root-selection}. There, for the initial curve, one may choose an anchor point directly in the \(z\)-plane and continue along that curve. After free decompression, by contrast, the evolved polynomial at a fixed \(\tau>1\) provides no new anchor identifying the physical sheet. Instead, the physical branch must be continued from the known physical branch at \(\tau = 1\) along the free-decompression flow itself.

We now formulate this continuation geometrically. We write \(m_{\tau}(z)\) for the Stieltjes transform after decompression ratio \(\tau\), equivalently at time \(t=\log \tau\). In particular, \(m_0\) denotes the initial Stieltjes transform at \(t=0\), or \(\tau=1\), and should not be confused with the indexed algebraic roots \(m_i\) used earlier. Let the initial fitted spectral curve be
\begin{equation*}
    \mathcal{C} = \left\{ (\zeta, y) \in \mathbb{C}^2 \mid P(\zeta, y) = 0 \right\}.
\end{equation*}
Here, \(y=m_0(\zeta)\) denotes a branch of the initial algebraic Stieltjes transform. Under free decompression by ratio \(\tau \geq 1\), the method of characteristics gives the change of variables
\begin{equation*}
    z = \zeta - \frac{\tau - 1}{y},
    \qquad
    w = \frac{y}{\tau},
\end{equation*}
where \(w = m_{\tau}(z)\) is the decompressed Stieltjes value. Equivalently,
\begin{equation*}
    \zeta = z + \frac{\tau - 1}{\tau w},
    \qquad
    y = \tau w.
\end{equation*}
Substituting this inverse map into \(P(\zeta, y) = 0\) gives the evolved algebraic relation
\begin{equation*}
    P\left(z + \frac{\tau - 1}{\tau w}, \tau w \right) = 0.
\end{equation*}
This is precisely the relation obtained in \Cref{thm:FormalFreeDecomp}. After clearing denominators, it gives the evolved polynomial whose roots provide all algebraic candidates for \(w=m_{\tau}(z)\), but not the physical one.

To track the physical root, we work instead in the original curve coordinates \((\zeta, y) \in \mathcal{C}\). For a fixed target point \(z\), define
\begin{align*}
    F_1(\zeta, y) &\coloneqq P(\zeta, y), \\
    F_2(\zeta, y; \tau, z) &\coloneqq \zeta - \frac{\tau - 1}{y} - z.
\end{align*}
Thus, a point \((\zeta, y) \in \mathcal{C}\) maps to the target \(z\) at decompression ratio \(\tau\) precisely when
\begin{equation*}
    F_1(\zeta, y) = 0,
    \qquad
    F_2(\zeta, y; \tau, z) = 0.
\end{equation*}
At \(\tau = 1\), this system reduces to \(\zeta = z\) and \(y = m_0(z)\), so the initial physical point is known from the branch-selection procedure in \Cref{sec:root-selection}. The task is then to continue this solution in \(\tau\), while keeping \(z\) fixed.

In practice, evolution along the spectral curve can pass through regions where sheets come close, roots change order, or the geometry becomes stiff. To resolve this, we evolve the above system using a predictor--corrector approach, where the predictor uses the local tangent along the constrained curve. Namely, differentiating \(F_1=0\) with respect to \(\tau\), while keeping \(z\) fixed, gives
\begin{equation*}
    \partial_{\zeta}P(\zeta,y)\dot{\zeta} + \partial_y P(\zeta,y)\dot{y} = 0.
\end{equation*}
Similarly, differentiating \(F_2=0\), namely
\begin{equation*}
    \zeta - \frac{\tau - 1}{y} - z = 0,
\end{equation*}
gives
\begin{equation*}
    \dot{\zeta} - y^{-1} + (\tau - 1)y^{-2}\dot{y} = 0,
\end{equation*}
or equivalently,
\begin{equation*}
    \dot{\zeta} + (\tau - 1)y^{-2}\dot{y} = y^{-1}.
\end{equation*}
Combining these two relations gives the \(2 \times 2\) tangent system
\begin{equation*}
    \begin{bmatrix}
        \partial_{\zeta} P(\zeta, y) & \partial_y P(\zeta, y) \\
        1 & (\tau - 1)y^{-2}
    \end{bmatrix}
    \begin{bmatrix}
        \dot{\zeta} \\
        \dot{y}
    \end{bmatrix}
    =
    \begin{bmatrix}
        0 \\
        y^{-1}
    \end{bmatrix}.
\end{equation*}
This tangent system provides the predictor direction for advancing the physical point from one value of \(\tau\) to the next. The predicted point is then corrected by Newton iteration on the nonlinear system \((F_1,F_2)=(0,0)\). After correction, the decompressed Stieltjes value is recovered as
\begin{equation*}
    m_{\tau}(z) = w = \frac{y}{\tau}.
\end{equation*}
The resulting geometric continuation procedure is summarized in \Cref{alg:decompress}. In the implementation, the Newton correction is damped and the \(\tau\)-steps are adaptively subdivided when the local geometry becomes stiff; these safeguards prevent jumps between nearby sheets without changing the underlying geometric continuation described above.

The effect of this procedure is shown in \Cref{fig:diffusion-candidates-final}. The evolved polynomial alone produces the unordered candidate cloud in panel (a), while the geometric continuation in \(\tau\) selects the physical sheet shown in panel (b). This is a stringent test case: the density is log-scale, multi-bulk, and highly unbalanced, with nearby sheets interacting over several orders of magnitude. Nevertheless, the continuation remains on the physical branch and produces a coherent density matching the empirical bulks; the remaining finite-\(\delta\) floor is then removed by the extrapolation step in panel (c).

\begin{algorithm}[ht!]
    \caption{Free decompression by geometric continuation on a spectral curve}
    \label{alg:decompress}
    \SetKwInOut{Input}{Input}
    \SetKwInOut{Output}{Output}
    \DontPrintSemicolon

    \vspace{1mm}
    \Input{
        Polynomial \(P \in \mathbb{R}[z,m]\) such that \(P(z,m_0(z))=0\) (see \Cref{alg:poly}); \\
        Query grid \(\{z_j\}_{j=1}^{N_z}\subset\mathbb{C}^{+}\); \\
        Decompression ratios \(1=\tau_0<\tau_1<\cdots<\tau_T\); \\
        Oracle for the initial physical branch \(m_0(z)\) on \(P(z, m) = 0\) (see \Cref{alg:stieltjes}); \\
        Newton tolerance \(\epsilon>0\).
    }

    \vspace{2mm}
    \Output{
        Physical Stieltjes values \(m_{\tau_{\ell}}(z_j)\) for all requested \(\tau_{\ell}\) and \(z_j\).
    }

    \vspace{2mm}
    \tcp{Initialize the physical sheet at \(\tau_0=1\)}
    \For{\(j = 1,\dots,N_z\)}{
        \(y_j^{(0)} \leftarrow m_0(z_j)\) \tcp*{Select physical root using \Cref{sec:root-selection}}
        \(\zeta_j^{(0)} \leftarrow z_j\) \tcp*{At \(\tau=1\), the map is identity in \(z\)}
        \(m_{\tau_0}(z_j) \leftarrow y_j^{(0)}\) \;
    }

    \vspace{2mm}
    \tcp{Continue the physical sheet in decompression ratio \(\tau\)}
    \For{\(\ell = 1,\dots,T\)}{
        \For{\(j = 1,\dots,N_z\)}{
            Set \((\zeta,y) \leftarrow (\zeta_j^{(\ell-1)}, y_j^{(\ell-1)})\). \;

            \vspace{2mm}
            \tcp{Predict along the tangent of the constrained system}
            Solve
            \begin{equation*}
                \begin{bmatrix}
                    \partial_{\zeta}P(\zeta,y) & \partial_y P(\zeta,y) \\
                    1 & (\tau_{\ell-1}-1)y^{-2}
                \end{bmatrix}
                \begin{bmatrix}
                    \dot{\zeta}\\
                    \dot{y}
                \end{bmatrix}
                =
                \begin{bmatrix}
                    0\\
                    y^{-1}
                \end{bmatrix}.
            \end{equation*}

            \(\Delta \tau \leftarrow \tau_{\ell}-\tau_{\ell-1}\). \;
            \(\zeta_{\mathrm{pred}} \leftarrow \zeta + \Delta\tau\,\dot{\zeta}\). \;
            \(y_{\mathrm{pred}} \leftarrow y + \Delta\tau\,\dot{y}\). \;

            \vspace{2mm}
            \tcp{Correct by projecting back to the decompression system}
            Starting from \((\zeta_{\mathrm{pred}},y_{\mathrm{pred}})\), apply damped Newton to solve
            \begin{equation*}
                P(\zeta,y)=0,
                \qquad
                \zeta - (\tau_{\ell}-1)y^{-1} - z_j = 0.
            \end{equation*}

            \vspace{1mm}
            \eIf{Newton converges to \((\zeta_{\ast},y_{\ast})\) within tolerance \(\epsilon\)}{
                \(\zeta_j^{(\ell)} \leftarrow \zeta_{\ast}\), \quad
                \(y_j^{(\ell)} \leftarrow y_{\ast}\). \;
                \(m_{\tau_{\ell}}(z_j) \leftarrow \tau_{\ell}^{-1} y_{\ast}\). \tcp*{Recover evolved Stieltjes value}
            }{
                Subdivide \([\tau_{\ell-1},\tau_{\ell}]\) and repeat the predictor--corrector step. \tcp*{Avoid sheet jumping in stiff regions}
            }
        }
    }

    \vspace{2mm}
    \Return{\(\{m_{\tau_{\ell}}(z_j)\}_{\ell,j}\).}\;
\end{algorithm}


\section{Evolution of Spectral Quantities}
\label{sec:evo-features}

Free decompression primarily evolves the full density \(\rho(t, x)\), or equivalently its Stieltjes transform \(m(t, z)\). In many applications, however, one only needs derived spectral quantities such as edge locations, cusps (merging/splitting points), gap widths, atoms' weights and locations, or spectral moments. This section shows that these quantities can often be evolved directly from the spectral curve, without reconstructing the density pointwise.


\subsection{Evolving Spectral Edges} 
\label{sec:edge}

The \emph{spectral edges} of a density are of particular interest; for instance, \citep{BONNAIRE-2025} shows that the locations of bulk edges play an important role in the dynamics of diffusion models during training. Informally, if the support of the density \(\rho(t, x)\) is a union of \(k\) disjoint intervals
\begin{equation*}
    I(t) \coloneqq \bigcup_{j=1}^{k} I_j(t), \qquad I_j(t) \coloneqq[a_j(t), b_j(t)],
\end{equation*}
then the edges are the endpoints \(\{a_j(t), b_j(t)\}_{j=1}^{k}\), where \(\rho(t, x)\) vanishes outside \(I(t)\) and is positive inside the interior of each \(I_{j}(t)\).

When the Stieltjes transform \(m(t, z)\) is algebraic, the notion of edge admits a precise characterization in terms of the \emph{branch points} of the spectral curve. Concretely, suppose \(m\) is a branch \(m = m_1(z)\) of the algebraic relation \(P(z, m) = 0\), and define the spectral curve
\begin{equation*}
    \mathcal{C} \coloneqq \left\{ (\zeta, y) \in \mathbb{C}^2 \mid P(\zeta, y) = 0 \right\}.
\end{equation*}
The curve \(\mathcal{C}\) defines a branched covering of the \(\zeta\)-plane, with local sheets given by the roots \(y = y_i(\zeta)\) of \(P(\zeta, y) = 0\). A \emph{branch point} is a point \((\zeta_{\ast}, y_{\ast}) \in \mathcal{C}\) where two (or more) local sheets coalesce; equivalently,
\begin{equation*}
    P(\zeta_{\ast}, y_{\ast}) = 0,
    \qquad \text{and} \qquad
    \partial_y P(\zeta_{\ast}, y_{\ast}) = 0.
\end{equation*}
Branch points determine how sheets glue across branch cuts.

Among branch points, \emph{spectral edges} are distinguished by the \emph{physical sheet}. A real branch point \(x_{\ast} \in \mathbb{R}\) is a spectral edge if the associated branch cut connects the physical sheet to a non-physical sheet, so that crossing the cut switches between \(m_1\) and a different branch. In contrast, other branch points whose cuts glue two non-physical sheets do not bound the support of the density (even if they are real branch points). In multi-bulk situations, this distinction is essential: not every real branch point corresponds to an endpoint of the support.

With this notion in mind, we first characterize the evolution of branch points under free decompression, and then interpret which of the resulting real branch points correspond to spectral edges via their sheet connectivity to the physical branch.

\begin{proposition}[Evolving Branch Point under Free Decompression] \label{prop:fd-branch-point}
    Let \((z_{\ast}, m_{\ast}) \in \mathbb{C}^2\) be a branch point of \(m(t, z)\) with \(m_{\ast} \coloneqq m(t, z_{\ast})\). Define the pullback \((\zeta_{\ast}, y_{\ast}) = \phi_t^{-1}(z_{\ast}, m_{\ast})\). Then, \((\zeta_{\ast}, y_{\ast}) \in \mathcal{C}\), when exists, satisfies
    \begin{subequations} \label{eq:fd-branch-point}
    \begin{align}
        &P(\zeta_{\ast}, y_{\ast}) = 0, \\
        &y_{\ast}^2\ \partial_y P(\zeta_{\ast}, y_{\ast}) - (\tau - 1)\ \partial_{\zeta} P(\zeta_{\ast}, y_{\ast}) = 0,
    \end{align}
    \end{subequations}
    followed by the pushforward \(z_{\ast} = \zeta_{\ast} - (\tau - 1) y_{\ast}^{-1}\). In particular, real solutions \(z_{\ast} \in \mathbb{R}\) of \eqref{eq:fd-branch-point} include the evolving spectral edges of the real density.
\end{proposition}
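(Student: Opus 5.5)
The plan is to treat $\phi_t$ as the characteristic map $(\zeta,y)\mapsto(z,w)=(\zeta-(\tau-1)y^{-1},\,y/\tau)$ of \Cref{sec:fd-on-sc}, with inverse $\zeta=z+(\tau-1)/(\tau w)$, $y=\tau w$. By \Cref{thm:FormalFreeDecomp}, the evolved curve is the image of $\mathcal{C}$ under $\phi_t$, restricted to $y\neq0$. The first condition $P(\zeta_\ast,y_\ast)=0$ is then immediate. If $(z_\ast,m_\ast)$ lies on the evolved curve, its pullback satisfies $P(z_\ast+\frac{\tau-1}{\tau m_\ast},\tau m_\ast)=0$, which is exactly $P(\zeta_\ast,y_\ast)=0$.

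For the second condition I will use the branch-point criterion of \Cref{sec:bg-alg-stieltjes}, applied to the evolved relation $G(z,w)\coloneqq P\bigl(z+\tfrac{\tau-1}{\tau w},\tau w\bigr)$. I use $G$ rather than $Q_\tau=w^{d_z}G$; since $w\neq0$, the two share zeros and, at a root of $G$, the vanishing of $\partial_w$. A branch point of $m_\tau$ at $(z_\ast,m_\ast)$ means $G=0$ and $\partial_wG=0$ there. By the chain rule,
\[
    \partial_w G = \partial_\zeta P\cdot\Bigl(-\frac{\tau-1}{\tau w^2}\Bigr) + \tau\,\partial_y P .
\]
Multiplying by $w^2/\tau$ and substituting $w=y/\tau$, so that $\tau w^2 = y^2/\tau$, gives $-(\tau-1)\partial_\zeta P\,\tau^{-2}\cdots$. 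To keep the constants correct, I will redo this carefully in $(\zeta,y)$ coordinates. Along the curve, $\partial_w G=0$ is equivalent to $dz/dy=0$ on $\mathcal{C}$ parametrized locally by $y$.

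Since $z=\zeta-(\tau-1)y^{-1}$, we have $dz/dy=d\zeta/dy+(\tau-1)y^{-2}$. On $\mathcal{C}$, $d\zeta/dy=-\partial_yP/\partial_\zeta P$. Setting $dz/dy=0$ and multiplying by $y^2\partial_\zeta P$ gives
\[
    -y^2\partial_yP+(\tau-1)\partial_\zeta P=0,
\]
which is the stated second equation. This tangent-based derivation is cleaner: it says the branch points of the pushed-forward covering $(\zeta,y)\mapsto z$ are the critical points of $z$ restricted to $\mathcal{C}$. This mirrors the degenerate row of the $2\times2$ tangent system of \Cref{sec:fd-on-sc}, whose determinant is $(\tau-1)y^{-2}\partial_\zeta P-\partial_yP$, i.e.\ $y^{-2}$ times the expression.

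The main obstacle is justifying the equivalence between ``two sheets of $m_\tau$ coalesce'' and ``$z|_{\mathcal{C}}$ is critical'' at points where the local parametrization degenerates. One bad case is $\partial_\zeta P=0$, so that $y$ is not a local coordinate. The other is where the reduction from $Q_\tau$ to $P_\tau$ removes factors. I would handle this through the $2\times2$ Jacobian: $(F_1,F_2)$ fails to be locally invertible in $(\zeta,y)$ as a function of $z$ exactly when the determinant above vanishes, and this determinant condition is coordinate-free. The cases where $\zeta$ is instead a good coordinate, or where $P$ has a singular point, I would treat as degenerate exceptions covered by the ``when exists'' qualifier.

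Finally, the pushforward formula is just the definition of $\phi_t$. The claim about real edges follows from the discussion preceding the proposition: spectral edges of the evolved density are real branch points of the physical sheet, so they appear among the real solutions $z_\ast$.
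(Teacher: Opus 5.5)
Your proof is correct and is essentially the paper's. Both arguments identify branch points of \(m(t,\cdot)\) with critical points of \(z=\zeta-(\tau-1)y^{-1}\) restricted to \(\mathcal{C}\) and impose \(\mathrm{d}z=0\) there; the paper parametrizes \(\mathcal{C}\) by \(\zeta\) (assuming \(\partial_y P\neq 0\)), while you parametrize by \(y\) (assuming \(\partial_\zeta P\neq 0\)). The chain-rule step you abandoned in fact closes in one line: with \(w=y/\tau\), \(\tau w^2\,\partial_w G = y^2\partial_y P-(\tau-1)\partial_\zeta P\), which gives the second equation with no local-parametrization caveat and justifies directly the branch-point/criticality link that the paper only asserts.
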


\begin{proof}
    The spectral edges of the real density at time \(t\) correspond to real branch points of the map \((\zeta, y) \mapsto z = \zeta - (\tau - 1) y^{-1}\) restricted to the spectral curve \(\mathcal{C}\).
    
    Along the spectral curve \(\mathcal{C}\), implicit differentiation of \(P(\zeta, y) = 0\) gives
    \begin{equation*}
        \mathrm{d} P(\zeta, y) = \frac{\partial P(\zeta, y)}{\partial \zeta} \mathrm{d} \zeta + \frac{\partial P(\zeta, y)}{\partial y} \mathrm{d} y.
    \end{equation*}
    On the spectral curve, \(P(\zeta, y) = 0\), hence \(\mathrm{d} P(\zeta, y) = 0\). Whenever \(\partial_y P(\zeta, y) \neq 0\), it holds
    \begin{equation*}
        \frac{\mathrm{d} y}{\mathrm{d} \zeta} = -\frac{\partial_{\zeta} P(\zeta, y)}{\partial_y P(\zeta, y)}.
    \end{equation*}
    On the other hand, differentiating the projection \(\pi \circ \phi_t(\zeta, y) = z(\zeta, y) = \zeta - (\tau - 1) y^{-1}\) along \(\mathcal{C}\) yields
    \begin{equation*}
        \frac{\mathrm{d} z}{\mathrm{d} \zeta} = 1 + (\tau - 1) y^{-2} \frac{\mathrm{d} y}{\mathrm{d} \zeta} = 1 - (\tau - 1) y^{-2} \frac{\partial_{\zeta} P(\zeta, y)}{\partial_y P(\zeta, y)}.
    \end{equation*}
    The critical points of the projection \(z = \pi \circ \phi_t\) are characterized by \(\mathrm{d} z / \mathrm{d} \zeta = 0\) together with \(P(\zeta, y) = 0\), which is equivalent to \eqref{eq:fd-branch-point}.
\end{proof}

Since spectral edges are a subset of branch points, \Cref{prop:fd-branch-point} applies in particular to the evolution of the edges \(\{a_j(t), b_j(t)\}_{j=1}^k\). For each fixed \(t \geq 0\), the algebraic system \eqref{eq:fd-branch-point} may admit a set of solutions
\begin{equation*}
    \mathcal{B}(t) \coloneqq \left\{ z_{\ast}^{(j)}(t) \right\}_{j=1}^p,
\end{equation*}
where typically \(p \geq k\). The real subset \(\mathcal{B}(t) \cap \mathbb{R}\) provides candidates for spectral edges; one then selects those candidates whose cuts connect the physical sheet to a non-physical sheet. Operationally, this can be checked by whether the physical branch exhibits a nontrivial jump across the real axis at that point (equivalently, by the Stieltjes inversion formula): a candidate \(x_{\ast} \in \mathbb{R}\) corresponds to an edge of the density precisely when the boundary values \(m_1(t, x_{\ast} + i 0^{+})\) and \(m_1(t, x_{\ast} + i 0^{-})\) switch sheets across the associated cut. \Cref{alg:edge} summarizes the resulting procedure.

\begin{algorithm}[ht!]
    \caption{Evolution of spectral edges under algebraic free decompression}
    \label{alg:edge}
    \SetKwInOut{Input}{Input}
    \SetKwInOut{Output}{Output}
    \DontPrintSemicolon

    \vspace{1mm}
    \Input{
        Polynomial \(P \in \mathbb{R}[z, m]\) with \(\deg_m(P) = s\) such that \(P(z, m_0(z)) = 0\) (see \Cref{alg:poly}); \\
        Time grid \(0 = t_0 < t_1 < \cdots < t_T\); \\
        Initial support \(I(0) = \bigcup_{j=1}^{k_0} [a_j(0), b_j(0)]\) (optional, for initialization); \\
        Oracle for the physical branch \(m_0(z)\) on \(\mathbb{C} \setminus I\) (see \Cref{alg:stieltjes}); \hfill {\textit{\textcolor{gray}{\footnotesize{// Only needed if \(I(0)\) is provided.}}}} \\
        Tolerance \(\epsilon > 0\).
    }

    \vspace{2mm}
    \Output{
        Edge trajectories \(\{a_j(t_{\ell}), b_{j}(t_{\ell})\}\) stored as \(T+1\) by \(2 k_0\) with \texttt{NaN}s after merging; \\
        Active bulk count \(k(t_{\ell})\) (number of connected components).
    }

    \vspace{2mm}
    \tcp{Define the edge system from \Cref{prop:fd-branch-point}}
    Set \(\tau \leftarrow e^t\) and define the functions
    \begin{align*}
        G_1(\zeta, y; t) &\coloneqq P(\zeta, y) \\
        G_1(\zeta, y; t) &\coloneqq y^2 \partial_y P(\zeta, y) - (\tau - 1) \partial_{\zeta} P(\zeta, y).
    \end{align*}

    \tcp{Initialization at \(t_0 = 0\)}
    \If{initial edges \(I(0)\) are provided}{
        Initialize guesses \(\{(\zeta_0^{(j)}, y_0^{(j)})\}_{j=1}^{2 k_0}\) as follows: \tcp*{Any reasonable edge-local seeding works}
        \(\quad\triangleright\) Sample \(2 k_0\) points \(\zeta\) near each endpoint \;
        \(\quad\triangleright\) Choose Herglotz-consistent root \(y_0^{(j)} \leftarrow m_0(\zeta_0^{(j)})\). \;
    }
    \Else{
        Obtain initial guesses \(\{(\zeta_0^{(j)}, y_0^{(j)})\}_{j=1}^{2 k_0}\) by solving \((G_1, G_2) = (0, 0)\) at \(t_0\) using multiple seeds.
    }

    \vspace{2mm}
    \tcp{Track edge preimages \((\zeta, y)\) by warm-started Newton over time}
    \For{\(\ell = 0, 1, \dots, T\)}{
        \(\tau \leftarrow e^{t_{\ell}}\) \;
        \ForEach{\(j = 1, \cdots, 2 k_0\) not yet discarded}{
            Use previous solution \((\zeta_{\ell-1}^{(j)}, y_{\ell-1}^{(j)})\) as initial guess. \;
            Apply damped Newton to solve \((G_1(\zeta, y; t_{\ell}), G_2(\zeta, y; t_{\ell})) = (0, 0)\) for \((\zeta_{\ell}^{(j)}, y_{\ell}^{(j)})\). \;
            \(z_{\ell} \leftarrow \zeta_{\ell}^{(j)} - (\tau - 1) / y_{\ell}^{(j)}\) \tcp*{Pushforward to the \(z\)-plane}
            \leIf{\(\vert \Im z_{\ell}^{(j)} \vert < \epsilon\)}{
                Store \(x_{\ell}^{(j)} \leftarrow \Re z_{\ell}^{(j)}\)
            }{
                mark failure.
            }
        }

        \vspace{2mm}
        \tcp{Handle merged edges of disjoint bulks (inner edges annihilate}
        \(\mathcal{P} \leftarrow \{a_0, b_1, \dots, a_{k_0}, b_{k_0} \}\) \;
        \ForEach{adjacent pair \((b_j, a_{j+1})\) in \(\mathcal{P}\)}{
            \lIf{\(b_j(t_{\ell}) > a_{j+1}(t_{\ell})\)}{
                \((b_{j}(t_{\ell}), a_{j+1}(t_{\ell})) \leftarrow (\text{\texttt{NaN}}, \text{ \texttt{NaN}})\)
            }
        }
        \(k(t_{\ell}) \leftarrow\) number of remaining connected components. 
    }
    
    \vspace{2mm}
    \Return{Spectral edges \(\left\{ \{a_{j}(t_{\ell}), b_{j}(t_{\ell})\}_{j=1}^{k_0} \right\}_{\ell=1}^{T}\) and active bulk count \(\{k(t_{\ell})\}_{\ell=1}^{T}\)};
\end{algorithm}


\subsection{Spectral Cusps and The Mechanism of Merging Bulks}\label{sec:merge}

In multi-bulk densities (\ie \(k > 1\)), adjacent bulks may merge under free decompression. We describe this merger through the geometry of the spectral curve, emphasizing that branch points persist while their \emph{sheet connectivity} (monodromy) changes.

Let \(I_j(t) \coloneqq [a_j(t), b_j(t)]\) denote the \(j\)-th bulk. Assume that at \(t = 0\) the bulks are disjoint,
\begin{equation*}
    a_1(0) < b_1(0) < a_2(0) < b_2(0) < \cdots < a_k(0) < b_k(0),
\end{equation*}
so that \(I_j(0) \cap I_{j+1}(0) = \emptyset\). Suppose that at some time \(t = t_{\ast}\) two consecutive bulks collide, meaning \(b_j(t_{\ast}) = a_{j+1}(t_{\ast})\). For \(t > t_{\ast}\) the union \(I_{j}(t) \cup I_{j+1}(t)\) becomes a single interval, and the two points that collided cease to be \emph{spectral edges} of the density because the density no longer vanishes between them. Importantly, however, these points do not disappear from the spectral curve: \Cref{prop:fd-branch-point} continues to evolve them as bona fide branch points of \(\mathcal{C}\), with \(b_{j}(t) > a_{j+1}(t)\) at \(t > t_{\ast}\).

The correct interpretation is that the merger is a phase transition in sheet gluing. For \(t < t_{\ast}\), the two inner real branch points connect the physical sheet to non-physical sheets, and hence they bound two bulks of the physical density. After \(t > t_{\ast}\), these same branch points typically glue two non-physical sheets to each other; the physical sheet is then connected only through the remaining outer branch points. In this sense, post-merger ``ghost edges'' are still branch points, but they are no longer edges of the physical density because they no longer involve the physical sheet.

At the collision time \(t_{\ast}\), the density develops a \emph{cusp} (a cubic singularity at a sharp local minimum), associated with the Pearcey universality class \citep{ERDOS-2020}. Geometrically, the colliding branch points correspond to a higher-order critical point of the projection of the spectral curve. The following proposition characterizes this cusp point by algebraic conditions on \(\mathcal{C}\).

\begin{proposition}[Cusp point at Merging Bulks] \label{prop:cusp}
    Suppose two consecutive real edges collide at time \(t_{\ast}\), namely, there exists \(x_{\ast} \in \mathbb{R}\) such that \(x_{\ast} = b_{j}(t_{\ast}) = a_{j+1}(t_{\ast})\). Let \(m_{\ast} \coloneqq m(t_{\ast}, x_{\ast})\) and define the pullback \((\zeta_{\ast}, y_{\ast}) \coloneqq \phi_t^{-1}(x_{\ast}, m_{\ast})\). Then, \((\zeta_{\ast}, y_{\ast}) \in \mathcal{C}\) is a higher-order critical point of the projection \(z(\zeta, y) = \zeta - (\tau_{\ast}  - 1) y^{-1}\) restricted to \(\mathcal{C}\), and hence satisfies
    \begin{subequations} \label{eq:fd-cusp}
        \begin{align}
            &P(\zeta_{\ast}, y_{\ast}) = 0 \\
            &y_{\ast}^2\ \partial_{y} P - (\tau_{\ast} - 1)\ \partial_{\zeta} P = 0, \\
            &y_{\ast} \left((\partial_{\zeta \zeta} P) (\partial_y P)^2  - 2 (\partial_{\zeta y} P)  (\partial_{\zeta} P) (\partial_y P) + (\partial_{yy} P) (\partial_{\zeta} P)^2\right) + 2 (\partial_{\zeta} P)^2 (\partial_y P) = 0,
        \end{align}
    \end{subequations}
    where all derivatives are evaluated at \((\zeta_{\ast}, y_{\ast})\), and \(\tau_{\ast} \coloneqq e^{t_{\ast}}\). Moreover, the cusp location is the pushforward \(x_{\ast} = \zeta_{\ast} - (\tau_{\ast} - 1) y_{\ast}^{-1}\).
\end{proposition}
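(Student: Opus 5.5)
We follow the route of \Cref{prop:fd-branch-point} and work on the spectral curve \(\mathcal{C}\), parameterized locally by \(\zeta\) near \((\zeta_{\ast}, y_{\ast})\). We assume \(\partial_y P(\zeta_{\ast}, y_{\ast}) \neq 0\), so that \(y = y(\zeta)\) is holomorphic. We then study the projection \(z(\zeta) = \zeta - (\tau - 1) y(\zeta)^{-1}\). Spectral edges at time \(\tau\) are the real critical values of \(z\). The first two equations of \eqref{eq:fd-cusp} are exactly the condition \(\mathrm{d}z/\mathrm{d}\zeta = 0\) obtained in \Cref{prop:fd-branch-point}, applied at \(\tau = \tau_{\ast}\). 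What remains is to justify that a collision of two edges forces a degenerate critical point, \(\mathrm{d}^2 z/\mathrm{d}\zeta^2 = 0\), and then to translate that condition into the third equation.

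For the degeneracy, we argue as follows. For \(t < t_{\ast}\), the edges \(b_j(t)\) and \(a_{j+1}(t)\) come from two distinct critical points \(\zeta_b(t)\) and \(\zeta_a(t)\) of \(z(\cdot\,; \tau)\), both solutions of the system \eqref{eq:fd-branch-point}. As \(t \to t_{\ast}\), their images coincide at \(x_{\ast}\). Both preimages lie on the physical sheet and both converge to the pullback of \((x_{\ast}, m_{\ast})\), so the two critical points coalesce at \(\zeta_{\ast}\). A limit of two simple zeros of \(\partial_\zeta z\) is a zero of multiplicity at least two. Hence \(\partial_\zeta z = \partial_\zeta^2 z = 0\) at \((\zeta_{\ast}, \tau_{\ast})\). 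Equivalently, in the local normal form, \(z - x_{\ast} \sim c(\zeta - \zeta_{\ast})^3\), which is the cubic cusp. We would argue continuity of the preimages from continuity of the physical branch together with the implicit function theorem away from the coalescence.

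The remaining step is a computation. Write \(y' = -P_\zeta/P_y\). Then
\[
y'' = -\bigl(P_{\zeta\zeta}P_y^2 - 2P_{\zeta y}P_\zeta P_y + P_{yy}P_\zeta^2\bigr)/P_y^3,
\]
which is the standard second implicit derivative. Differentiating \(z' = 1 + (\tau - 1) y^{-2} y'\) once more gives
\[
z'' = (\tau - 1)\bigl(y^{-2} y'' - 2 y^{-3} (y')^2\bigr).
\]
Setting \(z'' = 0\) with \(\tau_{\ast} > 1\) and multiplying by \(-y^3 P_y^3\) yields \(y\,(\cdots) + 2 P_\zeta^2 P_y = 0\), which is the third equation. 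The pushforward formula for \(x_{\ast}\) is just the definition of \(\phi_t\).

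We expect the main obstacle to be the rigorous coalescence argument in the second paragraph, specifically excluding the possibility that the two colliding edges have preimages on different points of \(\mathcal{C}\) that merely share an image. We would handle this by assuming the generic situation in which the physical sheet near \(x_{\ast}\) is glued through a single local chart. We would also treat \(P_y(\zeta_{\ast}, y_{\ast}) = 0\) as a non-generic case, handled by swapping the roles of \(\zeta\) and \(y\) in the parameterization.
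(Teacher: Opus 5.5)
Your proposal is correct and follows essentially the same route as the paper: parameterize $\mathcal{C}$ locally as $y = y(\zeta)$, impose $\mathrm{d}z/\mathrm{d}\zeta = \mathrm{d}^2 z/\mathrm{d}\zeta^2 = 0$, and clear denominators. Your multiplier $-y^3(\partial_y P)^3$ is the right one, and your coalescence (Hurwitz-type) argument justifies the degeneracy that the paper simply asserts.

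One small caveat concerns your fallback for $\partial_y P(\zeta_\ast, y_\ast) = 0$, which is to swap the roles of $\zeta$ and $y$. This cannot work, because for $\tau_\ast > 1$ the second equation then forces $\partial_\zeta P = 0$ as well. The point is therefore singular on $\mathcal{C}$, and it must simply be excluded, as the paper implicitly does.
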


\begin{proof}
    Consider the restriction of the map
    \begin{equation*}
        z(\zeta, y) = \pi \circ \phi_t(\zeta, y) = \zeta - (\tau - 1) y^{-1}
    \end{equation*}
    to the spectral curve \(\mathcal{C} = \{ (\zeta, y) \in \mathbb{C}^2 \mid  P(\zeta, y) = 0\}\). Near \((\zeta_{\ast}, y_{\ast}) \in \mathcal{C}\), let \(y = y(\zeta)\) be a local parameterization of \(\mathcal{C}\). At \emph{cusp} point occurs at time \(t_{\ast}\) corresponds to a critical point of multiplicity two of the projection \(\zeta \mapsto z(\zeta, y(\zeta)) = z(\zeta)\) along \(\mathcal{C}\), \ie
    \begin{equation*}
        \frac{\mathrm{d} z(\zeta_{\ast})}{\mathrm{d} \zeta} = 0,
        \qquad
        \frac{\mathrm{d}^2 z(\zeta_{\ast})}{\mathrm{d} \zeta^2} = 0,
    \end{equation*}
    together with \(P(\zeta_{\ast}, y_{\ast}) = 0\).

    The constraint \(P(\zeta_{\ast}, y_{\ast}) = 0\) implies \(\mathrm{d} P / \mathrm{d} \zeta = 0\) and \(\mathrm{d}^2 P / \mathrm{d} \zeta^2 = 0\) at \((\zeta_{\ast}, y_{\ast})\). Implicit differentiation of \(P(\zeta, y(\zeta)) = 0\) along \(\mathcal{C}\) gives
    \begin{align*}
        \frac{\mathrm{d} P}{\mathrm{d} \zeta} &= \frac{\partial P}{\partial \zeta} + y' \frac{\partial P}{\partial y}, \\
        \frac{\mathrm{d}^2 P}{\mathrm{d} \zeta^2} &= \frac{\partial^2 P}{\partial \zeta^2} + 2 y' \frac{\partial^2 P}{\partial \zeta \partial y} + (y')^2 \frac{\partial^2 P}{\partial y^2} + y'' \frac{\partial P}{\partial y},
    \end{align*}
    where \(y' \coloneqq \mathrm{d} y / \mathrm{d} \zeta\) and \(y'' \coloneqq \mathrm{d}^2 y / \mathrm{d} \zeta^2\). Solving \(\mathrm{d} P / \mathrm{d} \zeta = 0\) and \(\mathrm{d}^2 P / \mathrm{d} \zeta^2 = 0\) for \(y'\) and \(y''\) yields
    \begin{subequations} \label{eq:yp-ypp}
    \begin{align}
        y' &= -\frac{\partial_{\zeta} P}{\partial_y P}, \\
        y'' &= - \frac{\partial_{\zeta \zeta} P + 2 y' \partial_{\zeta y} P + (y')^2 \partial_{y y} P}{\partial_{y} P},
    \end{align}
    \end{subequations}
    whenever \(\partial_y P(\zeta, y) \neq 0\).

    Next, differentiating \(z(\zeta) = \zeta - (\tau - 1) y(\zeta)^{-1}\) along \(\mathcal{C}\) yields
    \begin{subequations} \label{eq:zp-zpp}
    \begin{align}
        \frac{\mathrm{d} z}{\mathrm{d} \zeta} &= 1 + (\tau - 1) y^{-2} y', \\
        \frac{\mathrm{d}^2 z}{\mathrm{d} \zeta^2} &= (\tau - 1) \left(-2 y^{-3} (y')^2 + y^{-2} y'' \right).
    \end{align}
    \end{subequations}
    Imposing \(\mathrm{d} z / \mathrm{d} \zeta = 0\) and using \(y'\) from \eqref{eq:yp-ypp} gives the second equation in \eqref{eq:fd-cusp},
    \begin{equation*}
        y^2 \partial_{y} P - (\tau - 1) \partial_{\zeta} P = 0.
    \end{equation*}
    Next, imposing \(\mathrm{d}^2 z / \mathrm{d} \zeta^2 = 0\), substitute \(y'\) and \(y''\) from \eqref{eq:yp-ypp}, and clear denominator by multiplying by \(y (\partial_y P)^3\). After simplification, this yields the third equation in \eqref{eq:fd-cusp},
    \begin{equation*}
        y \left( \partial_{\zeta \zeta} P (\partial_{y} P)^2 - 2 (\partial_{\zeta y} P) (\partial_{\zeta} P) (\partial_{y} P) + (\partial_{yy} P) (\partial_{\zeta}P)^2 \right) + 2 (\partial_{\zeta} P)^2 (\partial_y P) = 0,
    \end{equation*}
    where all partial derivatives are evaluated at \((\zeta_{\ast}, y_{\ast})\). Finally, the cusp location in the physical coordinate is the pushforward \(x_{\ast} = z(\zeta_{\ast}, y_{\ast}) = \zeta_{\ast} - (\tau_{\ast} - 1) y_{\ast}^{-1}\) at time \(t_{\ast} = \log(\tau_{\ast})\).
\end{proof}

Practically, \eqref{eq:fd-cusp} can be used to solve for the triple \((\zeta_{\ast}, y_{\ast}, \tau_{\ast})\) and hence recover both the merging time \(t_{\ast} = \log(\tau_{\ast})\) and the merge location \(x_{\ast}\). This provides a principled and more accurate alternative compared to monitoring edge crossing on a discrete time grid. Once \(t_{\ast}\) is identified, one can treat the post-merge configuration as having one fewer bulk, and interpret subsequent real branch points through the updated sheet gluing relative to the physical branch.


\subsection{Dynamics of Gap and Support Intervals} \label{sec:gap}

Beyond tracking the density itself, it is often useful to characterize whether a bulk interval widens or narrows, and whether a gap between two bulks shrinks toward a merger or expands. Define the bulk widths \(s_j\) and gap widths \(g_j\) by
\begin{equation*}
    s_j(t) \coloneqq \vert I_j(t) \vert = b_j(t) - a_j(t),
    \qquad
    g_j(t) \coloneqq a_{j+1}(t) - b_{j}(t).
\end{equation*}

The dynamics of these intervals can be obtained from the relative velocity of edges. We derive an expression for the velocity of an edge, \(x_{\ast}'(t)\), in terms of the geometry of the spectral curve, and then specialize at \(t = 0\), where it admits a particularly simple interpretation in terms of the Hilbert transform.

\begin{proposition}[Edge Velocity under Free Decompression] \label{prop:vel-edge}
    Let \(x_{\ast}(t) \in \mathbb{R}\) be a regular spectral edge of the density, and let \((\zeta_{\ast}(t), y_{\ast}(t))\) denote a corresponding preimage on the spectral curve \(\mathcal{C} = \{(\zeta, y) \in \mathbb{C}^2 \mid P(\zeta, y) = 0\}\) under the free decompression map \(z = \zeta - (\tau - 1) y^{-1}\) with \(\tau = e^t\). Suppose \((\zeta_{\ast}, y_{\ast})\) is a simple critical point of the projection (so it is not a cusp point), hence it satisfies
    \begin{subequations} \label{eq:def-rel}
    \begin{align}
        &P(\zeta_{\ast}, y_{\ast}) = 0, \\
        &F(\zeta_{\ast}, y_{\ast}, \tau) = y_{\ast}^2 \partial_y P(\zeta_{\ast}, y_{\ast}) - (\tau - 1) \partial_{\zeta} P(\zeta_{\ast}, y_{\ast}) = 0,
    \end{align}
    \end{subequations}
    and the edge location is \(x_{\ast}(t) = \zeta_{\ast}(t) - (\tau - 1) y_{\ast}(t)^{-1}\). Then, \(x_{\ast}(t)\) is differentiable and its velocity is
    \begin{equation}
        x'_{\ast}(t) = \zeta'_{\ast}(t) - \tau\, y_{\ast}(t)^{-1} + (\tau - 1)\, y'_{\ast}(t)\, y_{\ast}(t)^{-2},
        \label{eq:vel-x}
    \end{equation}
    where \((\zeta'_{\ast}(t), y'_{\ast}(t))\) is obtained by solving the linear system
    \begin{equation}
        \begin{bmatrix}
            \partial_{\zeta} P & \partial_y P \\
            \partial_{\zeta} F & \partial_y F
        \end{bmatrix}_{(\zeta_{\ast}, y_{\ast}, \tau)}
        \begin{bmatrix}
            \zeta'_{\ast} \\
            y'_{\ast}
        \end{bmatrix}
        = - \tau
        \begin{bmatrix}
            0 \\
            \partial_{\tau} F
        \end{bmatrix}_{(\zeta_{\ast}, y_{\ast}, \tau)},
        \label{eq:lin-sys-vel}
    \end{equation}
    with
    \begin{align*}
        \partial_{\tau} F &= - \partial_{\zeta} P(\zeta, y), \\
        \partial_{\zeta} F &= y^2 \partial_{\zeta y} P - (\tau - 1) \partial_{\zeta \zeta} P, \\
        \partial_{y} F &= 2y \partial_y P + y^2 \partial_{y y} P - (\tau - 1) \partial_{\zeta y} P.
    \end{align*}
\end{proposition}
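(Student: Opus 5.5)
The approach is to differentiate the defining relations \eqref{eq:def-rel} in $t$ via the implicit function theorem, and then differentiate the pushforward $x_\ast(t)=\zeta_\ast(t)-(\tau-1)y_\ast(t)^{-1}$ with $\tau=e^t$.

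\textbf{Step 1: regularity.} Consider the map $G(\zeta,y,\tau)=(P(\zeta,y),F(\zeta,y,\tau))$. Its Jacobian in $(\zeta,y)$ is the $2\times2$ matrix in \eqref{eq:lin-sys-vel}. I would show this matrix is invertible exactly when $(\zeta_\ast,y_\ast)$ is a simple (non-cusp) critical point of the projection $z$ along $\mathcal{C}$; this is how the hypothesis will be used.

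To do so, compute the determinant $\partial_\zeta P\,\partial_y F-\partial_y P\,\partial_\zeta F$. Use the second relation in \eqref{eq:def-rel} to substitute $(\tau-1)\partial_\zeta P=y^2\partial_y P$. After this substitution the determinant should be a nonzero multiple of the cusp expression in \Cref{prop:cusp}, that is, of $y^2\,\mathrm{d}^2z/\mathrm{d}\zeta^2\cdot(\partial_yP)^3/(\tau-1)$.

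I expect this identification to be the main obstacle: checking algebraically that the determinant vanishes iff the third cusp equation holds. I would first try expanding directly with $\partial_\zeta P=y^2\partial_yP/(\tau-1)$, and then compare term by term with the numerator $y(\cdots)+2(\partial_\zeta P)^2\partial_yP$ from the proof of \Cref{prop:cusp}. There are degenerate cases to exclude. If $\partial_yP=0$, then $F=0$ forces $\partial_\zeta P=0$ (for $\tau>1$), so the point is singular on $\mathcal{C}$; regularity of the edge excludes this. I would assume $y_\ast\neq0$ and $\tau>1$, and treat $\tau=1$ as a limit.

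\textbf{Step 2: the linear system.} Given invertibility, the implicit function theorem gives smooth $(\zeta_\ast(\tau),y_\ast(\tau))$ with $\partial_\tau(\zeta_\ast,y_\ast)=-J^{-1}(0,\partial_\tau F)^{\intercal}$. Since $\mathrm{d}\tau/\mathrm{d}t=\tau$, the chain rule gives exactly \eqref{eq:lin-sys-vel}.

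The explicit partial derivatives are routine:
\[
\partial_\tau F=-\partial_\zeta P,\qquad \partial_\zeta F=y^2\partial_{\zeta y}P-(\tau-1)\partial_{\zeta\zeta}P,
\]
and $\partial_y F$ follows from the product rule applied to $y^2\partial_yP$.

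\textbf{Step 3: the velocity.} Differentiate $x_\ast=\zeta_\ast-(\tau-1)y_\ast^{-1}$ in $t$. The term $\zeta'_\ast$ comes directly. The factor $(\tau-1)$ contributes $-\tau y_\ast^{-1}$, using $\mathrm{d}(\tau-1)/\mathrm{d}t=\tau$. The factor $y_\ast^{-1}$ contributes $+(\tau-1)y_\ast^{-2}y'_\ast$. This yields \eqref{eq:vel-x}.

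Finally, I would note that the chosen branch stays a spectral edge on a neighbourhood of $t$. Sheet connectivity can only change at a cusp, which is excluded by hypothesis. Hence the differentiable branch-point trajectory coincides with the edge $x_\ast(t)$, and $x_\ast$ is differentiable.
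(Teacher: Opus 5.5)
Your proposal is correct and follows the paper's route: differentiate $P=0$ and $F=0$ along the trajectory using $\mathrm{d}\tau/\mathrm{d}t=\tau$, then differentiate the pushforward. The paper, however, simply differentiates along an assumed differentiable solution curve, whereas your Step~1 actually justifies this via the implicit function theorem. The identity you anticipate there does hold: with $J$ the matrix in \eqref{eq:lin-sys-vel} and $E$ the third cusp expression of \Cref{prop:cusp}, the relation $F=0$ gives $\partial_\zeta P\,\det J = y_\ast E$, equivalently $\det J=-y_\ast^2(\partial_yP)^2\,\mathrm{d}^2z/\mathrm{d}\zeta^2$ for $\tau>1$.

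The one loose end is $\tau=1$, which is the case used in \Cref{cor:init-vel-edge}. There $\partial_yP=0$ and $\zeta$ no longer parametrizes $\mathcal{C}$, so a limit argument does not establish invertibility. Instead, check directly that $\det J=y_\ast^2\,\partial_\zeta P\,\partial_{yy}P\neq0$ at a simple square-root edge.
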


\begin{proof}
    We differentiate the defining relations \eqref{eq:def-rel} along the solution curve \(t \mapsto (\zeta_{\ast}(t), y_{\ast}(t))\). From \(P(\zeta_{\ast}, y_{\ast}) = 0\), we obtain
    \begin{equation*}
        \partial_{\zeta} P(\zeta_{\ast}, y_{\ast})\, \zeta'_{\ast} + \partial_y P(\zeta_{\ast}, y_{\ast})\, y'_{\ast} = 0.
    \end{equation*}
    From \(F(\zeta_{\ast}, y_{\ast}, \tau) = 0\) and \(\tau'(t) = \tau(t)\), we obtain
    \begin{equation*}
        \partial_{\zeta} F(\zeta_{\ast}, y_{\ast}, \tau)\, \zeta'_{\ast}
        + \partial_y F(\zeta_{\ast}, y_{\ast}, \tau)\, y'_{\ast}
        + \partial_{\tau} F(\zeta_{\ast}, y_{\ast}, \tau)\, \tau'(t) = 0,
    \end{equation*}
    which yields the linear system \eqref{eq:lin-sys-vel}. Finally, differentiating \(x_{\ast}(t) = \zeta_{\ast}(t) - (\tau(t) - 1) y_{\ast}(t)^{-1}\) gives \eqref{eq:vel-x}.
\end{proof}

\begin{corollary}[Initial edge velocity] \label{cor:init-vel-edge}
    Let \(x_{\ast}(0)\) be a regular spectral edge at \(t = 0\), and let
    \begin{equation*}
        H(t, x) \coloneqq \mathcal{H}\{ \rho(t, \cdot) \}(x) = \mathrm{p.v.}\, \frac{1}{\pi} \int_{\mathbb{R}} \frac{\rho(t, y)}{x - y}\, \mathrm{d}y,
    \end{equation*}
    be the Hilbert transform of \(\rho(t, \cdot)\). Then,
    \begin{equation}
        x'_{\ast}(0) = \frac{1}{\pi H(0, x_{\ast})}.
    \end{equation}
\end{corollary}

\begin{proof}
    At \(t = 0\), we have \(\tau = 1\) and \(x_{\ast}(0) = \zeta_{\ast}(0)\). Also \(y_{\ast}(0) = m(0, x_{\ast})\). Using the Plemelj formula, \(m(0, x_{\ast}) = -\pi H(0, x_{\ast}) + i \pi \rho(0, x_{\ast})\), but on the spectral edge \(\rho(0, x_{\ast}) = 0\), so \(y_{\ast}(0) = -\pi H(0, x_{\ast})\). Also, from \Cref{prop:vel-edge}, we have
    \begin{equation*}
        x'_{\ast}(0) = \zeta'_{\ast}(0) - \tau(0)\, y_{\ast}(0)^{-1}.
    \end{equation*}
    Moreover, at \(t = 0\) the criticality condition in \eqref{eq:def-rel} reduces to \(\partial_y P(\zeta_{\ast}(0), y_{\ast}(0)) = 0\). If \(x_{\ast}(0)\) is a regular edge (not a cusp), then \(\partial_{\zeta} P(\zeta_{\ast}(0), y_{\ast}(0)) \neq 0\), and differentiating \(P(\zeta_{\ast}(t), y_{\ast}(t)) = 0\) at \(t = 0\) implies \(\zeta'_{\ast}(0) = 0\). Hence \(x'_{\ast}(0) = -y_{\ast}(0)^{-1} = 1/ (\pi H(0, x_{\ast}))\).
\end{proof}

The dynamics of bulk widths \(s_j(t)\) and gap widths \(g_j(t)\) follows immediately. At \(t = 0\), \Cref{cor:init-vel-edge} yields the explicit identities
\begin{subequations}
\begin{align}
    s'_j(0) &= \frac{1}{\pi H(0, b_j(0))} - \frac{1}{\pi H(0, a_j(0))}, \\
    g'_j(0) &= \frac{1}{\pi H(0, a_{j+1}(0))} - \frac{1}{\pi H(0, b_j(0))}.
\end{align}
\end{subequations}

The sign of the Hilbert transform at inner edges depends on the global arrangement of outer mass across bulks; therefore, inner gaps may either shrink (leading to a merger at a finite cusp time) or expand. In contrast, for the two outermost edges of the total support \(I(t) = [a_1(t), b_k(t)]\), the signs are fixed: for \(|x|\) large, \(m(0, x) \sim -1/x\), hence \(H(0, x) \sim 1/(\pi x)\). As a result, \(H(0, a_1(0)) < 0\) and \(H(0, b_k(0))> 0 \), implying \(a_1'(0)<0\) and \(b_k'(0)>0\). Thus, the total support length \(|I(t)| = b_k(t) - a_1(t)\) is increasing at \(t = 0\).


\subsection{Dynamics of Atoms} \label{sec:atoms}

An atom of \(\mu\) corresponds to a simple pole of the Stieltjes transform \(m(z)\) at \(z = x_{\circ}\), with weight
\begin{equation*}
    w = - \Res_{z = x_{\circ}} m(z) = \lim_{z \to x_{\circ}} (x_{\circ} - z) m(z).
\end{equation*}
When \(m\) is algebraic and satisfies \(P(z, m) = 0\), where \(P(z, m) \coloneqq \sum_{j=0}^{s} a_j(z) m^j\), a necessary condition for a pole at a finite real point \(x_{\circ}\) is that \(x_{\circ}\) is a root of the leading coefficient, \ie \(a_s(x_{\circ}) = 0\). Conversely, a real root of \(a_s\) corresponds to an atom on the physical branch provided that \(m\) has a simple pole at \(x_{\circ}\) (\ie the singularity is not canceled and is not of higher order). In that case, if \(x_{\circ}\) is a simple root of \(a_s\), coefficient matching yields
\begin{equation*}
    w = \frac{a_{s-1}(x_{\circ})}{a_s'(x_{\circ})}.
\end{equation*}

\begin{proposition} \label{prop:atom}
    Let \(x_{\circ} \in \mathbb{R}\) be an atom of \(m(z)\) with initial mass \(w_{\circ} \in [0, 1]\). Under the free decompression flow, the atom location remains fixed \(x_{\circ}(t) = x_{\circ}\), while its weight evolves as
    \begin{equation}
        w(t) = 1 - (1 - w_{\circ}) e^{-t}.
        \label{eq:atom-mass}
    \end{equation}
\end{proposition}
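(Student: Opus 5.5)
The plan is to work directly with the characteristic relation of \Cref{thm:FormalFreeDecomp}: a point \((z,\omega)\) lies on the decompressed curve when \((\zeta,y)=(z+\frac{\tau-1}{\tau\omega},\tau\omega)\) lies on \(\mathcal{C}\). Near an atom of the initial measure, the physical branch behaves like \(m(\zeta)=\frac{w_{\circ}}{x_{\circ}-\zeta}+h(\zeta)\), with \(h\) bounded near \(x_{\circ}\) from the side of interest. We would look for a pole of \(m_\tau\) at \(z=x_{\circ}\) by making the ansatz \(m_\tau(z)\approx \frac{w}{x_{\circ}-z}\) as \(z\to x_{\circ}\), and then determine \(w\) self-consistently.

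First, substitute \(\omega=\frac{w}{x_{\circ}-z}+O(1)\). Then \(y=\tau\omega\to\infty\) and \(\zeta-z=\frac{\tau-1}{\tau\omega}=\frac{(\tau-1)(x_{\circ}-z)}{\tau w}+O((x_{\circ}-z)^2)\to0\), so \(\zeta\to x_{\circ}\). Hence \(y\) blows up exactly at the original pole location. This shows the location is fixed: the only points \(z\) whose characteristic preimages approach the pole \(\zeta=x_{\circ}\) with \(y\to\infty\) are \(z\to x_{\circ}\).

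Second, match residues. We have \(x_{\circ}-\zeta=(x_{\circ}-z)\bigl(1-\frac{\tau-1}{\tau w}\bigr)+o(x_{\circ}-z)\). On the other hand, \(y=m(\zeta)\approx\frac{w_{\circ}}{x_{\circ}-\zeta}\) must equal \(\tau\omega\approx\frac{\tau w}{x_{\circ}-z}\). Equating gives
\begin{equation*}
    \frac{w_{\circ}}{1-\frac{\tau-1}{\tau w}}=\tau w,
\end{equation*}
that is, \(w_{\circ}=\tau w-(\tau-1)\). Solving yields \(w=1-\tau^{-1}(1-w_{\circ})\), and with \(\tau=e^t\) this is \eqref{eq:atom-mass}. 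As a consistency check, \(w(1)=w_{\circ}\) and \(w\in[0,1]\) for \(\tau\geq1\).

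An alternative cross-check uses the PDE \eqref{eq:FreeDecPDE}. Inserting \(m=\frac{w(t)}{x_{\circ}-z}+\text{regular}\) and balancing the \((x_{\circ}-z)^{-1}\) terms gives \(w'=-w+1\). To see this, note \(m^{-1}\partial_z m\approx\frac{1}{x_{\circ}-z}\), and the leading-order balance fixes the location \(\dot x_{\circ}=0\). This ODE recovers the same formula.

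The main obstacle is justifying the ansatz: that the physical branch of \(m_\tau\) actually has a simple pole at \(x_{\circ}\) and is not cancelled, and that the matching of leading terms is legitimate. Two points need care. The regular part \(h\) contributes only lower-order corrections, which requires \(1-\frac{\tau-1}{\tau w}\neq0\); this holds since \(w>1-\tau^{-1}\) whenever \(w_{\circ}>0\). The branch selected by continuation from \(\tau=1\) must also be the one with this pole. For the latter I would argue by continuity in \(\tau\): the residue equation has the unique solution above, depending continuously on \(\tau\) and equal to \(w_{\circ}\) at \(\tau=1\), so the physical sheet tracked by \Cref{alg:decompress} retains it. Equivalently, the computation can be phrased as a fact about a local parameterization \(y\mapsto\zeta(y)=x_{\circ}-w_{\circ}/y+O(y^{-2})\) of \(\mathcal{C}\) near \(y=\infty\), which pushes forward under the characteristic map to the stated form.
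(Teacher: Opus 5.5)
Your argument is correct, but your main route differs from the paper's. The paper's proof is essentially your ``alternative cross-check'': it substitutes the Laurent ansatz $m(t,z) = -w(t)/(z-x_{\circ}(t)) + O(1)$, with a time-dependent location, into the PDE \eqref{eq:FreeDecPDE}. It then reads off $w\,x_{\circ}'=0$ from the $(z-x_{\circ})^{-2}$ coefficient and $w'=1-w$ from the $(z-x_{\circ})^{-1}$ coefficient, and integrates.

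You instead work at fixed $\tau$ with the characteristic relation of \Cref{thm:FormalFreeDecomp}, matching residues across $(\zeta,y)\mapsto(\zeta-(\tau-1)/y,\,y/\tau)$. This route has several advantages:
\begin{itemize}
\item It gives $w=1-\tau^{-1}(1-w_{\circ})$ algebraically, with no ODE and no assumption that the residue, location and remainder are differentiable in $t$.
\item In its local-parameterization form ($\zeta=x_{\circ}-w_{\circ}/y+O(y^{-2})$ pushed to $z=x_{\circ}-(w_{\circ}+\tau-1)/y+O(y^{-2})$), it constructs the pole on the transported sheet instead of postulating it.
\item It makes the fixed location transparent: a pole of an evolved branch forces $y\to\infty$ and $\zeta-z\to0$, so it lies over a pole of the original curve. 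Consistently, the coefficient of $m^{d_z+s}$ in $Q_\tau$ from \Cref{cor:CoefficientMethod} is $\tau^{s}a_s(z)$.
\item The limit $(x_{\circ}-\zeta)/(x_{\circ}-z)\to w_{\circ}/(\tau w)>0$ shows the characteristic stays in the same half-plane. It also exposes the need for $w_{\circ}>0$: at $w_{\circ}=0$ there is no pole to transport, and the paper's expansion of $m^{-1}$ tacitly needs $w\neq0$ too.
\end{itemize}

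The PDE route is shorter and needs nothing about the curve beyond the local Laurent form. Both arguments share the same soft spot: that the sheet carrying the pole remains the physical one for all $\tau$. The paper builds this into its ansatz. You argue it by continuity in $\tau$, supported by the fact that $w(\tau)\geq w_{\circ}>0$ never vanishes.
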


\begin{proof}
    Let \(x_{\circ}(t)\) denote the (a priori time-dependent) location of the atom with weight \(w(t)\). The Laurent expansion of \(m(t, z)\) near \(z = x_{\circ}(t)\) is
    \begin{equation*}
        m(t, z) = - \frac{w(t)}{z - x_{\circ}(t)} + O(1).
    \end{equation*}
    Differentiating gives
    \begin{equation*}
        \partial_t m(t, z) = - \frac{w'(t)}{z - x_{\circ}(t)} + \frac{w(t)\, x_{\circ}'(t)}{(z - x_{\circ}(t))^2} + O(1),
    \end{equation*}
    and
    \begin{equation*}
        \partial_z m(t, z) = \frac{w(t)}{(z - x_{\circ}(t))^2} + O(1).
    \end{equation*}
    Moreover, the leading term of \(m^{-1}\) near \(z = x_{\circ}(t)\) is
    \begin{equation*}
        m(t, z)^{-1} = - \frac{z - x_{\circ}(t)}{w(t)} + O\!\left((z - x_{\circ}(t))^2\right).
    \end{equation*}
    Substituting these expansions into the PDE \(\partial_t m = -m + m^{-1} \partial_z m\) yields
    \begin{equation}
        - \frac{w'(t)}{z - x_{\circ}(t)} + \frac{w(t)\, x_{\circ}'(t)}{(z - x_{\circ}(t))^2} + O(1)
        = \frac{w(t)}{z - x_{\circ}(t)} - \frac{1}{z - x_{\circ}(t)} + O(1).
        \label{eq:atom-perturb}
    \end{equation}
    The right-hand side has no second-order pole \((z - x_{\circ}(t))^{-2}\), hence the coefficient of \((z - x_{\circ}(t))^{-2}\) on the left-hand side must vanish, implying \(w(t)\,x_{\circ}'(t) = 0\). If \(w(t) > 0\), then \(x_{\circ}'(t) = 0\), and therefore \(x_{\circ}(t) = x_{\circ}(0)\) is constant.

    Finally, equating the coefficients of the first-order poles in \eqref{eq:atom-perturb} gives the ODE \(w'(t) = 1 - w(t)\), which is solved by \eqref{eq:atom-mass} with the initial condition \(w(0) = w_{\circ}\).
\end{proof}

\begin{remark}
    The conclusion of \Cref{prop:atom} is local: it does not depend on whether \(x_{\circ}\) lies inside, outside, or at the boundary of the absolutely-continuous support of \(\mu_0\). Moreover, if \(\mu_0\) has finitely many atoms \(\{(x_{\circ}^{(i)}, w_{\circ}^{(i)})\}_{i=1}^{n_a}\) at distinct locations \(x_{\circ}^{(i)}\), then each location \(x_{\circ}^{(i)}\) remains fixed and each weight evolves independently according to \eqref{eq:atom-mass}.
\end{remark}

\begin{remark}
    In backward flow with \(t < 0\) (free \emph{compression}), there is a finite time \(t_{\circ} \coloneqq \log(1 - w_{\circ}) < 0\) at which the atom weight vanishes, \(w(t_{\circ}) = 0\). For \(t < t_{\circ}\), the pole (atom) ansatz is no longer valid, reflecting that the atom has disappeared.

    In particular, the atom dynamics are forward deterministic while the atom is present (\(w_{\circ} > 0\)); however, once compression reaches the extinction time \(w_{\circ} = 0\), the compressed measure has lost the memory of the atom location. Consequently, decompression from a post-extinction measure is non-unique unless additional structural information is imposed (\eg a known atom location such as \(x_{\circ} = 0\) in rank-deficient PSD settings).
    
    A concrete example is provided by rank-deficient matrices. Suppose \(\tens{A}\) is an \(n \times n\) matrix with nullity \(q < n\), so its empirical spectral measure has an atom of weight \(w_{\circ} = q / n\) at the origin. Then a principal submatrix \(\tens{A}_{\circ}\) of size \(n_{\circ}\) becomes full rank once \(n_{\circ} \le n - q\), and hence its empirical spectral measure has no atom at zero.
\end{remark}


\subsection{Evolution of Moments} \label{sec:evo-moments}

We now turn to computing the moments of the evolving measures \(\nu_\tau\) under free decompression starting from $\nu \equiv \nu_1$. For \(n \geq 0\), define the time-dependent raw moment
\begin{equation*}
	\mu_n^{(\tau)} \coloneqq \int x^n  \mathrm{d} \nu_\tau (x).
\end{equation*}
These moments can be evolved without reconstructing \(\nu_\tau\): the free decompression PDE from \citet[Proposition 1]{AMELI-2025b} induces a closed triangular system of ODEs for \(\mu_n(t) = \mu_n^{\exp(t)}\), allowing each moment to be computed directly from lower-order moments. Throughout this section, we let $m(t, z)$ denote the Stieltjes transform of $\nu_{e^t}$ and assume that \(m(t, z) \neq 0\) along the free decompression flow.

\begin{lemma}[ODEs for Moments] \label{lem:moment}
    The evolution of the moment \(\mu_n(t)\) is governed by the system of the triangularly coupled ODEs,
    \begin{alignat}{3} \label{eq:ODE-moment}
        &\left(\frac{1}{2} \frac{\mathrm{d}}{\mathrm{d} t} + 1 \right) \sum_{i=0}^{n} \mu_i(t) \mu_{n-i}(t) = (n+1) \mu_{n}(t), \qquad &&n \in \mathbb{N},
    \end{alignat}
and, in particular, $\mu_0(t) = 1$ and $\mu_1(t) = \mu_1(0)$ for all $t > 0$. 
\end{lemma}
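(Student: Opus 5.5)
Substitute the large-$|z|$ Laurent expansion \eqref{eq:MomentExpansion} of $m(t,z)$, namely $m(t,z) = -\sum_{p\ge 0}\mu_p(t) z^{-p-1}$, into the free decompression PDE \eqref{eq:FreeDecPDE}, $\partial_t m = -m + m^{-1}\partial_z m$, and match coefficients of powers of $z^{-1}$. Since compact support makes the expansion absolutely convergent for $|z|$ large, this is legitimate. The factor $m^{-1}$ is awkward to expand, so first multiply the PDE through by $m$. This gives the equivalent polynomial identity
\begin{equation*}
    m\,\partial_t m + m^2 = \partial_z m, \qquad\text{equivalently}\qquad \left(\tfrac12\partial_t + 1\right) m^2 = \partial_z m.
\end{equation*}
The step is valid because $m(t,z)\neq 0$ is assumed, and in any case $m\sim -1/z$ is nonzero near infinity. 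The left-hand side already has the shape of the claimed ODE.

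Next compute both sides as series. The square is a Cauchy product:
\begin{equation*}
    m^2 = \sum_{n\ge0}\Big(\sum_{i=0}^n \mu_i\mu_{n-i}\Big) z^{-n-2}.
\end{equation*}
The right-hand side is
\begin{equation*}
    \partial_z m = \sum_{n\ge 0}(n+1)\mu_n z^{-n-2}.
\end{equation*}
Term-by-term differentiation in $t$ needs $\mu_p(t)$ to be differentiable and the $t$-derivative to commute with the sum. I would justify this by uniform convergence of the Laurent series on $|z|\ge R$ for $t$ in compact sets. The supports stay bounded for bounded $t$ because the free cumulants $\tau^{k-1}r_k$ grow at most geometrically. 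Alternatively, one can take the formal power series viewpoint, since the moments are polynomial in $\tau$ through the cumulants. Equating the coefficients of $z^{-n-2}$ then yields \eqref{eq:ODE-moment} for every $n\ge 0$.

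The special cases follow quickly. At $n=0$ the equation reads $(\tfrac12\tfrac{d}{dt}+1)\mu_0^2=\mu_0$. Mass conservation ($\nu_\tau$ is a probability measure) gives $\mu_0\equiv1$, and this is consistent with the equation. At $n=1$, using $\mu_0=1$, the equation becomes $(\tfrac12\tfrac{d}{dt}+1)(2\mu_1) = 2\mu_1$, so $\mu_1'=0$ and hence $\mu_1(t)=\mu_1(0)$. For general $n$ the system is triangular. Isolating the terms containing $\mu_n$ gives
\begin{equation*}
    \mu_n' + (2-(n+1))\mu_n = \text{terms in }\mu_1,\dots,\mu_{n-1},
\end{equation*}
so each moment solves a linear ODE driven by lower-order moments.

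The main obstacle is the analytic justification of the interchange, namely differentiability in $t$ and the uniform validity of the expansion. The cleanest route is to note via the Nica--Speicher scaling $r_k(\nu_\tau)=\tau^{k-1}r_k(\nu)$ that each $\mu_n^{(\tau)}$ is a polynomial in $\tau$ (moment–cumulant formula). It is therefore smooth in $t$, and the coefficient matching can be done at the level of formal power series.
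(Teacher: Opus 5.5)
Your proposal is correct and follows essentially the same route as the paper: multiply the PDE by $m$ to obtain $\left(\tfrac12\partial_t+1\right)m^2=\partial_z m$, substitute the Laurent expansion, and match the coefficients of $z^{-n-2}$. The differences are minor. You get $\mu_0\equiv 1$ from mass conservation, whereas the paper solves $\mu_0'=1-\mu_0$ with $\mu_0(0)=1$. You also justify the termwise differentiation via the polynomiality of $\mu_n^{(\tau)}$ in $\tau$ from the cumulant scaling, where the paper only appeals to ``mild regularity conditions.''
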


\begin{proof}
    Since by the hypothesis \(m(t, z)  \neq 0\), we can rearrange the PDE from \citet[Proposition 1]{AMELI-2025b} to
    \begin{equation}
        \left( \frac{1}{2} \frac{\partial}{\partial t} + 1 \right) m(t, z)^2 = \frac{\partial}{\partial z} m(t, z). \label{eq:PDE-prod-m}
    \end{equation}
    At  \(\vert z \vert > \max(\lambda)\) and under mild regularity conditions, we can expand a convergent Laurent series of \(m(t, z)\) by
    \begin{equation*}
        m(t, z) = \int \frac{\mathrm{d} \sigma(\lambda)}{\lambda - z} = -\int \left( \sum_{n=0}^{\infty} \frac{\lambda^n}{z^{n+1}} \right) \mathrm{d} \sigma(\lambda) = -\sum_{n=0}^{\infty} \frac{\mu_n(t)}{z^{n+1}}.
    \end{equation*}
    The Laurent series of \(m(t,z)^2\) becomes
    \begin{equation*}
        m(t, z)^2 = \sum_{n=0}^{\infty} \left( \sum_{i=0}^{n} \mu_i(t) \mu_{n-i}(t) \right) \frac{1}{z^{n+2}}.
    \end{equation*}
    Also,
    \begin{equation}
    \frac{\partial}{\partial z} m(t, z) = \sum_{n=0}^{\infty} (n+1)\frac{\mu_{n}(t)}{z^{n+2}}.
    \end{equation}
    Substituting the Laurent series into \eqref{eq:PDE-prod-m} and rearranging for the powers of \(z^{-(n+1)}\), we obtain \eqref{eq:ODE-moment}. For the first few terms, we find that $\mu_0' = 1- \mu_0$, and since $\mu_0(0) = 1$, $\mu_0(t) = 1$ for all $t > 0$. Similarly, we find $\mu_1'(t) = 0$ and so $\mu_1(t)$ is also constant.
\end{proof}

The triangular structure of \cref{lem:moment} suggests solving the moment equations recursively. In particular, the lower-order moments determine the forcing term in the ODE for \(\mu_n(t)\), and this forcing is a finite linear combination of powers of \(e^t\). We therefore seek \(\mu_n(t)\) in the form
\begin{equation*}
	\mu_n(t) = \sum_{k = 0}^{n - 1} \kappa_{n, k} e^{k t},\qquad \mbox{or equivalently}\qquad \mu_n^{(\tau)} = \sum_{k=0}^{n-1} \kappa_{n, k} \tau^k,
\end{equation*}
where the coefficients \(\kappa_{n, k}\) are determined recursively from the initial moments. The following proposition gives the resulting recurrence.

\begin{proposition}[Moment Recurrence] \label{prop:moment-decompression}
    For any $t \geq 0$ and $n \geq 1$, the $n$-th moment $\mu_n(t)$ satisfies $\mu_n(t) = \sum_{k=0}^{n-1} \kappa_{n, k} e^{kt}$ where $\kappa_{1,0} = \mu_1(0)$,
    \begin{equation}
        \kappa_{n,k} = \begin{cases}\displaystyle\frac{\frac12 k + 1}{n - k - 1} \sum_{i=1}^{n-1} \sum_{j=0}^k \kappa_{i,j} \kappa_{n-i, k-j},&\qquad 0 \leq k \leq n-2,\\
        \mu_n(0) - \displaystyle\sum_{j=0}^{n-2} \kappa_{n,j} & \qquad k = n - 1,
        \end{cases}
        \label{eq:MomentsKappa}
    \end{equation}
    and where $\kappa_{n,k} = 0$ if $k \geq n$.
\end{proposition}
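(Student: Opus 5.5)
The plan is to solve the triangular ODE system of \Cref{lem:moment} by induction on \(n\), substituting the exponential-polynomial ansatz and matching coefficients of \(e^{kt}\). First rewrite \eqref{eq:ODE-moment} so the unknown \(\mu_n\) is isolated. Using \(\mu_0 \equiv 1\), the convolution \(\sum_{i=0}^n \mu_i\mu_{n-i}\) equals \(2\mu_n + \sum_{i=1}^{n-1}\mu_i\mu_{n-i}\). Applying \(\tfrac12\tfrac{\mathrm d}{\mathrm dt}+1\) gives
\begin{equation*}
    \mu_n' + 2\mu_n + \Big(\tfrac12\tfrac{\mathrm d}{\mathrm dt}+1\Big) C_n(t) = (n+1)\mu_n,
    \qquad C_n(t) \coloneqq \sum_{i=1}^{n-1}\mu_i(t)\mu_{n-i}(t).
\end{equation*}
Equivalently, \(\mu_n' - (n-1)\mu_n = -\big(\tfrac12\tfrac{\mathrm d}{\mathrm dt}+1\big)C_n\), a linear first-order ODE whose forcing involves only lower moments. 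The base case \(n=1\) is \(\mu_1'=0\), giving \(\kappa_{1,0}=\mu_1(0)\), consistent with \Cref{lem:moment}.

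For the inductive step, assume \(\mu_i(t)=\sum_{j=0}^{i-1}\kappa_{i,j}e^{jt}\) for all \(i<n\). Then
\begin{equation*}
    C_n(t)=\sum_{k=0}^{n-2}\Big(\sum_{i=1}^{n-1}\sum_{j=0}^{k}\kappa_{i,j}\kappa_{n-i,k-j}\Big)e^{kt},
\end{equation*}
using the convention \(\kappa_{i,j}=0\) for \(j\ge i\). The top exponent is \((i-1)+(n-i-1)=n-2\). The operator \(\tfrac12\tfrac{\mathrm d}{\mathrm dt}+1\) multiplies the \(e^{kt}\) coefficient by \(\tfrac12 k+1\). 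Hence the forcing is a combination of \(e^{kt}\) with \(0\le k\le n-2\), and all of these exponents differ from the homogeneous rate \(n-1\).

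The general solution is therefore a particular solution plus \(c\,e^{(n-1)t}\). Inserting \(\sum_k \kappa_{n,k}e^{kt}\) into the ODE and matching the \(e^{kt}\) coefficient for \(k\le n-2\) gives
\begin{equation*}
    (k-(n-1))\,\kappa_{n,k} = -\big(\tfrac12 k+1\big)\sum_{i,j}\kappa_{i,j}\kappa_{n-i,k-j},
\end{equation*}
which is exactly the first case of \eqref{eq:MomentsKappa}. The remaining coefficient \(\kappa_{n,n-1}\) is free, and it is fixed by the initial condition \(\mu_n(0)=\sum_k\kappa_{n,k}\), which gives the second case. Uniqueness of solutions to the linear ODE then ensures the ansatz is the solution.

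The main obstacle is bookkeeping rather than analysis. The sign of \((n-k-1)\) and the factor \(\tfrac12 k+1\) must come out right. One must also check that the double-sum range in \eqref{eq:MomentsKappa}, over \(j\le k\) with the zero convention, correctly captures all products. A separate point is non-resonance: \(k\neq n-1\) in the forcing, so no \(te^{(n-1)t}\) terms appear. I would state this explicitly as part of the induction hypothesis.
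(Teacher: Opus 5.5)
Your proposal is correct and follows essentially the same route as the paper: isolate $\mu_n$ using $\mu_0\equiv 1$, substitute the exponential ansatz into \eqref{eq:ODE-moment} (the paper arrives at the equivalent form \eqref{eq:MomentExprReduced}), match the coefficients of $e^{kt}$ for $0\le k\le n-2$, and fix $\kappa_{n,n-1}$ from $\mu_n(0)=\sum_k\kappa_{n,k}$. The paper simply posits the ansatz; your explicit induction on $n$, the non-resonance observation ($k\le n-2\neq n-1$), and the appeal to uniqueness for the linear initial value problem make that step rigorous.
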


\begin{proof}
    Starting from \eqref{eq:ODE-moment},
    \[
        \left(\frac{1}{2}\frac{\mathrm{d}}{\mathrm{d}t}+1\right)\left[\sum_{i=1}^{n-1}\mu_{i}(t)\mu_{n-i}(t)+2\mu_{n}(t)\right]=(n+1)\mu_{n}(t).
    \]
    Substituting $\mu_n(t) = \sum_{k=0}^{n-1} \kappa_{n,k} e^{kt}$ gives
    \begin{align*}
        (n+1)\mu_n(t) &= \left(\frac{1}{2}\frac{\mathrm{d}}{\mathrm{d}t}+1\right)\left[\sum_{i=1}^{n-1}\mu_{i}(t)\mu_{n-i}(t)+2\sum_{k=0}^{n-1}\kappa_{nk}e^{kt}\right] \\
        &= \left(\frac{1}{2}\frac{\mathrm{d}}{\mathrm{d}t}+1\right)\sum_{i=1}^{n-1}\mu_{i}(t)\mu_{n-i}(t)+\sum_{k=0}^{n-1}\kappa_{nk}\left(k+2\right)e^{kt},
    \end{align*}
    and hence
    \begin{equation}
        \left(\frac{1}{2}\frac{\mathrm{d}}{\mathrm{d}t}+1\right)\sum_{i=1}^{n-1}\mu_{i}(t)\mu_{n-i}(t)=\sum_{k=0}^{n-2}\left(n-k-1\right)\kappa_{nk}e^{kt}\label{eq:MomentExprReduced}
    \end{equation}
    Rearranging terms gives
    \begin{align}
        \sum_{i=1}^{n-1}\mu_{i}(t)\mu_{n-i}(t) &= \sum_{i=1}^{n-1}\left(\sum_{j=0}^{i-1}\kappa_{ij}e^{jt}\right)\left(\sum_{j=0}^{n-i-1}\kappa_{n-i,j}e^{jt}\right) \nonumber \\
        &=\sum_{i=1}^{n-1}\sum_{k=0}^{n-2}\sum_{j=0}^{k}\kappa_{ij}\kappa_{n-i,k-j}e^{kt}.\label{eq:KappaConv}
    \end{align}
    Substituting \eqref{eq:KappaConv} into \eqref{eq:MomentExprReduced} gives
    \[
        \sum_{k=0}^{n-2}\left(\frac{1}{2}k+1\right)\sum_{i=1}^{n-1}\sum_{j=0}^{k}\kappa_{ij}\kappa_{n-i,k-j}e^{kt}=\sum_{k=0}^{n-2}\left(n-k-1\right)\kappa_{nk}e^{kt},
    \]
    and \eqref{eq:MomentsKappa} follows by equating terms for $0 \leq k \leq n-2$. Only the expression for $\kappa_{n,n-1}$ remains, which follows from
    $\sum_{j=0}^{n-1} \kappa_{n,j} = \mu_n(0)$.
\end{proof}


\section{Finite-Size Correction to the Spectral Curve}
\label{sec:finite-correction}

It is critical to note that the discussion so far has focused on cases where the spectrum of a finite size matrix is approximated by that of a matrix with ``equivalent'' spectral distribution, but is of infinite size, and therefore has a continuous spectrum. This is typically fine for most applications, but it can fail in cases where a positive-definite matrix is considered with a nonzero minimal eigenvalue, yet the limiting spectrum possesses a left edge at zero. In these cases, especially to estimate the condition number of matrix, accounting for finite size effects becomes essential. This is intractable in the general cases considered here, so we shall instead develop an effective heuristic based on a special class of random matrix distributions. We note that the discussion below is \emph{strictly informal}, as a robust proof is well beyond the scope of this paper. 

Let $\tens{A}$ be an $n \times n$ real symmetric matrix and assume that the probability measure over the elements of $\tens{A}$ satisfies
\[
    p(\tens{A}) \propto \exp\left(-\frac{n}{2}\mbox{tr} V(\tens{A})\right) \prod_{i,j=1}^n \mathrm{d} \tens{A}_{ij},
\]
where $V$ is an arbitrary polynomial. Our objective is to estimate $\bar{m}_n(z) = \mathbb{E}m_n(z)$ where $m_n(z) = \frac{1}{n}\mbox{tr}G(z)$ and $\tens{G}(z) = (\tens{A} - z\tens{I})^{-1}$ is the resolvent for $z \in \mathbb{C} \setminus \mathbb{R}$. Consider Stein's Lemma: for any matrix-valued smooth function $f = (f_{ij})$ that grow subexponentially,
\[
    \int \sum_{i \leq j} \frac{\partial}{\partial \tens{A}_{ij}}(f_{ij}(\tens{A}) e^{-\frac{n}{2}\mbox{tr}V(\tens{A})}) \mathrm{d} \tens{A} = 0,
\]
and so
\[
    \mathbb{E}\left[\sum_{i\leq j} \frac{\partial f_{ij}(\tens{A})}{\partial \tens{A}_{ij}}\right] = \frac{n}{2}\mathbb{E}\left[\sum_{i \leq j} f_{ij}(\tens{A})\frac{\partial}{\partial \tens{A}_{ij}} \mbox{tr}V(\tens{A})\right] = \frac{n}{2}\mathbb{E}[\mbox{tr}(f(\tens{A})V'(\tens{A}))].
\]
Here, we want to take $f(\tens{A}) = \tens{G}(z)$. Note that since $V$ is a polynomial, $Q(x,z) = (V'(z)-V'(x))/(z - x)$ is a bivariate polynomial. Then
\[
    \tens{G}(z)V'(\tens{A}) = (\tens{A} - z\tens{I})^{-1} V'(\tens{A}) = V'(z)(\tens{A} - z\tens{I})^{-1} + Q(\tens{A},z) = V'(z) \tens{G}(z) + Q(\tens{A},z).
\]
Substituting this into the right-hand side of Stein's Lemma yields
\[
    \frac{n}{2}\mathbb{E}\mbox{tr}(\tens{G}(z)V'(\tens{A})) = \frac{n}{2}(V'(z)\mathbb{E}\mbox{tr}\tens{G}(z)+\mathbb{E}\mbox{tr} Q(\tens{A},z)).
\]
By letting $m_n(z) = \frac{1}{n}\mbox{tr}\tens{G}(z)$ and defining the polynomial expectation $Q_n(z) = \mathbb{E}[\frac{1}{n}\mbox{tr} Q(\tens{A},z)]$, the right-hand side simplifies:
\[
    \frac{n}{2}\mathbb{E}\mbox{tr}(\tens{G}(z)V'(\tens{A})) = \frac{n^2}{2}(V'(z)\bar{m}_n(z)+Q_n(z)).
\]
To evaluate the left-hand side, we apply the matrix chain rule $\mathrm{d}\tens{G} = -\tens{G}(\mathrm{d} \tens{A}) \tens{G}$ to find the derivative
\[
    \frac{\partial \tens{G}_{ij}}{\partial \tens{A}_{ij}} = \begin{cases} 
        -(\tens{G}_{ii}\tens{G}_{jj} + \tens{G}_{ij}^2) & \text{ for }i \neq j,\\
        -\tens{G}_{ii}^2 & \text{ for }i = j.
    \end{cases}
\]
Consequently,
\[
    \sum_{i \leq j} \frac{\partial \tens{G}_{ij}}{\partial \tens{A}_{ij}} = -\frac12 ((\mbox{tr}\tens{G}(z))^2 + \mbox{tr}(\tens{G}(z))^2).
\]
Since $\mbox{tr}(\tens{G}(z)^2) = \frac{\mathrm{d}}{\mathrm{d}z}\mbox{tr}\tens{G}(z) = n m_n'(z)$,
\[
    \sum_{i \leq j} \frac{\partial \tens{G}_{ij}}{\partial \tens{A}_{ij}} = -\frac{n^2}{2} \mathbb{E}[m_n(z)^2] - \frac{n}{2}\mathbb{E}[m_n'(z)].
\]
Therefore,
\[
    \mathbb{E}[m_n(z)^2] + V'(z)\bar{m}_n(z) + Q_n(z) + \frac{1}{n}\mathbb{E}[m_n'(z)] = 0.
\]
Here, we assert that $m_n(z) \approx m(z) + \mathcal{O}(1/n)$, and so $\mbox{Var} (m_n(z)) = \mathcal{O}(n^{-2})$. To prove this rigorously is quite challenging, and we refer the reader to \cite[Section 8]{GUIONNET-2009}. Consequently, keeping only first-order terms,
\begin{equation}
\label{eq:QuadraticModelRM}
    \bar{m}_n(z)^2 + V'(z) \bar{m}_n(z) + Q_n(z) + \frac{1}{n}\bar{m}_n'(z) \approx 0.
\end{equation}

\begin{remark}
\label{rem:EnsembleQuadratic}
    Taking $n \to \infty$ in \eqref{eq:QuadraticModelRM} reveals that this large class of random matrix ensembles always exhibits a polynomial relation in the limiting Stieltjes transform $m$ and its argument $z$ that is quadratic in $m$. 
\end{remark}

Observing that the first three terms comprise a bivariate polynomial in $z$ and $m_n(z)$, we equate this with the second-order expansion about any point of interest $m_\ast$:
\[
    P_2(z,m) = P(z, m_\ast) + \partial_m P(z, m_\ast) (m-m_\ast) + \tfrac12 \partial_m^2 P(z, m_\ast) (m-m_\ast)^2,
\]
and arrive at our \textbf{correction ansatz}:
\begin{equation}
	\label{eq:FirstOrderAnsatz}
	\begingroup
	\setlength{\fboxsep}{5pt}
	\setlength{\fboxrule}{0.4pt}
	\fcolorbox{black!70}{gray!5}{%
		\(\displaystyle P_2(z, m_n(z)) + \frac{1}{n} m_n'(z) = 0\)%
	}
	\endgroup
\end{equation}

Now we consider how this affects the location of branch points.

\begin{proposition}
    \label{prop:FiniteEffectBranch}
    Let $(z_\ast,m_\ast) \in \mathbb{C}^2$ be a branch point of $m(t,z)$ with $m_\ast \coloneqq m(t,z_\ast)$. Define the pullback $(\zeta_\ast,y_\ast) = \phi_t^{-1}(z_\ast,m_\ast)$. Then, evaluated along the integral curve $y(\zeta)$ satisfying the initial differential equation \eqref{eq:FirstOrderAnsatz}, the point $(\zeta_\ast, y_\ast)$ satisfies
    \[
        y_\ast^2 - n(\tau - 1)P_2(\zeta_\ast, y_\ast) = 0,
    \]
    followed by the pushforward $z_\ast = \zeta_\ast - (\tau-1)y_\ast^{-1}$. 
\end{proposition}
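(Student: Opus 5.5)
The plan is to repeat the argument of \Cref{prop:fd-branch-point}, replacing the implicit-differentiation slope $\mathrm{d}y/\mathrm{d}\zeta = -\partial_\zeta P/\partial_y P$ by the slope dictated by the finite-size ansatz \eqref{eq:FirstOrderAnsatz}. As before, the branch points of $m(t,\cdot)$ at time $t$ are taken to be the critical points of the projection $z(\zeta,y) = \zeta - (\tau-1)y^{-1}$, restricted now to the integral curve $\zeta \mapsto (\zeta, y(\zeta))$ rather than to the algebraic curve $\mathcal{C}$. The pushforward $z_\ast = \zeta_\ast - (\tau-1)y_\ast^{-1}$ is simply the definition of $\phi_t$ carried over from \Cref{sec:fd-on-sc}, so it needs no separate justification.

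\textbf{Step 1: the slope from the ansatz.} Along the integral curve of \eqref{eq:FirstOrderAnsatz} with $m_n = y$, the initial differential equation reads $P_2(\zeta, y(\zeta)) + \frac{1}{n} y'(\zeta) = 0$. This gives the slope directly,
\begin{equation*}
    y'(\zeta) = -n\, P_2(\zeta, y(\zeta)),
\end{equation*}
and it holds without requiring $\partial_y P \neq 0$. This is the key simplification relative to \Cref{prop:fd-branch-point}: the first-order ODE supplies $y'$ explicitly, so no implicit differentiation is needed.

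\textbf{Step 2: the critical-point condition.} Differentiating $z(\zeta) = \zeta - (\tau-1)y(\zeta)^{-1}$ along the integral curve, exactly as in \eqref{eq:zp-zpp}, gives
\begin{equation*}
    \frac{\mathrm{d}z}{\mathrm{d}\zeta} = 1 + (\tau-1)y^{-2}y' = 1 - n(\tau-1)y^{-2}P_2(\zeta,y).
\end{equation*}
Setting this to zero at $(\zeta_\ast, y_\ast)$ and multiplying through by $y_\ast^2$, which is legitimate under the standing assumption $m \neq 0$, yields $y_\ast^2 - n(\tau-1)P_2(\zeta_\ast,y_\ast) = 0$.

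\textbf{Main obstacle.} The delicate point is interpretive rather than computational. One must justify that a branch point of the finite-$n$ transform, where the Stieltjes transform is no longer algebraic, still corresponds to a critical point of the characteristic projection along the solution curve. The argument relies on the method-of-characteristics map $\phi_t$ being unchanged by the finite-size correction, since $\phi_t$ is determined by the free decompression PDE alone. Given that the section is declared informal, I would state this identification as part of the ansatz. As a consistency check, letting $n \to \infty$ with $P_2 \approx P$ forces $P(\zeta_\ast, y_\ast) \to 0$, which is consistent with the limiting curve constraint.
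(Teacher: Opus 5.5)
Your proposal is correct and follows essentially the same route as the paper: read off $y' = -nP_2(\zeta,y)$ from the ansatz \eqref{eq:FirstOrderAnsatz}, substitute it into the criticality condition $\mathrm{d}z/\mathrm{d}\zeta = 1 + (\tau-1)y^{-2}y' = 0$, and clear the factor $y_\ast^{-2}$. Your interpretive remark about identifying branch points with critical points of the characteristic projection, and your $n \to \infty$ consistency check, match the informal framing and the follow-up discussion the paper gives after the proposition.
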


\begin{proof}
    From the differential equation $P_2(\zeta,y(\zeta)) + \frac{1}{n}y'(\zeta) = 0$, we immediately have that
    \[
        \frac{\mathrm{d}y}{\mathrm{d}\zeta} = -n P_2(\zeta,y).
    \]
    Substituting this into the criticality condition $\frac{\mathrm{d}z}{\mathrm{d}\zeta} = 0$ gives
    \[
        \frac{\mathrm{d}z}{\mathrm{d}\zeta} = 1 + (\tau - 1) y^{-2}(-n P_2(\zeta,y)) = 0,
    \]
    from whence the result follows.
\end{proof}

A few remarks about \Cref{prop:FiniteEffectBranch}: assuming that $y_\ast$ remains finite as $n \to \infty$ and $P \approx P_2$, the condition $y_\ast^2 - n(\tau-1) P_2(\zeta_\ast,y_\ast) = 0$ implies that 
\[
    \lim_{n \to \infty} P_2(\zeta_\ast, y_\ast) = 0.
\]
Furthermore, since $P_2(\zeta,y) = -\frac{1}{n} y'$, we find that $y_\ast^2 + (\tau-1) y_\ast' = 0$, and so $y_\ast' = -\frac{1}{\tau - 1} y_\ast^2$. Applying this to the derivative of \eqref{eq:FirstOrderAnsatz} gives
\[
    0 = \partial_\zeta P_2 + (\partial_y P_2)y' + \frac{1}{n} y'' = \partial_\zeta P_2 + (\partial_y P_2)\left(-\frac{y_\ast^2}{\tau-1}\right) + \frac{1}{n}y_\ast'' = 0.
\]
Therefore, taking $n \to \infty$, we find that
\[
    y_\ast^2 \partial_y P_2 - (\tau - 1)\partial_\zeta P = 0.
\]
Consequently, we recover both of the original branch point conditions in the limit as $n \to \infty$. Most significantly, however, is that we can show that the finite-size correction implies a Tracy--Widom-type scaling of $\mathcal{O}(n^{-2/3})$ in the left-edge. Performing a complete analysis in this respect is well beyond the scope of this paper. Instead, we can shortcut the proof by assuming a particular form of $m_n(z_n)$ where each $z_n$ is the branch point of the finite-size correction $m_n$. In particular, if $(z_\ast, m_\ast)$ is the limiting branch point as $n \to \infty$, we may assert that 
\[
m_n(z_n) \approx m_\ast + b_n u(a_n^{-1}(z_n - z_\ast)),
\]
for sequences $a_n, b_n$ and a function $u$. Given that $m_n$ has $n$ poles located at each of the eigenvalue locations, and we should expect that $a_n$ is the appropriate rescaling of the spectral gap as $n \to \infty$, $u$ should contain at least two poles: one for the eigenvalue at the branch point (e.g., the left edge), and another for the next closest eigenvalue. This assumption is sufficient to uniquely determine the limiting behavior, and we will in fact find that $u$ has infinitely many poles to capture \emph{all} of the eigenvalues. 


\begin{proposition}
    \label{prop:TW}
    Assume that for each $n$, the finite-order resolvent $m_n$ satisfies \eqref{eq:FirstOrderAnsatz} and that $(z_\ast,m_\ast)$ is a branch point of $P$ in the sense that $P(z_\ast,m_\ast) = 0$ and $\partial_m P(z_\ast, m_\ast) = 0$. Assume that $\partial_z P(z_\ast,m_\ast) \neq 0$ and $\partial_m^2 P(z_\ast, m_\ast) \neq 0$. Consider a sequence $(z_n,m_n(z_n))$ defined in terms of coefficients $a_n, b_n > 0$, a parameter $\xi$, and a function $u_n(\xi)$ by
    \[
    z_n = z_\ast + a_n \xi,\qquad m_n(z_n) = m_\ast + b_n u_n(\xi).
    \]
    and where $a_n, b_n \to 0$ as $n \to \infty$. Assume that $u_n(\xi)$ converges to a limit $u(\xi)$ pointwise that is meromorphic with at least two poles. Then
    \[
    a_n \asymp n^{-2/3},\qquad b_n \asymp n^{-1/3}.
    \]
\end{proposition}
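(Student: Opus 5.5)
Substitute the scaling ansatz into the correction ansatz \eqref{eq:FirstOrderAnsatz} and match orders. Write $P_2(z,m)$ as its expansion about $(z_\ast,m_\ast)$. Since $P(z_\ast,m_\ast)=0$ and $\partial_m P(z_\ast,m_\ast)=0$, the leading terms of $P_2(z_n, m_n(z_n))$ are
\[
\partial_z P(z_\ast,m_\ast)\, a_n \xi \;+\; \tfrac12 \partial_m^2 P(z_\ast,m_\ast)\, b_n^2 u_n(\xi)^2 \;+\; \partial_{zm}P\, a_n b_n \xi u_n(\xi) \;+\; O(a_n^2).
\]
The derivative term transforms by the chain rule, $m_n'(z_n) = (b_n/a_n)\, u_n'(\xi)$. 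Hence \eqref{eq:FirstOrderAnsatz} becomes
\[
\partial_z P\, a_n \xi + \tfrac12 \partial_m^2 P\, b_n^2 u_n^2 + \frac{b_n}{n a_n} u_n' + (\text{lower order}) = 0.
\]
The mixed term $a_nb_n$ and the term $a_n^2$ are negligible relative to $a_n$ because $a_n,b_n\to0$.

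The core step is a dominant-balance argument showing that all three retained terms must be of the same order, that is,
\[
a_n \asymp b_n^2 \asymp \frac{b_n}{n a_n}.
\]
Solving these gives $b_n \asymp n^{-1/3}$ and $a_n \asymp n^{-2/3}$. To justify the balance, divide by the largest of the three scales and pass to the limit $u_n\to u$. The nondegeneracy hypotheses $\partial_zP\neq0$ and $\partial_m^2P\neq0$ keep the coefficients from vanishing. I would then rule out each degenerate alternative in turn.
\begin{itemize}
\item If $u_n'$ dominates, the limit satisfies $u'=0$. Then $u$ is constant, which has no poles.
\item If $b_n^2$ dominates, the limit satisfies $u^2=0$. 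Again there are no poles.
\item If $a_n\xi$ dominates, the limit gives $\xi=0$ identically, which is impossible.
\item If exactly two terms balance, the limit is one of $u' = c u^2$, $u^2=c\xi$, or $u'=c\xi$. Their solutions are, respectively, $u=-1/(c\xi+d)$ (a single pole), $u=\pm\sqrt{c\xi}$ (not meromorphic, since it has a branch point), and a quadratic polynomial (no poles).
\end{itemize}
In every case the hypothesis that $u$ is meromorphic with at least two poles is violated. Therefore all three terms balance.

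With the full balance, the limit is a Riccati equation $u' + \alpha u^2 + \beta \xi = 0$ with $\alpha,\beta\neq0$. The substitution $u = \psi'/(\alpha\psi)$ linearizes it to the Airy equation $\psi'' + \alpha\beta\xi\psi=0$. Its solutions have infinitely many zeros, so $u$ has infinitely many simple poles. This confirms the assertion in the text that $u$ has infinitely many poles, and shows that the balanced case is consistent with the hypotheses.

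The main obstacle is making the dominant-balance step rigorous. The ratios of the three scales need not converge, so I would pass to subsequences along which each ratio converges in $[0,\infty]$. I would then apply the case analysis above along each subsequence. Since every subsequence is forced into the same balance, the two-sided bounds $\asymp$ follow. Controlling the neglected Taylor remainders requires only local uniformity of $u_n\to u$ away from poles, and I would state this as an implicit regularity assumption, in keeping with the informal character of the section.
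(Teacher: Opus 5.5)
Your proposal is correct and follows the paper's proof essentially step for step. You Taylor-expand $P_2$ using the branch-point conditions, rescale the derivative term by $b_n/a_n$, normalize by the largest of $b_n^2$, $a_n$ and $b_n/(na_n)$ along a convergent subsequence, and rule out every degenerate limit (constant or polynomial $u$, $\pm\sqrt{c\xi}$, a single pole) before checking via the Riccati-to-Airy linearization that the fully balanced case yields infinitely many poles. Your remark that passing $u_n'\to u'$ requires local uniform convergence makes explicit a step the paper uses without comment.
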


\begin{proof}
    Let $x_n = a_n \xi$ and $y_n = b_n u_n(\xi)$ denote the local scaling variables such that $z_n = z_\ast + x_n$ and $m_n(z_n) = m_\ast + y_n$. Expanding $P$ around $z_\ast$ gives
    \begin{subequations}
    \label{eq:FiniteOrderTaylor}
    \begin{align}
    P(z_n, m_\ast) &= P(z_\ast, m_\ast) + \partial_z P(z_\ast, m_\ast) x_n + \mathcal{O}(x_n^2), \label{eq:Taylor_P}\\
    \partial_m P(z_n, m_\ast) &= \partial_m P(z_\ast, m_\ast) + \mathcal{O}(x_n), \label{eq:Taylor_dP}\\
    \partial_m^2 P(z_n, m_\ast) &= \partial_m^2 P(z_\ast, m_\ast) + \mathcal{O}(x_n), \label{eq:Taylor_ddP}
    \end{align}
    \end{subequations}
    Since $(z_\ast, m_\ast)$ is a branch point, $P(z_\ast,m_\ast) = \partial_m P(z_\ast,m_\ast) = 0$. Now define the non-zero constants $A = \frac12 \partial_m^2 P(z_\ast, m_\ast)$ and $B = \partial_z P(z_\ast, m_\ast)$. Then substituting \eqref{eq:FiniteOrderTaylor} into the definition of $P_2$ gives
    \[
    P_2(z_n, m_n(z_n)) = A y_n^2 + B x_n + E_n,
    \]
    where $E_n = o(x_n) + \mathcal{O}(x_n y_n) + o(y_n^2)$. But now
    \[
    m_n'(z_n) = \frac{\mathrm{d}(m_\ast + b_n u_n(\xi))}{\mathrm{d}(z_\ast + a_n \xi)} = \frac{b_n}{a_n} u_n'(\xi),
    \]
    and so \eqref{eq:FirstOrderAnsatz} implies that
    \begin{equation}
    \label{eq:IntermediateFirstOrder}
    A b_n^2 u_n(\xi)^2 + B a_n \xi + \frac{b_n}{na_n} u_n'(\xi) + E_n = 0.
    \end{equation}
    Now let $\theta_n = \max\{|A|b_n^2, |B|a_n, |b_n/(n a_n)|\}$. Dividing \eqref{eq:IntermediateFirstOrder} by $\theta_n$ gives
    \[
    \alpha_n u_n(\xi)^2 + \beta_n \xi + \gamma_n u_n'(\xi) + \frac{E_n}{\theta_n} = 0,
    \]
    where $\alpha_n = A b_n^2 / \theta_n$, $\beta_n = B a_n / \theta_n$, and $\gamma_n = b_n / (n a_n \theta_n)$. By construction, $\alpha_n,\beta_n,\gamma_n \in [-1,1]$, and for any $n$, at least one of these coefficients has an absolute value of $1$. By the Bolzano--Weierstrass Theorem, passing to a subsequence if necessary, these coefficients converge to finite limits $\alpha,\beta,\gamma$, respectively, with at least one limit being strictly non-zero. Since $E_n$ is strictly bounded by terms that vanish faster than $\theta_n$, the remainder vanishes in the limit. Therefore, taking the limit as $n \to \infty$, 
    \begin{equation}
    \label{eq:FiniteOrderLimODE}
    \alpha u(\xi)^2 + \beta \xi + \gamma u'(\xi) = 0.
    \end{equation}
    If $\gamma = 0$, then $\alpha u^2 + \beta \xi = 0$. This implies that $u(\xi) = \pm \sqrt{-\frac{\beta}{\alpha}\xi}$, which has a branch cut and is without poles, contradicting the hypotheses\footnote{This represents the regime where the finite-order correction is removed entirely, and so only describes the asymptotic macroscopic square-root behavior near the edge of the spectral density.}. If $\alpha = 0$ then $\beta \xi + \gamma u'(\xi) = 0$. Integrating this reveals that $u(\xi) = -\frac{\beta}{2\gamma} \xi^2 + C$ for some constant $C$ which has no poles. If $\beta = 0$, the equation becomes $\alpha u^2 + \gamma u' = 0$. Solving this differential equation yields $u(\xi) = \frac{\gamma}{\alpha(\xi - C)}$ for some constant $C>0$. This is meromorphic, but possesses only one pole\footnote{This corresponds to the regime where the spectral gap has been stretched to the point where only one pole at the location of the left edge remains.}. Therefore, $\alpha,\beta,\gamma$ must all be non-zero. For this to occur, it must be the case that
    \[
    b_n^2 \asymp a_n \asymp \frac{b_n}{n a_n}.
    \]
    Now, since $b_n^2 \asymp a_n$, this implies that $b_n^2 \asymp \frac{b_n}{nb_n^2} = \frac{1}{nb_n}$ and so $b_n^3 \asymp n^{-1}$. Immediately this implies that $b_n \asymp n^{-1/3}$ and $a_n \asymp n^{-2/3}$. It remains only to identify that the limiting $u$ satisfies the condition of being meromorphic with at least two poles. Note that the limiting equation \eqref{eq:FiniteOrderLimODE} is a Riccati equation, and so by applying the transformation
    \[
    u(\xi) = \frac{\gamma}{\alpha} \cdot \frac{v'(\xi)}{v(\xi)},
    \]
    and then the linear rescaling $w = -(\alpha \beta \gamma^{-2})^{1/3} \xi$, the differential equation of $v(\psi)$ becomes the Airy equation $v''(w) - w v(w) = 0$. The solutions for $v$ are linear combinations of the Airy functions $\mathrm{Ai}$ and $\mathrm{Bi}$ in $w$ \cite[Section 9.2(i)]{NIST-DLMF}. These are entire functions with infinitely many roots on the real axis where the derivative does not vanish \cite[Section 9.9]{NIST-DLMF}. Therefore $u(\xi)$ is meromorphic with infinitely many poles on the real axis, and satisfies the required conditions.
\end{proof}

The finite-size correction becomes particularly relevant in our diffusion model example, as free decompression predicts that the left-edge should go to zero, even though the matrix in question is strictly positive-definite. Fortunately, the decay rate in the intermediate regime appears to be well-captured by the Tracy--Widom-type scaling behavior predicted in \Cref{prop:TW}; see \Cref{fig:diff-left-edge}. Unfortunately, as $n$ grows larger, the left-edge begins to obey an exponential regime instead. This regime is not captured by the finite-size correction. Instead, the behavior is better predicted due to large deviation theory, as it is known that the left tail for very large $n$ exhibits exponential decay \cite{RUDELSON-2009}. We do not expect that simple corrections to the free decompression operation will adequately capture this behavior at these size scales.

\begin{figure}[t!]
    \centering
    \includegraphics[width=0.47\textwidth]{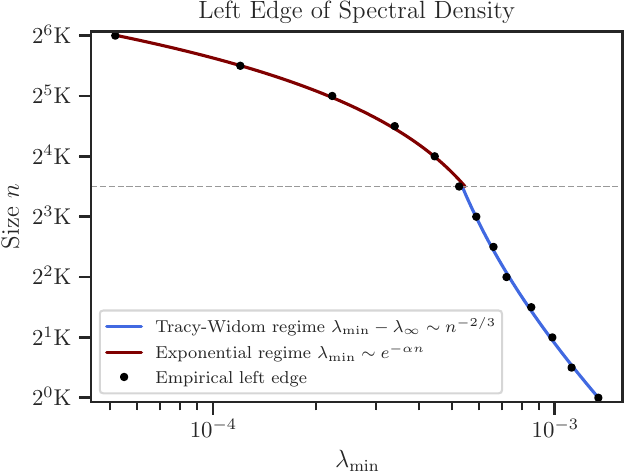}
    \caption{Simulated (black) smallest eigenvalue $\lambda_{\min}$ of the diffusion model matrix example, with Tracy--Widom (blue) predicted decay at rate $\asymp n^{-2/3}$ and exponential (red) predicted decay at rate $\asymp e^{-\alpha n}$ for fitted $\alpha$. }
    \label{fig:diff-left-edge}
\end{figure}


\section{Random Matrix Models} \label{sec:ensemble-models}

This section describes the random matrix models and empirical datasets used in \Cref{sec:results}. We begin with the analytically tractable compound free Poisson and free L\'{e}vy laws in \Cref{sec:compound-poisson,sec:levy}, whose Stieltjes transforms satisfy explicit algebraic relations and therefore provide controlled benchmarks for free decompression. We then discuss the Pennington--Bahri Hessian model in \Cref{sec:pennington-bahri}, which is a neural-network-inspired special case of the algebraic free L\'{e}vy family. Finally, we describe the empirical Hessian and diffusion-model matrices in \Cref{sec:hessian-example,sec:bg-diffusion}, which serve as large-scale data-driven examples.


\subsection{Compound Free Poisson Law} \label{sec:compound-poisson}

The compound free Poisson law \citep[p. 218]{NICA-2006} is a convenient multi-bulk testbed whose Stieltjes transform satisfies an explicit algebraic relation. It may be viewed as a natural generalization of the Marchenko--Pastur law, with the usual Marchenko--Pastur case recovered as the simplest one-parameter member of the family. 

This distribution admits a natural random-matrix realization \citep[Section 22.5.4]{SPEICHER-2015}. Let \(\tens{Z} \in \mathbb{R}^{p \times n}\), with \(p/n \to \lambda \in (0,\infty)\), have iid centered entries with variance \(1/n\). Suppose \(\gtens{\Sigma}\) is deterministic positive semidefinite with empirical spectral distribution converging weakly to a compactly supported probability measure \(\nu\). Then the companion sample covariance matrix
\begin{equation}
    \underline{\tens{S}} \coloneqq \frac{1}{n} \tens{X}^{\intercal} \tens{X},
    \qquad
    \tens{X} \coloneqq \gtens{\Sigma}^{\frac{1}{2}} \tens{Z},
    \label{eq:comp-S}
\end{equation}
has a limiting eigenvalue distribution given by the compound free Poisson law with rate \(\lambda\) and jump distribution \(\nu\). When \(\lambda<1\), the companion matrix \(\underline{\tens{S}}\) has \((1-\lambda)n\) zero eigenvalues asymptotically, and the limiting law therefore contains an atom at the origin of mass \(\max\{1-\lambda,0\}\).

The limiting eigenvalue distribution above can be characterized through its \(R\)-transform described below. Let \(\nu\) denote the jump distribution of the compound free Poisson law. For the closed-form algebraic examples used here, we take \(\nu\) to be a finite-atom probability measure,
\begin{equation}
    \nu(\mathrm{d} x) \coloneqq \sum_{i=1}^{r} w_i \delta_{t_i}(\mathrm{d} x),
    \label{eq:finite-jump-measure}
\end{equation}
where \(\delta_{t_i}\) denotes a unit point mass at \(x=t_i\), the atoms are distinct, \(w_i>0\), and \(\sum_{i=1}^{r} w_i=1\). The \(R\)-transform is then
\begin{equation}
    R(w) = \lambda \int_{\mathbb{R}} \frac{x}{1-xw}\, \nu(\mathrm{d}x)
    = \lambda \sum_{i=1}^{r} w_i \frac{t_i}{1-t_i w},
    \label{eq:cfp-R}
\end{equation}
where \(\lambda>0\) denotes the intensity, or rate. Let \(m(z)\) be the Stieltjes transform of the law. The standard relation between the Stieltjes transform and the \(R\)-transform reads; see, \eg, \citep[Theorem 12.7]{NICA-2006},
\begin{equation}
    z = R(w) + \frac{1}{w},
    \qquad
    w = -m(z),
    \label{eq:R-m}
\end{equation}
where the physical branch of \(m\) is selected by the Herglotz condition \(\Im m(z) > 0\) for \(z \in \mathbb{C}^+\).

When \(\nu\) is finitely supported as in \eqref{eq:finite-jump-measure}, multiplying \eqref{eq:R-m} by the common denominator \(m \prod_{i=1}^{r} (1-t_i m)\) yields an explicit algebraic constraint
\begin{equation}
    P(z,m)=0,
\end{equation}
where \(P\) has degree \(r+1\) in \(m\). In the special case \(r=1\), this relation reduces to the familiar quadratic equation for the Marchenko--Pastur Stieltjes transform.

\begin{example}
    For the two-atom case \(r=2\) used in our experiments, \(P\) is cubic:
    \begin{equation}
        P(z,m) \coloneqq a_3(z) m^3 + a_2(z) m^2 + a_1(z) m + a_0(z)=0,
        \label{eq:cp-poly}
    \end{equation}
    with coefficients
    \begin{equation}
        \begin{aligned}
            a_3(z) &= z\, t_1 t_2, \\
            a_2(z) &= z(t_1+t_2) + (1-\lambda) t_1 t_2 , \\
            a_1(z) &= z + (t_1+t_2) - \lambda (w_1 t_1 + w_2 t_2), \\
            a_0(z) &= 1,
        \end{aligned}
        \label{eq:cp-poly-coeff}
    \end{equation}
    where \(w_2 \coloneqq 1-w_1\). Hence, \(m(z)\) is a three-sheet algebraic function, providing a controlled benchmark for continuation and evolution in multi-bulk settings.
\end{example}

\begin{figure}[t]
    \centering
    \includegraphics[width=\textwidth]{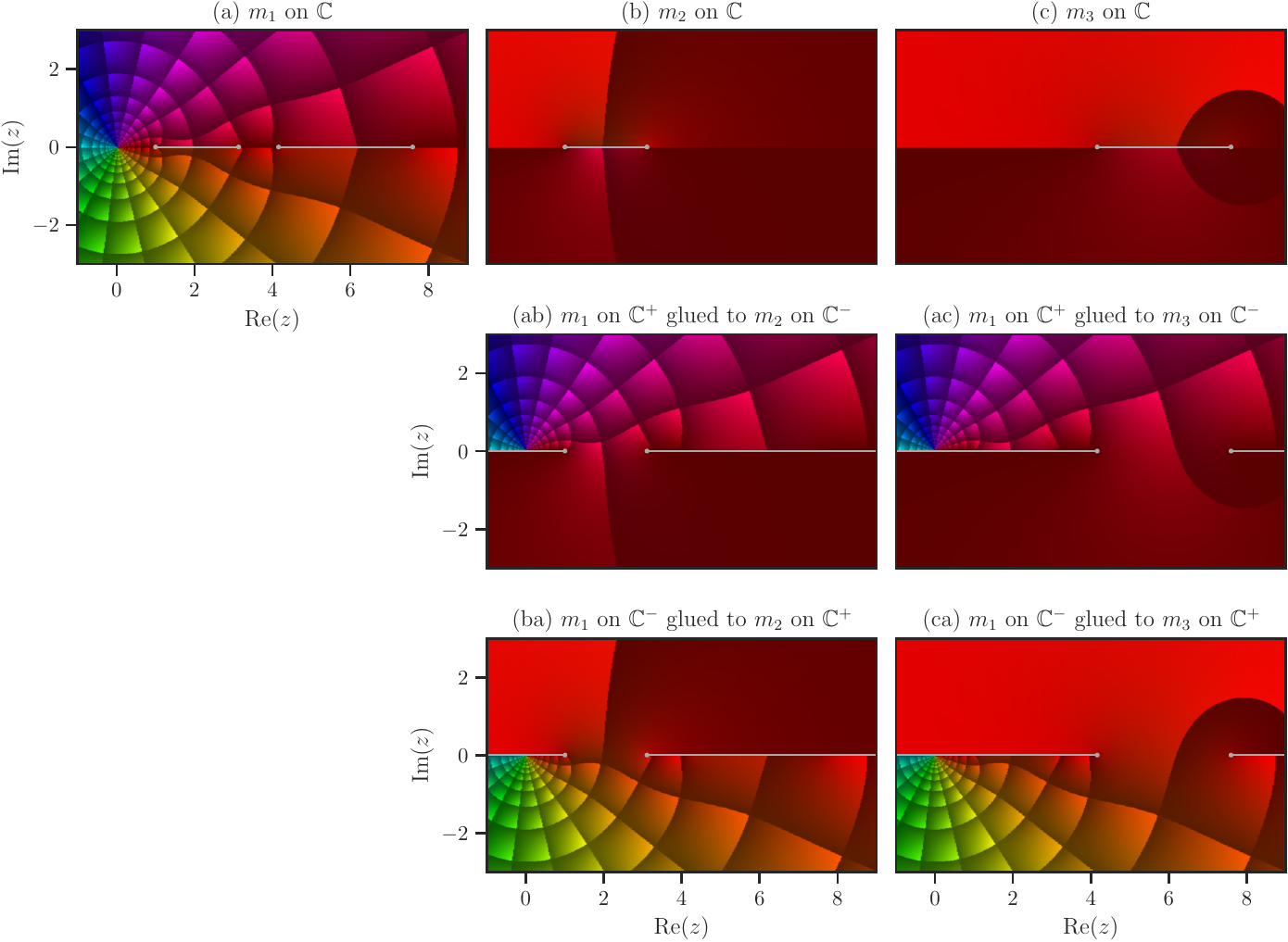}
    \caption{Branches and analytic continuations of the cubic compound free Poisson Stieltjes transform in \eqref{eq:cp-poly}. The first row shows the three algebraic branches \(m_1,m_2,m_3\) on the \(z\)-plane, with branch cuts marked on the real axis. The physical branch is \(m_1\). The lower panels show the four possible half-plane gluings involving the physical sheet: for example, \((ab)\) denotes \(m_1\) on \(\mathbb{C}^{+}\) glued to \(m_2\) on \(\mathbb{C}^{-}\), while \((ba)\) denotes the opposite gluing. Across a shared cut, the individual branches are discontinuous, but the glued half-planes form a continuous analytic continuation. Colors encode the complex value of the branch using the domain-coloring convention of \Cref{sec:notation}.}
    \label{fig:cfp-branches}
\end{figure}

\begin{figure}[t]
    \centering
    \ifjournal
        \includegraphics[width=0.67\textwidth]{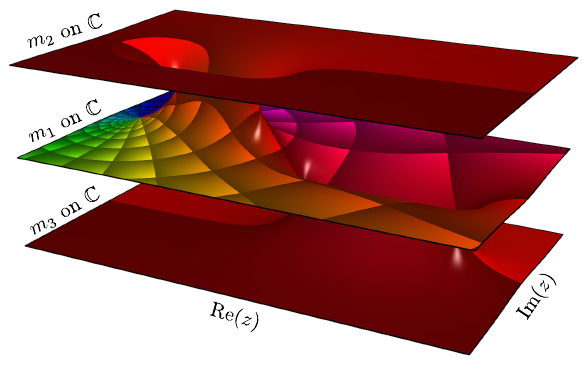}
    \else
        \includegraphics[width=0.63\textwidth]{\figdir/cfp-glue-sheets}
    \fi
    \caption{Three-sheet visualization of the cubic compound free Poisson spectral curve from \Cref{fig:cfp-branches}. The middle sheet represents the physical branch \(m_1\), while the upper and lower sheets represent the non-physical branches \(m_2\) and \(m_3\). The sheets are vertically separated for visualization and are brought together along shared branch cuts to illustrate analytic continuation between branches. This geometric view explains why representing the Stieltjes transform as one sheet of a larger algebraic curve is useful: crossing a branch cut does not terminate the function, but moves the continuation to another sheet.}
    \label{fig:cfp-glue-sheets}
\end{figure}

For the numerical example using the compound free Poisson law, we use \(\lambda = 0.1\), \((t_1, t_2) = (2, 5\frac{1}{2})\), and \((w_1, w_2) = (0.75, 0.25)\). This produces a cubic algebraic Stieltjes transform, so in the numerical fit we use \((\deg_m, \deg_z) = (3, 1)\) and impose only the zeroth-moment normalization constraint.

The cubic case also provides a useful visualization of the sheet structure underlying algebraic Stieltjes transforms. \Cref{fig:cfp-branches} shows the three branches of \(m(z)\) separately, together with the half-plane gluings that analytically continue the physical branch across its cuts. \Cref{fig:cfp-glue-sheets} shows the same continuation geometry as a three-sheet surface. These figures illustrate the central advantage of working with the full algebraic relation \(P(z,m)=0\): the physical Stieltjes transform is not an isolated single-valued function, but one sheet of a larger spectral curve whose other sheets encode its analytic continuations.


\subsection{Free L\'{e}vy Law} \label{sec:levy}

The compound free Poisson law can be generalized by adding a drift and a semicircular, or free Gaussian, component. This yields the free L\'{e}vy law
\begin{equation*}
    \nu_{\mathrm{FL}} = \nu_{\delta_a} \boxplus \nu_{\mathrm{SC}_{\sigma^2}} \boxplus \nu_{\mathrm{CFP}(\lambda,\nu)},
\end{equation*}
where \(\delta_a\) denotes the deterministic shift \(X \mapsto X+a\), \(\mathrm{SC}_{\sigma^2}\) is Wigner's semicircle law with variance \(\sigma^2\), and \(\mathrm{CFP}(\lambda,\nu)\) is the compound free Poisson law with rate \(\lambda\) and jump distribution \(\nu\), as in \Cref{sec:compound-poisson}. In full generality, the free L\'{e}vy--Khintchine representation characterizes freely infinitely divisible distributions: when \(\nu\) is a L\'{e}vy measure, the corresponding law is freely infinitely divisible, and conversely every freely infinitely divisible law admits such a representation \citep[Theorem 13.8]{NICA-2006}; see also \citet{GEORGES-2005} for random-matrix constructions associated with freely infinitely divisible laws. In this work, we restrict to the finite-atom case \eqref{eq:finite-jump-measure}, which preserves an algebraic Stieltjes transform while still producing a broad family of multi-bulk distributions.

The free L\'{e}vy law also admits a natural random-matrix realization by combining the drift, Gaussian, and jump components; see \citet{BURDA-2002, GEORGES-2005} for related random-matrix realizations of free L\'{e}vy and freely infinitely divisible laws. Let \(\tens{W}\) be an \(n \times n\) Wigner matrix with independent centered entries of variance \(1/n\), up to symmetry, and let \(\underline{\tens{S}}\) be the companion sample covariance matrix in \eqref{eq:comp-S}, whose limiting law is \(\mathrm{CFP}(\lambda,\nu)\). Assuming \(\tens{W}\) and \(\underline{\tens{S}}\) are independent, the additive model
\begin{equation*}
    \tens{A} \coloneqq a \tens{I} + \sigma \tens{W} + \underline{\tens{S}}
\end{equation*}
has an empirical spectral distribution converging to \(\nu_{\mathrm{FL}}\) under standard assumptions.

The \(R\)-transform of \(\nu_{\mathrm{FL}}\) is the sum of the \(R\)-transforms of its free additive components:
\begin{equation*}
    R_{\mathrm{FL}}(w) = R_{\delta_a}(w) + R_{\mathrm{SC}_{\sigma^2}}(w) + R_{\mathrm{CFP}(\lambda,\nu)}(w).
\end{equation*}
Here, \(R_{\delta_a}(w)=a\), \(R_{\mathrm{SC}_{\sigma^2}}(w)=\sigma^2 w\), and, for the finite-atom jump distribution \eqref{eq:finite-jump-measure}, \(R_{\mathrm{CFP}(\lambda,\nu)}\) is given by \eqref{eq:cfp-R}. Hence,
\begin{equation}
    R_{\mathrm{FL}}(w) = a + \sigma^2 w + \lambda \sum_{i=1}^{r} w_i \frac{t_i}{1-t_i w},
    \label{eq:fl-R}
\end{equation}
which is a rational function.

Using the relation between the \(R\)-transform and the Stieltjes transform in \eqref{eq:R-m}, with \(w=-m(z)\), we obtain the self-consistent equation
\begin{equation*}
    -\frac{1}{m(z)} = z - a + \sigma^2 m(z) - \lambda \sum_{i=1}^{r} w_i \frac{t_i}{1+t_i m(z)}.
\end{equation*}
Multiplying by the common denominator \(m \prod_{i=1}^{r} (1+t_i m)\) yields the implicit algebraic constraint \(P(z,m)=0\), where
\begin{equation}
    P(z,m) \coloneqq \left(\sigma^2 m^2 + (z-a)m + 1 \right) \prod_{i=1}^{r} (1+t_i m) - \lambda m \sum_{i=1}^{r} w_i t_i \prod_{j \neq i} (1+t_j m).
    \label{eq:fl-poly}
\end{equation}
For \(\sigma=0\), the polynomial \eqref{eq:fl-poly} has degree \(r+1\) in \(m\) and reduces to the compound free Poisson relation, up to the drift shift \(a\). When \(\sigma>0\), the additional semicircular term increases the degree to \(r+2\).

\begin{example} \label{ex:free-levy}
    For the two-atom case \(r=2\) and \(\sigma>0\), \eqref{eq:fl-poly} becomes a quartic polynomial in \(m\),
    \begin{equation}
        P(z,m) = a_4(z)m^4 + a_3(z)m^3 + a_2(z)m^2 + a_1(z)m + a_0(z),
        \label{eq:fl-P}
    \end{equation}
    with coefficients
    \begin{equation}
        \begin{aligned}
            a_4(z) &= \sigma^2 t_1 t_2, \\
            a_3(z) &= (z-a)t_1 t_2 + \sigma^2(t_1+t_2), \\
            a_2(z) &= (z-a)(t_1+t_2) + (1-\lambda)t_1t_2 + \sigma^2, \\
            a_1(z) &= (z-a) + (t_1+t_2) - \lambda(w_1t_1+w_2t_2), \\
            a_0(z) &= 1.
        \end{aligned}
        \label{eq:fl-poly-coeff}
    \end{equation}
    In particular, for \(\sigma=0\), the leading coefficient \(a_4\) vanishes and \eqref{eq:fl-P} reduces to the cubic compound free Poisson law with drift.
\end{example}

For the numerical example below, we use \(\lambda = 0.1\), \((t_1, t_2) = (2, 5\frac{1}{2})\), \((w_1, w_2) = (0.75, 0.25)\), \(a = 0\), and \(\sigma = 0.4\). This produces a quartic algebraic Stieltjes transform, so in the numerical fit we use \((\deg_m, \deg_z) = (4, 1)\) and impose only the zeroth-moment normalization constraint. \Cref{fig:fl-branches} visualizes the four sheets and their analytic continuations. As in the compound free Poisson example, the individual branches are discontinuous across their cuts, but the appropriate half-plane gluings form continuous analytic continuations on the full spectral curve.

\begin{figure}[t]
    \centering
    \includegraphics[width=\textwidth]{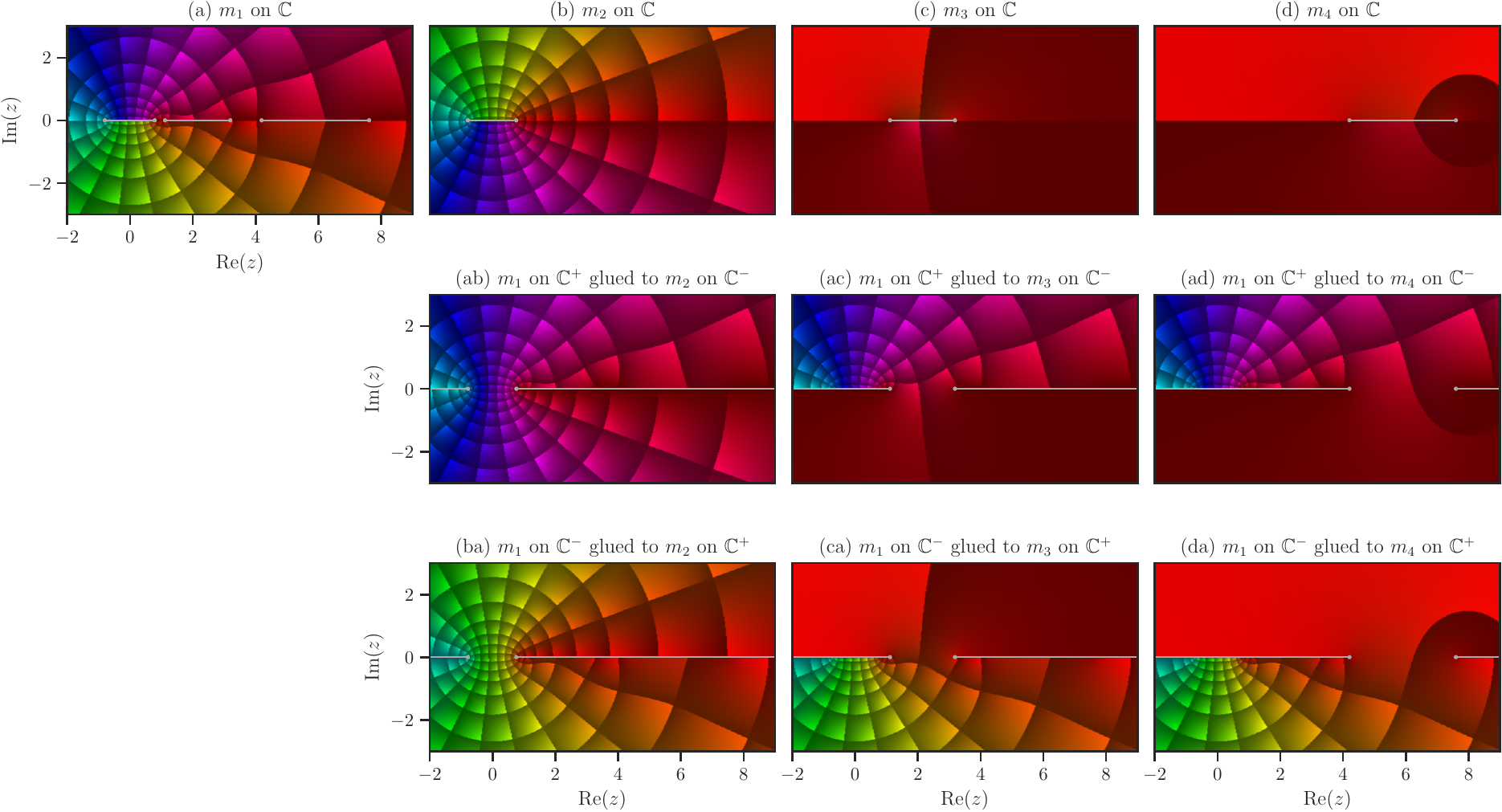}
    \caption{Branches and analytic continuations of the quartic free L\'evy Stieltjes transform. The first row shows the four algebraic branches of \(P(z,m)=0\) on the \(z\)-plane, with branch cuts marked on the real axis. The lower panels show half-plane gluings between the physical sheet and the non-physical sheets; for example, \((ad)\) denotes the upper half-plane of the physical sheet glued to the lower half-plane of the branch shown in panel (d), while \((da)\) denotes the opposite gluing. Across a shared cut, the separate branches are discontinuous, but the glued half-planes form a continuous analytic continuation. Colors encode the complex value of the branch using the domain-coloring convention of \Cref{sec:notation}.}
    \label{fig:fl-branches}
\end{figure}

We next use this law as a controlled free-decompression benchmark. A matrix realization of size \(n=2^4\mathrm{K}\) is generated from the free L\'evy law above, and a \(2^2\mathrm{K}\)-sized principal submatrix is used as the initial compressed matrix. The algebraic curve is fitted from the initial empirical spectrum and then evolved by free decompression to the full size.

\Cref{fig:fl-flow} shows the resulting density evolution. Panel (a) compares the initial empirical histogram with the density recovered from the fitted algebraic curve. Panel (b) shows the free-decompression flow through intermediate sizes, and panel (c) compares the final FD prediction at \(n=2^4\mathrm{K}\) with the empirical spectrum of the full matrix. The initial density has one bulk, while the decompressed density splits into three bulks. The two right bulks arise from the compound free Poisson component, and the smallest rightmost bulk is resolved despite having much smaller mass and height than the dominant component.

\begin{figure}[t]
    \centering
    \includegraphics[width=\textwidth]{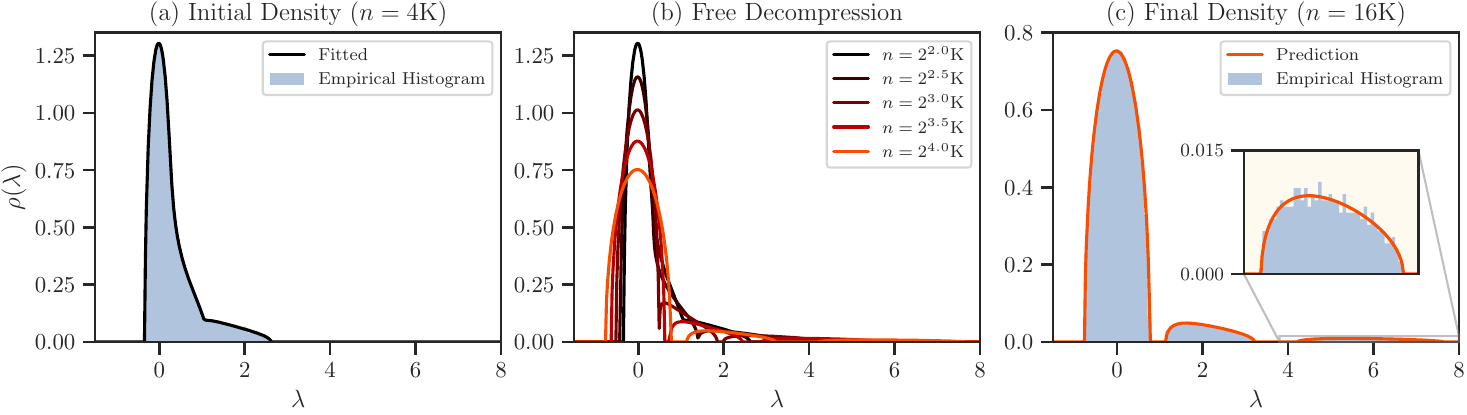}
    \caption{Free decompression of the free L\'evy benchmark. A matrix realization of size \(n=2^4\mathrm{K}\) is generated from the quartic free L\'evy law, and a \(2^2\mathrm{K}\)-sized principal submatrix is used as the initial compressed matrix. Panel (a) shows the initial empirical histogram together with the density recovered from the fitted algebraic curve. Panel (b) shows the FD evolution through intermediate sizes. Panel (c) compares the final FD prediction with the empirical histogram of the full matrix. The initially single-bulk density splits into three bulks, including a small rightmost component that is accurately recovered by the evolved spectral curve.}
    \label{fig:fl-flow}
\end{figure}

The same example also illustrates the direct evolution of spectral edges. Rather than locating edges from the reconstructed density at each time, we evolve the branch-point conditions directly as described in \Cref{sec:edge}. \Cref{fig:fl-edge2} shows the edge trajectories from \(n=2^2\mathrm{K}\) to \(n=2^4\mathrm{K}\). The density field computed by free decompression is shown in the background, while the black curves show the independently evolved spectral edges. Starting from one initial bulk, two cusp events create four additional edges, producing three bulk intervals at the final size. The agreement between the black edge curves and the boundary of the nonzero-density region confirms that the edge evolution and density evolution are consistent.

\begin{figure}[t]
    \centering
    \ifjournal
        \includegraphics[width=0.75\textwidth]{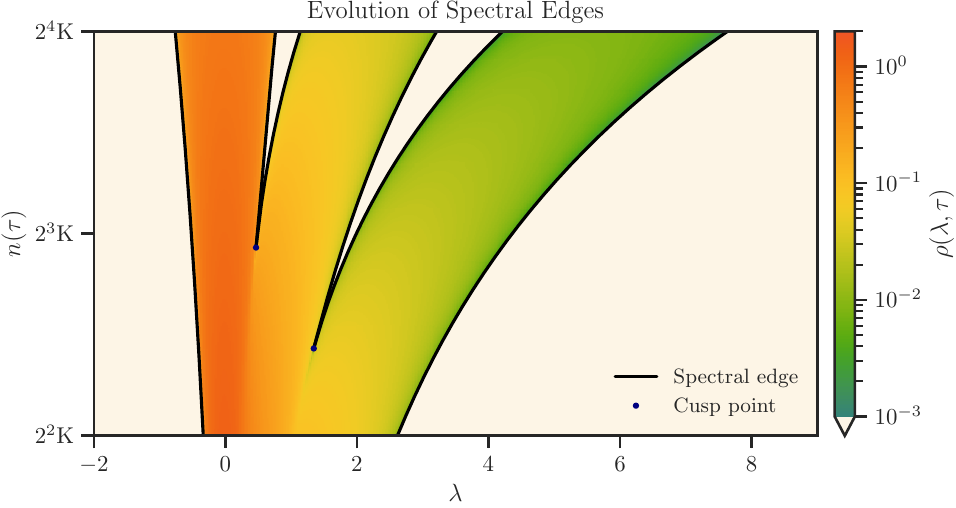}
    \else
        \includegraphics[width=0.65\textwidth]{\figdir/fl-edge}
    \fi
    \caption{Evolution of spectral edges for the free L\'evy benchmark. The vertical axis is the matrix size \(n(\tau)\), and the horizontal axis is the spectral coordinate \(\lambda\). Black curves show spectral edges evolved directly from the branch-point equations in \Cref{sec:edge}, while the background color shows the density \(\rho(\lambda,\tau)\) computed by free decompression on a fine time grid. The initial spectrum has one bulk; two cusp events create new pairs of edges, leading to three bulk intervals at the final size. The evolved edges align with the boundary of the nonzero-density region, showing consistency between direct edge evolution and full density evolution.}
    \label{fig:fl-edge2}
\end{figure}


\subsection{Pennington--Bahri Model} \label{sec:pennington-bahri}

The Pennington--Bahri model \citep{PENNINGTON-2017c} is a random-matrix model for Hessian matrices of single-layer neural networks with ReLU activations under simplifying assumptions. In this model, the network is assumed to be trained to mean-squared error training loss \(n_2\varepsilon\) over \(m\) data points, where \(n_2\) is the number of data labels. Writing \(\tens{J}\) for the network Jacobian, the Hessian decomposes into two additive components
\begin{equation*}
    \tens{H} = \tens{H}_0 + \tens{H}_1,
\end{equation*}
where \(\tens{H}_0 = \tens{J}\tens{J}^{\intercal}/m\) is the Jacobian component and \(\tens{H}_1\) is the network Hessian term. Under the simplifying assumption that the entries of \(\tens{J}\) and \(\tens{H}_1\) are independent Gaussian variables, \(\tens{H}_0\) and \(\tens{H}_1\) have Wishart and Wigner limiting distributions, respectively.

Consequently, the limiting Hessian distribution can be written as an additive free convolution
\begin{equation*}
    \nu_H = \nu_{H_0} \boxplus \nu_{H_1}.
\end{equation*}
Equivalently, its \(R\)-transform is the sum of the \(R\)-transforms of the two components,
\begin{equation*}
    R_H(w) = R_{H_0}(w) + R_{H_1}(w) = \frac{1}{1-\lambda w} + 2\varepsilon w,
\end{equation*}
where \(\lambda\) is the aspect ratio of the Jacobian. This places the Pennington--Bahri law within the finite-atom free L\'{e}vy family of \Cref{sec:levy}: it corresponds to zero drift, one compound-free-Poisson support point, and a nonzero semicircular component. In particular, its Stieltjes transform satisfies a cubic algebraic relation.

To perform decompression of this model, we set \(\varepsilon = 0.1\) and \(\lambda = \frac{3}{4}\), and sampled a \(\num{6000}\times\num{6000}\) matrix from the Pennington--Bahri law. From this matrix, a \(\num{2000} \times \num{2000}\) submatrix was uniformly sampled, and free decompression was applied to recover an estimate of the spectral distribution of the original matrix. \Cref{fig:pb-flow} shows the result and should be compared with Figure~1 of \citet{PENNINGTON-2017c}. This example demonstrates the potential of free decompression as a tool for estimating spectral properties of large Hessian models from smaller submatrices that can be explicitly formed or diagonalized.

\begin{figure}[t]
    \centering
    \includegraphics[width=\textwidth]{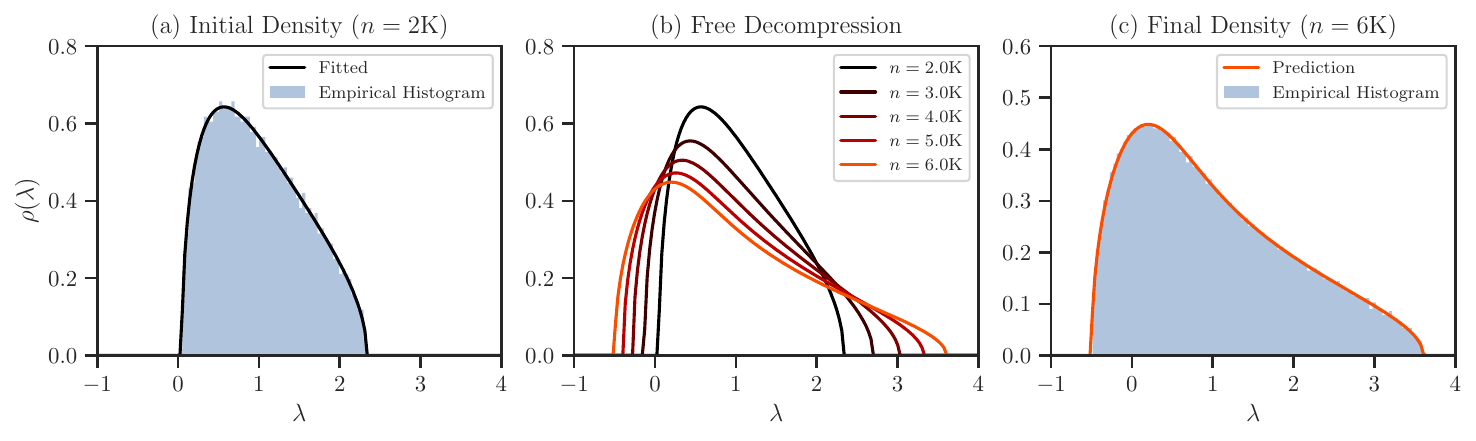}
    \caption{Free decompression for the Pennington--Bahri Hessian model. A matrix \(\tens{A}\in\mathbb{R}^{n\times n}\) with \(n=\num{6000}\) is generated from the Pennington--Bahri law with \(\lambda=\nicefrac{3}{4}\) and \(\varepsilon=\nicefrac{1}{10}\), and a submatrix of size \(n_0=\num{2000}\) is used as the initial compressed matrix. Panel (a) shows the empirical spectrum of the submatrix together with the fitted density. Panel (b) shows the free-decompression flow from \(n_0\) to \(n\). Panel (c) compares the final FD prediction with the empirical spectrum of the full matrix.}
    \label{fig:pb-flow}
\end{figure}


\subsection{Hessian of Two-Layer Network}\label{sec:hessian-example}

Suppose that $\tens{X} \in \mathbb{R}^{n\times p}$ is a matrix of $n$ inputs in $\mathbb{R}^p$. Let $f(\vect{w}, \tens{X}) \in \mathbb{R}^{n \times m}$ denote the network output for inputs $\tens{X}$ with weights $\vect{w} = [w_i]_{i=1}^N$. To investigate the spectral behavior of Hessian matrices for $f$ at initialization, we let $\vect{w}^\ast$ denote a target collection of weights comprised of i.i.d elements generated from a normal distribution $\mathcal{N}(0,\sigma^2)$. The choice of i.i.d elements here is to mimic standard initialization schemes, including the popular He initialization \cite{HE-2016}. Let $\tens{E} \in \mathbb{R}^{n \times m}$ denote a collection of residual errors, where $E_{ij} \sim \mathcal{N}(0,2 \epsilon)$ are also independently generated for some $\epsilon > 0$. The Hessian matrix to be considered is given by $\tens{H} = [H_{ij}]_{i,j=1}^N$ where
\[
    H_{ij} = \frac{1}{2n} \frac{\partial^2}{\partial w_i \partial w_j} \|f(\vect{w}, \tens{X}) - \tens{Y}\|_F^2,
    \qquad \tens{Y} = f(\vect{w}^{\ast}, \tens{X}) + \tens{E}.
\]
Recall that $\tens{H}$ is symmetric and therefore has real eigenvalues. However, $\tens{H}$ is not necessarily positive-definite, and indeed, often possesses negative eigenvalues corresponding to directions of descent. The fraction of negative eigenvalues is called the \emph{normalized index} and is a prominent object of study in random matrix analyses of deep learning models \cite{PENNINGTON-2017c}:
\[
    \mathrm{index} = \frac{\#\{\lambda_i(\tens{H}) < 0\}}{N},\qquad \tens{H} \in \mathbb{R}^{N \times N}.
\]
At scale, when the Hessian becomes so large that it cannot be stored in memory, the index becomes a prime target to be computed using free decompression. 

As a concrete example, we consider a two-layer feedforward neural network with a ReLU activation function where
\[
f(\tens{W}_1, \tens{W}_2, \tens{X}) =  \phi(\tens{X}\tens{W}_1) \tens{W}_2,\qquad \tens{W}_1, \tens{W}_2 \in \mathbb{R}^{p \times p},
\]
where $\phi(x) = \max\{0,x\}$ is the ReLU activation function applied elementwise. Our input data $\tens{X}$ is constructed from the CIFAR-10 dataset \cite{CIFAR10}. Let $\tens{X}_{\mathrm{raw}} \in \mathbb{R}^{60000,1024}$ represent the total batch of 60,000 training images from CIFAR-10 of size $32 \times 32$ rendered in grayscale and flattened. To control the dimensionality, we truncate this matrix to its $n \times p$ upper-left corner, called $\tilde{\tens{X}}$. In our tests, we use $n = 512$ and $p=128$. This matrix is centered by subtracting the feature-wise mean:
\[
\tens{X}_c = \tilde{\tens{X}} - \frac{1}{p}\tilde{\tens{X}} \vect{1} \vect{1}^{\intercal},
\]
where $\vect{1}$ is a $p$-dimensional vector of ones. The empirical covariance matrix of this centered data is given by $\tens{\Sigma} = \frac{1}{p-1}\tens{X}_c \tens{X}_c^{\intercal}$. To ensure the data is isotropic as per \citet{PENNINGTON-2018a}, we apply Zero Component Analysis (ZCA) whitening. Using the eigendecomposition $\tens{\Sigma} = \tens{V} \tens{\Lambda} \tens{V}^{\intercal}$, we construct the ZCA whitening matrix
\[
\tens{Y}_{\mathrm{ZCA}} = \tens{V} \tens{\Lambda}^{-1/2} \tens{V}^{\intercal},
\]
where the inverse square root of the eigenvalue matrix $\tens{\Lambda}$ is bounded below for numerical stability. The final network input is the whitened data $\tens{X} = \tens{Y}_{\mathrm{ZCA}} \tens{X}_c$. Finally, we choose $\sigma^2 = 1/p$, and $\epsilon = 0.1$. 

\Cref{fig:hessian-index} shows the normalized Hessian index estimated by free decompression across matrix sizes, together with the empirical index computed from direct eigendecomposition at the available sizes.

\Cref{tab:hessian-complexity} complements this comparison by reporting runtime and scalar accuracy metrics. Since the FD output is a continuous spectral measure, whereas the empirical spectrum is discrete, we use distributional metrics that compare probability measures directly. In particular, the normalized Wasserstein distance \(W_1/L\) provides a scale-free discrepancy that accounts for the geometry of the spectral axis and does not require an additional KDE or binning convention \cite{GIBBS-2002,PANARETOS-2019}. The Hessian's index error remains small across all sizes, while the direct eigendecomposition time increases rapidly with \(O(n^3)\). In contrast, free decompression only requires the eigendecomposition of the initial submatrix (at size \(n_0 = 2^{12}\)), followed by a comparatively inexpensive evolution step.

\begin{figure}[t]
    \centering
    \includegraphics[width=0.5\textwidth]{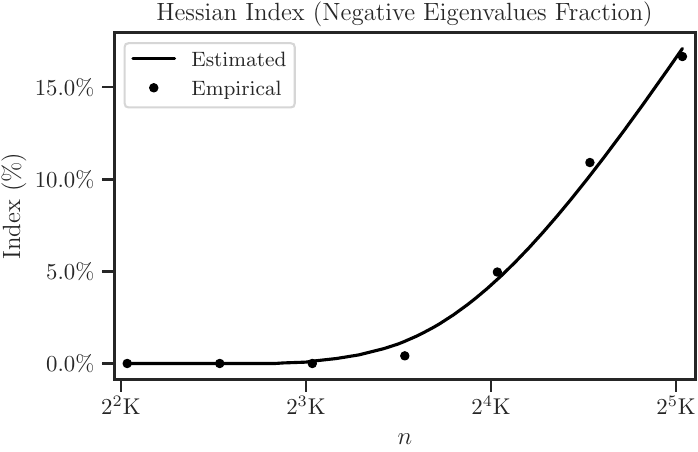}
    \caption{Normalized Hessian index across matrix sizes. The curve shows the index estimated by free decompression, while markers show the empirical index computed directly from empirical eigenvalues at the available sizes.}
    \label{fig:hessian-index}
\end{figure}

For this empirical Hessian example, we fit a polynomial with \((\deg_m, \deg_z) = (8, 4)\) and impose moment constraints up to degree two.

\begin{table}[t!]
\caption{Comparison of direct empirical eigenvalues versus free decompression (FD) for the Hessian example. The Direct column reports the process time for computing the eigenvalues of the Hessian at each size. In the FD column, the first term is the time to compute eigenvalues of the initial \(2^{12}\)-sized submatrix, and the second term is the free-decompression evolution time. The index error is the absolute error in the normalized Hessian index, namely the fraction of negative eigenvalues. KS denotes the Kolmogorov--Smirnov distance between the empirical spectrum and samples drawn from the FD reconstruction. The normalized Wasserstein distance \(W_1/L\) is the Wasserstein-1 distance divided by the empirical spectral width \(L=\lambda_{\max}-\lambda_{\min}\) at each size. The last two columns report relative errors of the first moment \(\mu_1\) and standard deviation \(\sigma\). All accuracy metrics are multiplied by \(100\) and reported as percentages.}
\label{tab:hessian-complexity}
\vspace{1mm}
\centering
\begin{tabular}{lrrrrrrr}
    \toprule
    Size & \multicolumn{2}{c}{Process Time (sec)} & \multicolumn{3}{c}{Accuracy Metrics} & \multicolumn{2}{c}{Moments Rel. Error} \\
    \cmidrule(lr){2-3} \cmidrule(lr){4-6} \cmidrule(lr){7-8}
    \(n\) & Direct & FD (ours) & Index Error & KS & \(W_1/L\) & \(\Delta \mu_1 /\mu_{1}\) & \(\Delta \sigma / \sigma\) \\
    \midrule
    \(2^{12.0}\) & \(     0.9\) & \(0.9 + \phantom{0}0.0\) & \(0.0\%\) & \(0.4\%\) & \(0.0\%\) & \(0.3\%\) & \(0.3\%\) \\
    \(2^{12.5}\) & \(    91.9\) & \(0.9 + \phantom{0}4.7\) & \(0.0\%\) & \(1.2\%\) & \(0.3\%\) & \(3.5\%\) & \(6.1\%\) \\
    \(2^{13.0}\) & \(   271.9\) & \(0.9 + \phantom{0}5.4\) & \(0.1\%\) & \(3.0\%\) & \(0.7\%\) & \(8.5\%\) & \(0.9\%\) \\
    \(2^{13.5}\) & \(   759.4\) & \(0.9 + \phantom{0}7.8\) & \(0.6\%\) & \(2.9\%\) & \(0.7\%\) & \(9.0\%\) & \(0.1\%\) \\
    \(2^{14.0}\) & \(  2154.4\) & \(0.9 +           11.1\) & \(0.8\%\) & \(2.4\%\) & \(0.5\%\) & \(6.8\%\) & \(2.0\%\) \\
    \(2^{14.5}\) & \(  4755.3\) & \(0.9 +           15.7\) & \(1.3\%\) & \(2.1\%\) & \(0.3\%\) & \(2.1\%\) & \(4.9\%\) \\
    \(2^{15.0}\) & \( 14023.5\) & \(0.9 +           22.2\) & \(0.4\%\) & \(2.8\%\) & \(0.3\%\) & \(3.5\%\) & \(13.0\%\) \\
    \bottomrule
\end{tabular}
\end{table}


\subsection{Details of the Diffusion Model Example}
\label{sec:bg-diffusion}

The diffusion model that we consider is based on a random features neural network model, which is by now well studied \citep{LIAO-2021b} and, in the diffusion-model setting, was introduced in \citet{GEORGE-2025}. A key advantage of this model is that, despite its heavy simplification, it still exhibits memorization--generalization phase transitions that are characteristic of full diffusion models with modern architectures \citep{BIROLI-2024, VENTURA-2025}. In this simplified setting, the spectral distribution of the expected activation matrix is tied to the transition between early training, generalization, and memorization behavior \citep{BONNAIRE-2025}. Our goal is to demonstrate that this spectral distribution, even when highly multi-scale and multi-bulk, can be accurately estimated from a sub-sampled model.

To this end, a $\num{64000}\times\num{64000}$ matrix was generated by sampling $m = 800$ i.i.d data-points $\vect{x}^j$, with each $\vect{x}^j \sim \mathcal{N}(\tens{0}, \tens{I}_d)$ for $d = 100$. The random features were generated by sampling  $\num{64000}$ i.i.d random feature weights $\vect{w} \sim \mathcal{N}(\tens{0}, \tens{I}_{d_f})$, and computing the expected diffusion activation matrix
\begin{equation*}
    \tens{U} = \frac{1}{m} \sum_{j = 1}^{m} \mathbb{E} \left[ \sigma \left( \frac{ \tens{W} \vect{x}_t^j}{\sqrt d} \right) \sigma \left( \frac{\tens{W} \vect{x}_t^j}{\sqrt d} \right)^{\intercal} \right].
\end{equation*}
Here, the nonlinearity \(\sigma\) is taken to be the hyperbolic tangent function, and
\begin{equation*}
    \vect{x}_t^j(\xi) = e^{-t}\vect{x}^j + \sqrt{\Delta_t}\,\xi,
    \qquad \xi \sim \mathcal{N}(\tens{0},\tens{I}_d),
\end{equation*}
denotes the Ornstein--Uhlenbeck evolved sample at diffusion time \(t=0.01\). The expectation in \(\tens{U}\) is approximated by Monte Carlo using \(\num{1024}\) samples.

We fit an algebraic spectral curve \(P(z, m) = 0\) to the empirical Stieltjes transform of the \(n_0 = 2^2\mathrm{K}\) submatrix, using degrees \((\deg_m, \deg_z) = (7, 5)\) and moment constraints up to degree four. The fitted curve is then evolved by free decompression from \(n_0=2^2\mathrm{K}\) to \(n=2^6\mathrm{K}\). For the edge diagnostics, we do not infer edges from the reconstructed density alone; instead, we evolve branch-point candidates directly from the spectral curve using \Cref{prop:fd-branch-point}, and identify the corresponding physical spectral edges as described in \Cref{sec:edge}. The evolution detects one cusp, corresponding to the splitting of the leftmost component into a spike-like bulk and a neighboring small-eigenvalue bulk, consistent with the cusp mechanism in \Cref{prop:cusp}.

The spectrum of \(\tens{U}\) has a pronounced multi-bulk structure over several orders of magnitude. In particular, the leftmost component is an extremely narrow, spike-like bulk carrying most of the spectral mass: it contains about \(80\%\) of the mass at \(n=\num{4000}\), increasing to about \(98\%\) at \(n=\num{64000}\), while occupying only a very small interval near the origin. Importantly, in our computation this component is not modeled as an atom; it is treated as an absolutely-continuous bulk and evolved by the same free-decompression procedure as the rest of the spectrum. Nevertheless, its mass follows the atom-law prediction of \Cref{prop:atom} with high accuracy. \Cref{tab:diffusion-atom} reports this comparison, together with the center of mass of the leftmost bulk.

\begin{table}[t!]
\caption{Atom-like behavior of the leftmost bulk in the diffusion-model spectrum. The empirical mass is the mass of the leftmost bulk detected from the histogram, while the atom-law column reports the prediction \(w(n)=1-(1-w_0) \, n_0/n\) from \Cref{prop:atom}, with \(w_0=0.8\) and \(n_0=2^2\mathrm{K}\). The center-of-mass column reports the empirical center of the same bulk, and its error is measured relative to the reference center after the leftmost component has separated from the neighboring bulk.}
\label{tab:diffusion-atom}
\vspace{1mm}
\centering
\begin{tabular}{lrrrrr}
    \toprule
    Size & \multicolumn{3}{c}{Left Bulk Mass} & \multicolumn{2}{c}{Left Bulk Center of Mass} \\
    \cmidrule(lr){2-4} \cmidrule(lr){5-6}
    \(n\) & Empirical & Atom Law (ours) & Error & Empirical & Error \\
    \midrule
    \(2^{2}\)K & \(80.0\%\) & \(80.0\%\) & \(0.0\%\) & \(1.87 \times 10^{-3} \) & \(3.7\%\) \\
    \(2^{3}\)K & \(88.8\%\) & \(90.0\%\) & \(1.3\%\) & \(1.80 \times 10^{-3} \) & \(0.0\%\) \\
    \(2^{4}\)K & \(94.4\%\) & \(95.0\%\) & \(0.6\%\) & \(1.80 \times 10^{-3} \) & \(0.1\%\) \\
    \(2^{5}\)K & \(97.2\%\) & \(97.5\%\) & \(0.3\%\) & \(1.81 \times 10^{-3} \) & \(0.4\%\) \\
    \(2^{6}\)K & \(98.6\%\) & \(98.8\%\) & \(0.2\%\) & \(1.81 \times 10^{-3} \) & \(0.4\%\) \\
    \bottomrule
\end{tabular}
\end{table}

We also compare the locations of the bulk edges detected from empirical histograms with the edges evolved by free decompression; see \Cref{fig:diff-edge}. Since the spectrum spans several orders of magnitude, edge errors are measured on the logarithmic \(\lambda\)-axis. At the initial size \(n=2^2\mathrm{K}\), the two leftmost components have not yet separated, while at larger sizes they split into a spike-like leftmost bulk and a neighboring small-eigenvalue bulk. The only edge excluded from the comparison is the left edge of the spike-like bulk. This edge is governed by finite-size extreme-eigenvalue effects rather than by the macroscopic free-decompression flow: initially it follows the Tracy--Widom-type behavior analyzed in \Cref{sec:finite-correction} and illustrated in \Cref{fig:diff-left-edge}. The remaining edges, which correspond to the macroscopic bulk boundaries, are reported in \Cref{tab:diffusion-edge}.

\begin{table}[t!]
\caption{Errors of the evolved bulk edges for the diffusion-model spectrum. The entries compare empirically detected histogram edges with the corresponding edges evolved by free decompression, measured on the logarithmic \(\lambda\)-axis. At \(n=2^2\mathrm{K}\), the two leftmost components have not yet separated, so the entries under Bulk 2 are omitted; after the split, Bulk 1 denotes the spike-like leftmost component and Bulk 2 denotes the neighboring small-eigenvalue bulk. The left edge \(a_1\) of the spike-like bulk is excluded because it is governed by finite-size extreme-eigenvalue effects rather than by the macroscopic free-decompression flow; see \Cref{sec:finite-correction} and \Cref{fig:diff-left-edge}.}
\label{tab:diffusion-edge}
\vspace{1mm}
\centering
\begin{tabular}{lrrrrrrrr}
    \toprule
    Size & \multicolumn{2}{c}{Bulk 1} & \multicolumn{2}{c}{Bulk 2} & \multicolumn{2}{c}{Bulk 3} & \multicolumn{2}{c}{Bulk 4} \\
    \cmidrule(lr){2-3} \cmidrule(lr){4-5} \cmidrule(lr){6-7} \cmidrule(lr){8-9}
    \(n\) & \(a_1\) & \(b_1\) & \(a_2\) & \(b_2\) & \(a_3\) & \(b_3\) & \(a_4\) & \(b_4\) \\
    \midrule
    \(2^{2}\)K &  \( 0.0\%\) & \(0.0\%\) & \textemdash & \textemdash & \(0.0\%\) & \(0.2\%\) & \(0.0\%\) & \(0.1\%\) \\
    \(2^{3}\)K & \textemdash & \(0.3\%\) & \(0.1\%\)   & \(0.3\%\)   & \(0.2\%\) & \(0.5\%\) & \(0.2\%\) & \(0.1\%\) \\
    \(2^{4}\)K & \textemdash & \(0.1\%\) & \(0.2\%\)   & \(0.2\%\)   & \(0.1\%\) & \(0.2\%\) & \(0.2\%\) & \(0.1\%\) \\
    \(2^{5}\)K & \textemdash & \(0.8\%\) & \(0.7\%\)   & \(0.3\%\)   & \(0.1\%\) & \(0.6\%\) & \(0.1\%\) & \(0.1\%\) \\
    \(2^{6}\)K & \textemdash & \(1.2\%\) & \(0.4\%\)   & \(0.2\%\)   & \(0.0\%\) & \(0.8\%\) & \(0.1\%\) & \(0.0\%\) \\
    \bottomrule
\end{tabular}
\end{table}


\section{Implementation and Reproducibility Guide} \label{app:software}

The algorithms in this paper are implemented through the \texttt{\freealg.AlgebraicForm} interface of the open-source Python package \texttt{\freealg}.\footnote{\texttt{\freealg} is available for installation from PyPI (\FreeAlgPypiUrl), the documentation can be found at \FreeAlgDocUrl, and the source code is available at \FreeAlgGithubUrl.} The package also includes several closed-form free-probability distributions, such as compound free Poisson and free L\'{e}vy laws, which are useful for synthetic experiments and validation. \Cref{list:freealg1} shows a minimal setup for a compound free Poisson example: it constructs the distribution, visualizes its spectral-curve branches, generates a large matrix realization, and then subsamples a smaller principal submatrix. We found that error induced by sampling different submatrices was negligible, being orders of magnitude smaller than the deterministic errors reported in our tables.

The synthetic distribution in \Cref{list:freealg1} is used only to provide a controlled demonstration. In applications, the user may instead start directly from a different input matrix, as in the empirical Hessian and diffusion-model examples in \Cref{sec:hessian-example,sec:bg-diffusion}. Given only the smaller matrix \(A_{\mathrm{small}}\), \Cref{list:freealg2} fits an algebraic relation \(P(z,m)=0\), estimates the corresponding density, and applies free decompression to infer the density and atoms at the larger matrix size \(n=\num{8000}\). The same interface can also evolve densities over an array of matrix sizes and track lower-dimensional spectral features such as edges, cusps, and atom masses; the atom and edge evolution for this example is shown in \Cref{fig:cfp-atom}.

Notebook files that reproduce the figures appearing in this work can be found in the codebase. Further details on function arguments and class parameters are available in the package documentation. All experiments were performed on a consumer grade machine, with an AMD Ryzen 7 5800X processor, 128GB DDR4 RAM, running Fedora Linux.

\begin{lstlisting}[
    style=mystyle,
    language=Python,
    caption={Synthetic compound free Poisson setup using \texttt{\freealg}. The distribution object is used only to generate a controlled matrix example and visualize the associated spectral-curve branches.},
    label={list:freealg1},
    float=ht!]
# Install ;#\freealg#; with ;#"\texttt{pip install \freealg}"#;
import freealg as fa
import numpy as np

# Create an object for the Compound Free Poisson distribution (see ;# \Cref{sec:compound-poisson}#;)
dist = fa.distributions.CompoundFreePoisson(t=[2, 5.5], w=[0.75, 0.25], lam=0.1)

# Visualize all branches and their gluing as in ;# \Cref{fig:cfp-branches} #;
x = np.linspace(-1, 9, 200)
y = np.linspace(-3, 3, 200)
dist.plot_branches(x, y)

# Generate a matrix realization of size ;#$ n=\num{8000} $#; for this distribution
A_large = dist.matrix(size=8000)

# Sample the above matrix to a smaller submatrix of size ;#$ n_0 = \num{1000} $#;
A_small = fa.submatrix(A_large, 1000)
\end{lstlisting}

\ifjournal
    \clearpage
\else
    \clearpage
\fi

\begin{lstlisting}[
    style=mystyle,
    language=Python,
    caption={Free-decompression workflow using \texttt{\freealg.AlgebraicForm}. Starting from the smaller matrix \(A_{\mathrm{small}}\), the code fits an algebraic spectral curve, estimates the density, decompresses to a larger matrix size, and tracks edges and cusps.},
    label={list:freealg2},
    float=ht!]
# Create an algebraic-form object for the small matrix
af = fa.AlgebraicForm(A_small)

# Fit the polynomial ;#$ P $#; with ;#$ \deg_m(P) = 3 $#; and ;#$ \deg_z(P) = 1 $#; (see ;# \Cref{alg:poly}#;)
af.fit(deg_m=3, deg_z=1)

# Estimate the density of the small matrix from ;#$ P $#; using ;# \Cref{alg:stieltjes} #;
x = np.linspace(0, 10, 200)
rho_small = af.density(x, plot=True)

# Free decompression of density and atoms corresponding to a larger matrix of
# size ;#$ n = \num{8000} $#; using ;# \Cref{alg:decompress} #; (see ;# \Cref{fig:cfp-flow} #;). 
rho_large, atoms_large = af.decompress(
    size=8000, x=x, kind='free', return_atoms=True, plot=True)

# Evolve spectral edges from the small to the large matrix size using ;# \Cref{alg:edge} #;
# Also this find cusp points where the bulks split (see ;# \Cref{fig:cfp-atom}#;)
sizes = np.linspace(1000, 8000, 100)
t = np.log(sizes / 1000)
edges, cusps = af.edge(t, kind='free')
\end{lstlisting}

 \ifjournal
     \newpage
     \include{decompression/checklist}
 \fi


\end{appendices}

\end{document}